\def\eqref#1{equation~\ref{#1}}
\def\1{\bm{1}}
\DeclareMathAlphabet{\mathsfit}{\encodingdefault}{\sfdefault}{m}{sl}
\SetMathAlphabet{\mathsfit}{bold}{\encodingdefault}{\sfdefault}{bx}{n}
\DeclareMathOperator*{\argmin}{arg\,min}
\renewcommand{\eqref}[1]{(\ref{#1})}
\newtheorem{theorem}{Theorem}[section]
\newtheorem{proposition}[theorem]{Proposition}
\newtheorem{definition}[theorem]{Definition}
\newtheorem{corollary}[theorem]{Corollary}
\newtheorem{remark}[theorem]{Remark}
\newtheorem{lemma}[theorem]{Lemma}
\def\thickhline{\noalign{\hrule height.8pt}}
\definecolor{akcolor}{rgb}{0.65, 0.15, 0.6}
\definecolor{darkblue}{rgb}{0.1,0.1,0.6}
\title{Linear Spherical Sliced Optimal Transport: A Fast Metric for Comparing Spherical Data}
\author{%
  Xinran Liu$^{1}$, Yikun Bai$^{1}$, Rocío Díaz Martín$^{2}$, Kaiwen Shi$^{1}$, Ashkan Shahbazi$^{1}$, \\ \textbf{Bennett A. Landman}$^{1,3}$, \textbf{Catie Chang}$^{1,3}$, \textbf{Soheil Kolouri}$^{1}$  \\ 
  \textbf{for the Alzheimer's Disease Neuroimaging Initiative}\thanks{
Data used in preparation of this article were partially obtained from the Alzheimer’s Disease Neuroimaging Initiative (ADNI) database (\url{adni.loni.usc.edu}). As such, the investigators within the ADNI contributed to the design and implementation of ADNI and/or provided data but did not participate in analysis or writing of this paper. A complete listing of ADNI investigators can be found at: \url{https://adni.loni.usc.edu/wp-content/uploads/how_to_apply/ADNI_Acknowledgement_List.pdf}.}\\
  \\
  $^{1}$Department of Computer Science, Vanderbilt University, Nashville, TN, 37240\\
  $^{2}$Department of Mathematics, Tufts University, Medford, MA 02155\\
  $^{3}$Department of Electrical and Computer Engineering, Vanderbilt University, Nashville, TN, 37240
}
\begin{document}

\date{}
\maketitle
\begin{abstract}
% Efficient comparison of spherical probability distributions is of great interest in various fields, from computer vision to medicine. Sliced optimal transport-based distances/discrepancies have recently been employed to compare spherical distributions, including the spherical sliced Wasserstein and the stereographic spherical sliced Wasserstein distances. These methods reduce the computational burden of optimal transport by slicing a hypersphere into one-dimensional projections. Concurrently, linear optimal transport has been studied on different manifolds to embed distributions into \( L^2 \) spaces, where the \( L^2 \) distance approximates the optimal transport distance in the original space, thereby significantly reducing the computational complexity of pairwise comparisons among a group of distributions. Building upon recent developments in linear circular optimal transport (LCOT), this paper leverages spherical slicing and proposes the Linear Spherical Sliced Optimal Transport (LSSOT) framework for spherical distributions. Our approach embeds spherical distributions into an \( L^2 \) space while preserving their intrinsic spherical geometry. We prove that this embedding method yields a metric with superior computational efficiency. We also demonstrate the discriminatory power of this metric in applications such as cortical surface registration.
Efficient comparison of spherical probability distributions becomes important in fields such as computer vision, geosciences, and medicine. Sliced optimal transport distances, such as spherical and stereographic spherical sliced Wasserstein distances, have recently been developed to address this need. These methods reduce the computational burden of optimal transport by slicing hyperspheres into one-dimensional projections, i.e., lines or circles. Concurrently, linear optimal transport has been proposed to embed distributions into \( L^2 \) spaces, where the \( L^2 \) distance approximates the optimal transport distance, thereby simplifying comparisons across multiple distributions. In this work, we introduce the Linear Spherical Sliced Optimal Transport (LSSOT) framework, which utilizes slicing to embed spherical distributions into \( L^2 \) spaces while preserving their intrinsic geometry, offering a computationally efficient metric for spherical probability measures. We establish the metricity of LSSOT and demonstrate its superior computational efficiency in applications such as cortical surface registration, 3D point cloud interpolation via gradient flow, and shape embedding. Our results demonstrate the significant computational benefits and high accuracy of LSSOT in these applications.
\end{abstract}

\section{Introduction}

% \textcolor{red}{(Spherical data (spherical and genus zero) and optimal transport on spheres.)}
Spherical signal analysis has received increasing attention in many domains such as computer vision, astrophysics \citep{jarosik2011seven}, climatology \citep{garrett2024validating} and neuroscience \citep{Elad2002, Zhao2019SphericalUO, ZHAO202346}. In practice, certain types of data are intrinsically defined on a sphere, namely omnidirectional images, ambisonics, and earth data. In addition, some forms of data are studied canonically in hyperspheres by spherical mappings \citep{gotsman2003fundamentals, gu2004genus, shen2006spherical, chung2022embedding} from the original data spaces. Among them, a notable category of data is the genus-zero surfaces, which can be conformally mapped to unit spheres according to the Poincare uniformization theorem \citep{Abikoff1981TheUT}. Cortical surface-based analysis, for example, can be facilitated by the inherent spherical topology (0-genus) of the cerebral cortex. Compared to conventional 3D volumetric analysis, applying spherical representations of cortical surfaces offers a more streamlined and accurate geometric framework to examine the brain. This approach proves particularly advantageous in both cross-sectional and longitudinal studies of brain structure and function \citep{9389746,ZHAO202346}.

Analyzing spherical data through machine learning techniques necessitates metrics that can effectively compare signals while maintaining data structure and geometry. In this context, optimal transport (OT) has emerged as a versatile and widely adopted tool in the machine learning community. OT provides a framework for comparing probability distributions in various domains \citep{kolouri2017optimal, peyre2019computational, kolkin2019style, arjovsky2017wasserstein, seguy2017large, makkuva2020optimal, rostami2019deep, cang2023screening, lu2023characterizing}. OT has demonstrated remarkable effectiveness not only in Euclidean spaces, but its theoretical foundations have also been firmly established for various other manifolds \citep{fathi2010optimal, mccann2010ricci}, especially on spheres \citep{cui2019spherical,hamfeldt2022convergence}. However, the calculation of OT distances is prohibitively expensive with a cubic cost $\mathcal{O}(N^3)$. This computational burden poses a substantial challenge to applications where the input size $N$ is large. In light of this, researchers have enthusiastically pursued and developed faster alternatives, including entropic optimal transport \citep{cuturi2013sinkhorn}  and sliced optimal transport \citep{bonnotte2013unidimensional,bonneel2015sliced,kolouri2016sliced,kolouri2018sliced,kolouri2018sliced2,deshpande2018generative,kolouri2019generalized,nadjahi2020statistical,bai2023sliced,liu2024expected}.

% \textcolor{red}{(Faster computation: slicing; group level -- linear optimal transport)}

Sliced Optimal Transport (SOT) accelerates computation by projecting high-dimensional data onto one-dimensional subspaces and applying efficient OT solvers for the resulting one-dimensional probability distributions. In this paper, we are interested in sliced optimal transport methods on hyperspheres. Related work includes the Spherical Sliced Wasserstein (SSW) discrepancy \citep{bonet2022spherical}, which proposes slicing the sphere by the spherical Radon transform and leveraging circular optimal transport; and the Stereographic Spherical Sliced Wasserstein (S3W) distance \citep{tran2024stereographic}, which utilizes stereographic projection to solve the SOT problem in Euclidean space with minimal distortion.
% Sliced optimal transport (SOT) is a variant of OT which enables faster computations by projecting the high dimensional data onto one-dimensional spaces and then utilizing the closed-form solution of OT in 1D. We focus on sliced optimal transport on spheres in this paper. In particular, spherical sliced Wasserstein \citep{bonet2022spherical} propose to slice the sphere by great circles as spherical radon transform, and leverage the closed-form solutions of the Wasserstein distance on the circle. \citet{tran2024stereographic} propose the stereographic spherical sliced Wasserstein distance which utilizes stereographic projection to solve the SOT problem in a Euclidean space with least distortion, thus the computation complexity can be significantly reduced.

% In scenarios involving the analysis of a group of probability measures, many machine learning tasks rely on pairwise comparisons, such as in nearest-neighbor search, classification, and clustering. 

Many machine learning applications, from manifold learning to k-nearest neighbor classification and regression, rely on pairwise distance calculations among members of a set, often requiring distance evaluations on the order of the square of the number of objects.
To accelerate pairwise calculation of OT distances between probability measures, Linear Optimal Transport (LOT) \citep{wang2013linear,kolouri2016continuous,park2018cumulative,bai2023linear} was developed as a framework that embeds distributions into $L^2$ spaces where pairwise distance computations can be performed with significantly higher efficiency. For example, Wasserstein Embedding and Sliced Wasserstein Embedding have been applied in Euclidean spaces to enable efficient image analysis/classification \cite{basu2014detecting,ozolek2014accurate,kolouri2015radon,shifat2021radon}, faster graph and set learning \citep{kolouri2020wasserstein,mialon2021trainable,naderializadeh2021pooling}, supervised task transferability prediction \citep{liu2022wasserstein}, point cloud retrieval \citep{lu2024slosh}, etc. A more recent work, \citet{martin2023lcot} focused on $\mathbb{S}^1$ and established a computationally efficient Linear Circular Optimal Transport (LCOT) framework to embed circular distributions into a $L^2$ space.

% \textcolor{red}{(Introduce LSSOT)}
In this paper, we build on the work of \cite{martin2023lcot} and extend the linearized methodology from circles to hyperspheres. We develop a novel embedding technique for spherical probability distributions, which significantly enhances the efficiency of group analysis in spherical domains. 

\textbf{Contributions.} In summary, our contributions are as follows:
\begin{enumerate}
    \item We propose Linear Spherical Sliced Optimal Transport (LSSOT) to embed spherical distributions into $L^2$ space while preserving their intrinsic spherical geometry. 
    \item We prove that LSSOT defines a metric, and demonstrate the superior computation efficiency over other baseline metrics.
    \item We conduct a comprehensive set of experiments to show the effectiveness and efficiency of LSSOT in diverse applications, from point cloud analysis to cortical surface registration.
\end{enumerate}

%% Clarify our method is faster when comparing among a group of spherical distributions, not in comparing a single pair.

\section{Background}

\subsection{Circular Optimal Transport and Linear Circular Optimal Transport}
\label{subsec:COT}
Consider two circular probability measures $\mu, \nu\in\mathcal{P}(\mathbb{S}^1)$, where $\mathbb{S}^1$ denotes the unit circle in $\mathbb{R}^2$. Let us parametrize $\mathbb{S}^1$ with the angles in between $0$ and $1$ and consider the cost function $c(x, y):=h(|x-y|_{\mathbb{S}^1})$, where $h:\mathbb{R}\rightarrow \mathbb{R}_+$ is a convex increasing function and 
$|x-y|_{\mathbb{S}^1}:=\min\{|x-y|,1-|x-y|\}$ for $x,y\in[0,1)$. The Circular Optimal Transport (COT) problem between $\mu$ and $\nu$ is defined by the following two equivalent minimization problems:
\begin{equation}
    COT_h(\mu,\nu):=\inf_{\gamma\in \Gamma(\mu, \nu)}\int_{\mathbb{S}^1\times\mathbb{S}^1}c(x, y)d\gamma(x, y)= \inf_{\alpha\in\mathbb{R}}\int_0^1 h(|F^{-1}_\mu(x)-F^{-1}_\nu(x-\alpha)|)dx,\label{eq:cot2}
\end{equation}
where in the first expression $\Gamma(\mu, \nu)$ is the set of all couplings between $\mu$ and $\nu$, and in the second expression
% . Or equivalently, 
% \begin{align}
%     COT_h(\mu,\nu) 
%     % &= \inf_{x_0\in[0, 1)}\int_0^1 h(|F^{-1}_{\mu, x_0}(x)-F^{-1}_{\nu, x_0}(x)|_{\mathbb{R}})dx \label{eq:cot1}\\
%     &=
% \end{align}
%where 
$F_\mu$ (respectively, $F_\nu$) is the cumulative distribution function of $\mu$  on $\mathbb{S}^1$
(i.e., $F_{\mu}(y):=\mu([0,y))=\int_0^y d\mu$, $\forall y\in[0,1)$)
extended on $\mathbb{R}$ by $F_\mu(y+n)=F_\mu(y)+n, \forall y\in[0, 1), n\in\mathbb{Z}$, 
% $F_{\mu, x_0}$ is the cumulative distribution function with respect to a reference point $x_0$, defined as $F_{\mu, x_0}(y):=F_\mu(x_0+y)-F_\mu(x_0)$, 
and its inverse (or quantile function) is defined by $F^{-1}_{\mu}(y)=\inf\{x:F_{\mu}(x)>y\}$.
When $h(x)=|x|^p$ for $1\leq p < \infty$, we denote $COT_h(\cdot, \cdot)$ as $COT_p(\cdot, \cdot)$, and $COT_p(\cdot, \cdot)^{1/p}$ defines a metric on $\mathcal{P}(\mathbb{S}^1)$.
Moreover, if $\mu=\text{Unif}(\mathbb{S}^1)$ and $h(x)=|x|^2$, the minimizer $\alpha_{\mu, \nu}$ of \eqref{eq:cot2} is the antipodal of $\mathbb{E}(\nu)$, i.e., $\alpha_{\mu, \nu}=\mathbb{E}(\nu)-1/2$.
We refer the reader to \citet{delon2010fast,rabin2011,bonet2022spherical} for more details.

% \subsection{Linear Circular Optimal Transport Embedding}
Recently, \citet{martin2023lcot} expanded the linear optimal transport framework \citep{Wang2012ALO} to encompass circular probability measures, introducing the Linear Circular Optimal Transport (LCOT) embedding and metric. For the reader’s convenience, we summarize the key definitions from their work below.

The \textbf{LCOT embedding} corresponds to the optimal circular displacement that
comes from the problem of transporting a \textit{reference measure} to a \textit{target measure} with respect to $COT_h(\cdot,\cdot)$. Precisely, 
given a reference measure $\mu\in\mathcal{P}(\mathbb{S}^1)$ 
that is absolutely continuous with respect to the Lebesgue measure on $\mathbb{S}^1$, the LCOT embedding of a target measure $\nu\in\mathcal{P}(\mathbb{S}^1)$ is defined by
\begin{equation}
    \widehat{\nu}^{\mu, h}(x):=F^{-1}_\nu(F_\mu(x)-\alpha_{\mu, \nu})-x, \quad \forall x\in [0, 1),
\end{equation}
where $\alpha_{\mu, \nu}$ is a minimizer of \eqref{eq:cot2}.  When $h(x)=|x|^p$, for $1\leq p<\infty$, this embedding gives rise to the \textbf{LCOT distance}
between circular measures:
\begin{align}
\label{eq:lcot}
        LCOT_{\mu,p}({\nu_1},{\nu_2}) &:=\|\widehat{\nu_1}^{\mu,h}-\widehat{\nu_2}^{\mu,h}\|_{L^p(\mathbb{S}^1,d\mu)}, \qquad \forall \nu_1,\nu_2\in\mathcal{P}(\mathbb{S}^1),
%         \\
% &=
% \left(\int_{0}^1\left(\min_{ k\in\mathbb{Z}}\{|\widehat{\nu_1}^{\mu,h}(t)-\widehat{\nu_2}^{\mu,h}(t)+k|_\mathbb{R}\}\right)^p \,  d\mu(t),\right)^{1/p} \nonumber
    \end{align}
where
\begin{equation}\label{eq:lp in embedding space}
  L^p(\mathbb{S}^1,d\mu):=\{f: \mathbb{S}^1 \to \mathbb{R}: \, \|f\|_{L^p(\mathbb{S}^1,d\mu)}^p:=\int_{\mathbb{S}^1}|f(t)|_{\mathbb{S}^1}^p \, d\mu(t)<\infty\}.  
\end{equation}
If $\mu=Unif(\mathbb{S}^1)$, we use the notation $L^p(\mathbb{S}^1):=L^p(\mathbb{S}^1,d\mu)$. If also $h(x)=|x|^2$, the LCOT embedding becomes
\begin{equation}\label{eq: embedding mu=unif}
    \widehat{\nu}:=F^{-1}_\nu\left(x-\mathbb{E}(\nu)+\frac{1}{2}\right)-x, \qquad \forall x\in [0, 1),
\end{equation}
and we denote the LCOT distance simply as $LCOT_2(\cdot,\cdot)$.

\subsection{Spherical Slicing}
\citet{bonet2022spherical} recently presented the Spherical Sliced Wasserstein (SSW) discrepancy for probability measures supported on the sphere $\mathbb{S}^{d-1}\subset \mathbb{R}^d$, that uses projections onto great circles $C$:
\begin{equation}\label{e1: proj_great_circle_original}    
    P^C(x):=\argmin_{y\in C} \{\arccos(\langle x, y\rangle)\}, \qquad \forall x\in \mathbb{S}^{d-1},
\end{equation}
where $\arccos(\langle \cdot, \cdot\rangle)$ is the geodesic distance on $\mathbb{S}^{d-1}$. Since each great circle is obtained by intersecting the sphere with a 2-dimensional plane in $\mathbb{R}^d$, the authors proposed to average over all great circles by integrating over all planes in $\mathbb{R}^d$ (i.e., the Grassmann manifold 
 %$Gr(2,\mathbb{R}^{d})$
 \citep{bendokat2020grassmann}) and then project the distributions onto the intersection with $\mathbb{S}^{d-1}$. However, due to the impracticality of the Grassmann manifold, the set of rank 2 projectors is used instead:
 % \begin{equation*}
 %     Gr(2,\mathbb{R})\simeq\{U^TU: \, U\in {V}_{2}(\mathbb{R}^d)\}
 % \end{equation*}
 % where
 \begin{equation*}
     {V}_{2}(\mathbb{R}^d):=\{U\in\mathbb{R}^{d\times 2}: \, U^TU=I_2\},
 \end{equation*}
 where ${V}_{2}(\mathbb{R}^d)$ is the Stiefel manifold \citep{bendokat2020grassmann}.
 Hence, the integration process can be performed over $V_2(\mathbb{R}^d)$ according to the uniform distribution $\sigma\in \mathcal{P}(V_2(\mathbb{R}^d))$. The projection onto $U\in V_2(\mathbb{R}^d)$ can be computed as in \citep[Lemma 1]{bonet2022spherical} by
 \begin{equation}
 \label{eq:projection}
    P^U(x)=\frac{U^Tx}{\|U^Tx\|_2}, \qquad \forall x\in \mathbb{S}^{d-1}\setminus\{x: U^Tx=0\}
\end{equation}
Finally, given $\mu, \nu\in \mathcal{P}(\mathbb{S}^{d-1})$, the Spherical Sliced Wasserstein discrepancy between them, for $1\leq p<\infty$, is defined in \citep{bonet2022spherical} by
\begin{equation}\label{eq: SSW}
    (SSW_{p}(\mu,\nu))^p:=\int_{V_2(\mathbb{R}^d)}\left(COT_p(P_{\#}^U\mu,P_{\#}^U\nu)\right)^p \, d\sigma(U).
\end{equation}

% In the aforementioned article, the authors establish a relationship between this definition and a formulation using a Radon transform.
% Besides, notice that if $\mu=Unif(\mathbb{S}^{d-1})$, then $P_{\#}^U\mu=Unif(\mathbb{S}^1)$ for every $U\in V_2(\mathbb{R}^d)$.

\section{Method}
In this section, we first introduce the Linear Spherical Sliced Optimal Transport (LSSOT) embedding and discrepancy, and then state the main theorem ---- LSSOT indeed defines a metric in $\mathcal{P}(\mathbb{S}^{d-1})$.

\begin{figure}%$[12]{r}{0.5\textwidth}        
  \begin{center}
  % \vspace{-0.6in}
    \includegraphics[width=0.5\textwidth]{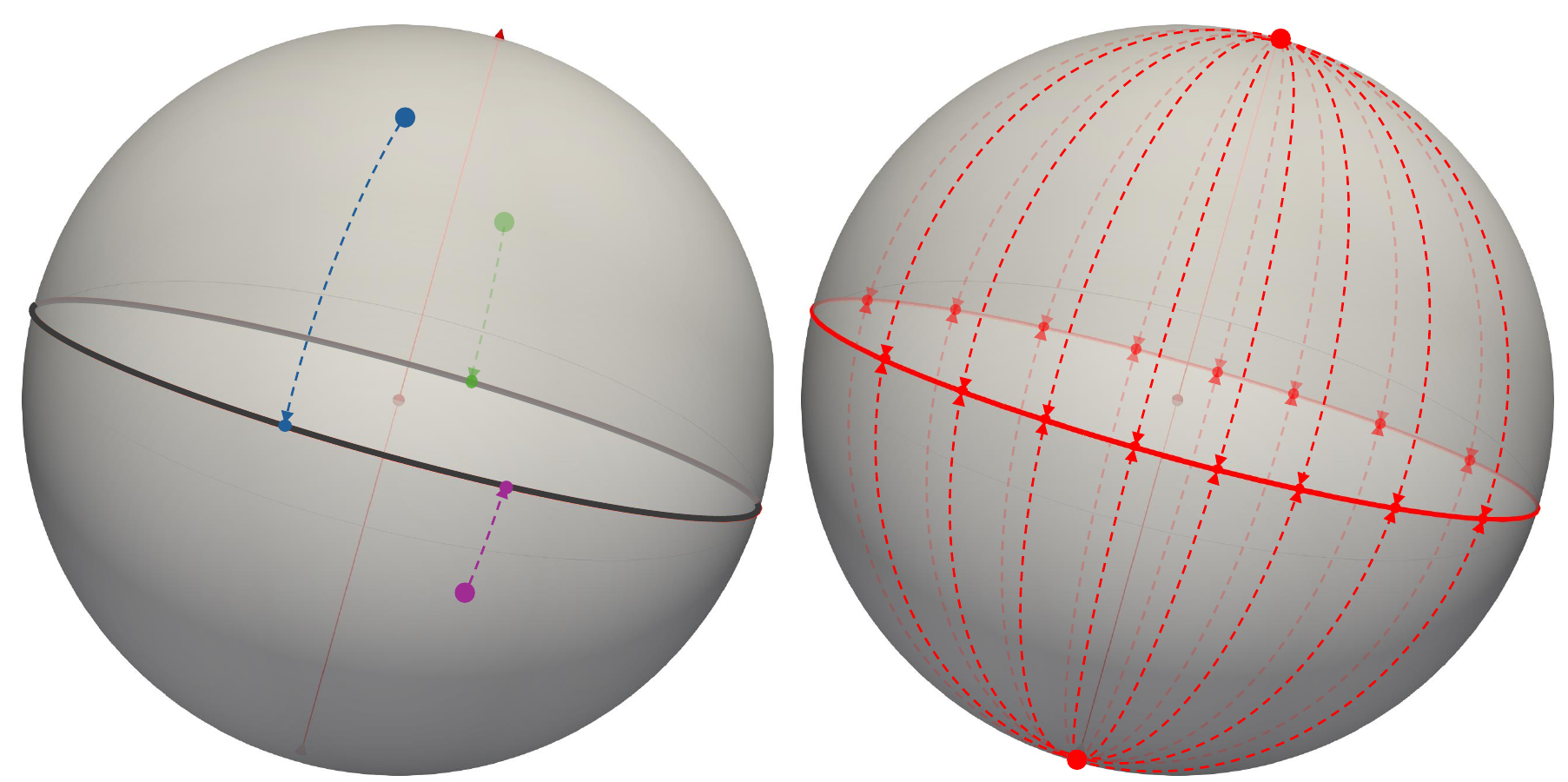}
  \end{center}
   \vspace{-0.1in}
  \caption{Semicircle Transform projects non-polar points onto a great circle, and the north/south poles to everywhere on the circle.}
  \label{fig:LSSOT_projection}
  \vspace{-.1in}
\end{figure}

\subsection{Linear Spherical Sliced Optimal Transport (LSSOT)}

% \textcolor{red}{Semicircle Transform}

% Combining this slicing method and LCOT \citet{martin2023lcot}, we define a new embedding for distributions on $\mathbb{S}^{d-1}\subset \mathbb{R}^d$ for $d>2$.
%Consider $\mu=Unif(\mathbb{S}^{d-1})$ as the reference measure.  Notice that $P_{\#}^U\mu=Unif(\mathbb{S}^1)$ for every $U\in V_2(\mathbb{R}^d)$.

\begin{definition}
    Consider spherical measures $\nu_1,\nu_2\in\mathcal{P}(\mathbb{S}^{d-1})$. We define the \textbf{Linear Spherical Sliced  Optimal Transport (LSSOT) embedding} of $\nu_i$,  $i=1,2$, with respect to a slice $U\in V_2(\mathbb{R}^d)$ by
\begin{equation}
    \widehat{\nu_i}^S(x,U):=F_{P_{\#}^U\nu_i}^{-1}\left(P^U(x)-\mathbb{E}(P_{\#}^U\nu_i)+\frac{1}{2}\right), \quad x\in\mathbb{S}^{d-1}, \, U\in V_2(\mathbb{R}^d), 
\end{equation}
where $P^U_\#\nu_i$ denotes the \text{pushforward} measure on $\mathbb{S}^1$
of $\nu_i$ by the map $P^U$.
We also define their \textbf{LSSOT discrepancy} by
\begin{align}\label{eq: LSSOT}
 \left(LSSOT_2(\nu_1,\nu_2)\right)^2&:=\int_{V_2(\mathbb{R}^d)}\left(LCOT_2(P_{\#}^U\nu_1,P_{\#}^U\nu_2)\right)^2 \, d\sigma(U)\\
 &=\int_{V_2(\mathbb{R}^d)}\|\widehat{\nu_1}^{S}(\cdot,U)-\widehat{\nu_2}^{S}(\cdot,U)\|_{L^2(\mathbb{S}^1)}^2 \, d\sigma(U),\notag
\end{align}
\vspace{.2in}
\end{definition}

\begin{remark}[LSSOT extends LCOT]
   %Since the dimension of ${V}_{2}(\mathbb{R}^d)$ is $2d-3$, 
   When $d=2$, $P^U=I_2$ for every $U\in V_2(\mathbb{R}^2)$
   since $V_2(\mathbb{R}^2)$ coincides with the set of orthonormal basis on $\mathbb R^2$. %, (moreover, it is isomorphic to the orthogonal group $O(2)$ which has dimension 1). 
   Thus, for any $\nu\in \mathcal{P}(\mathbb{ S}^1)$ we have that $\widehat{\nu}^S(x,U)=\widehat{\nu}^S(x,V)$ for all $U, V\in V_2(\mathbb{R}^2)$, so the LSSOT embedding and the LSSOT discrepancy coincide with the LCOT embedding and with the LCOT distance respectively, as the integration is normalized to 1 on $V_2(\mathbb{R}^2)$ (which is isomorphic to the orthogonal group $O(2)$ of dimension 1). 
\end{remark}

\begin{theorem}[Metric property of LSSOT]\label{thm: metric LSSOT}
    $LSSOT_2(\cdot,\cdot)$ defines a pseudo-metric in $\mathcal{P}(\mathbb{S}^{d-1})$, and a metric when restricting to probability measures with continuous density functions. We will refer to it as $LSSOT$ distance.
\end{theorem}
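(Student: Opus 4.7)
The plan is to verify the pseudo-metric axioms slicewise, using that each $LCOT_2(P^U_\#\nu_1,P^U_\#\nu_2)$ is itself a metric in the $\mathbb{S}^1$ world, and then to reduce the upgrade to a true metric to the injectivity of the great-circle projection transform on continuous densities.

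For the pseudo-metric part, non-negativity, symmetry, and vanishing on the diagonal are immediate from \eqref{eq: LSSOT} together with the corresponding properties of $LCOT_2$ from \citet{martin2023lcot}. For the triangle inequality I would set $f_{ij}(U):=LCOT_2(P^U_\#\nu_i,P^U_\#\nu_j)$; since $LCOT_2$ is a metric on $\mathcal{P}(\mathbb{S}^1)$, the slicewise bound $f_{13}(U)\leq f_{12}(U)+f_{23}(U)$ holds for every $U$, and Minkowski's inequality in $L^2(V_2(\mathbb{R}^d),d\sigma)$ lifts this to $LSSOT_2(\nu_1,\nu_3)\leq LSSOT_2(\nu_1,\nu_2)+LSSOT_2(\nu_2,\nu_3)$. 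This establishes the pseudo-metric claim on $\mathcal{P}(\mathbb{S}^{d-1})$.

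For the metric claim, I would suppose $LSSOT_2(\nu_1,\nu_2)=0$. Vanishing of the integral forces $LCOT_2(P^U_\#\nu_1,P^U_\#\nu_2)=0$ for $\sigma$-a.e.\ $U$, and since $LCOT_2$ is a metric on $\mathcal{P}(\mathbb{S}^1)$, this gives $P^U_\#\nu_1=P^U_\#\nu_2$ for $\sigma$-a.e.\ $U$. If $\nu_1,\nu_2$ have continuous densities, the projection map \eqref{eq:projection} is jointly continuous off the $\nu_i$-null set $\{U^Tx=0\}$, so $U\mapsto P^U_\#\nu_i$ is weakly continuous; the almost-everywhere equality therefore promotes to equality for every $U\in V_2(\mathbb{R}^d)$.

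The hardest part is the final geometric step: deducing $\nu_1=\nu_2$ from coincidence of every great-circle pushforward (Figure~\ref{fig:LSSOT_projection}). This is precisely the injectivity of the spherical Radon-type transform that underlies the metricity of $SSW_p$ in \citet{bonet2022spherical}, and I would invoke (or directly adapt) the argument from that work --- which proceeds via a Funk--Radon / spherical-harmonic style analysis showing that a continuous density on $\mathbb{S}^{d-1}$ is uniquely determined by its family of semicircle projections. Combined with the previous paragraph, this yields $\nu_1=\nu_2$ and completes the proof.
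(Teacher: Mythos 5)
Your pseudo-metric argument is correct and is essentially the paper's own (Proposition \ref{thm: pseudo metric}): slicewise use of the fact that $LCOT_2$ is a metric on $\mathcal{P}(\mathbb{S}^1)$, then Minkowski's inequality in $L^2(V_2(\mathbb{R}^d),d\sigma)$. Your intermediate step is also fine, though it differs mildly from the paper: you upgrade $P^U_\#\nu_1=P^U_\#\nu_2$ from $\sigma$-a.e.\ $U$ to all $U$ via weak continuity of $U\mapsto P^U_\#\nu_i$ for absolutely continuous $\nu_i$, whereas the paper instead uses the disintegration theorem to conclude $\mathcal{R}\nu_1=\mathcal{R}\nu_2$ as measures on $V_2(\mathbb{R}^d)\times\mathbb{S}^1$ (Appendix \ref{app: Radon for measures} and \eqref{eq: LSSOT in terms of Radon}); either route is acceptable.

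The genuine gap is at the crux: deducing $\nu_1=\nu_2$ from the coincidence of all semicircle pushforwards. You propose to invoke ``the injectivity of the spherical Radon-type transform that underlies the metricity of $SSW_p$ in \citet{bonet2022spherical}'', but no such result exists there: that work only proves $SSW_p$ is a \emph{pseudo}-metric, relates $\mathcal{R}$ slicewise to the hemispherical transform of \citet{rubin1999inversion}, and explicitly leaves open whether $\mathrm{Ker}(\mathcal{R})$ is trivial (see Remark \ref{remark: our difference with Bonet et al}). Supplying this injectivity is precisely the new content of the present paper: after the parametrization work of Appendices \ref{app: Radon S2}--\ref{app: Radon general d}, the transform $\mathcal{R}$ is identified with the $(d-2)$-Hemispherical (semicircle) transform of \citet{groemer1998spherical} (Proposition \ref{pro:R(f)-H(f)}), and Groemer's Theorem 1 then yields injectivity on continuous functions (Corollary \ref{eq: 1-1 for cont}); the metricity of $SSW_p$ on continuous densities is a \emph{byproduct} of this (Theorem \ref{thm: SSW metric}), not an available input. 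Note also that a generic Funk--Radon/spherical-harmonic argument does not apply off the shelf: the classical Funk transform integrates over full great circles and annihilates odd functions, so injectivity here hinges on the half-circle (hemispherical) structure of the level sets $\{x:P^U(x)=z\}$, which is exactly what the paper's Lemmas \ref{lem:trans_param_1} and \ref{lem:trans_param_2} establish. As written, your proof outsources its hardest step to a reference that does not contain it; to close the gap you must either prove the Groemer-type injectivity or cite \citet{groemer1998spherical} directly rather than \citet{bonet2022spherical}.
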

We refer the reader to the Appendix for detailed proof. The symmetry of $LSSOT_2(\cdot,\cdot)$ is easy to deduce from its definition. The triangle inequality holds as an application of Minkowski inequality by using the fact that $LCOT_2(\cdot,\cdot)$ is a metric (see Proposition \ref{thm: pseudo metric} in Appendix \ref{app: metric property}). For proving the identity of indiscernibles, the idea is to give an alternative expression of the $LSSOT_2(\cdot,\cdot)$ involving a Spherical Radon Transform $\mathcal{R}$: see \eqref{eq: LSSOT in terms of Radon}  in Appendix \ref{sec: LSSOT in terms of Radon}, which is inspired by \cite{bonet2022spherical}. Such formula reads for probability density functions $f_1,f_2\in L^1(\mathbb{S}^{d-1})$\footnote{(which is equivalent to considering two absolutely continuous probability measures on $\mathbb S^{d-1}$)} as
\begin{equation}\label{eq: LSSOT with R for functions}
    \left(LSSOT_2(\nu_1,\nu_2)\right)^2=\int_{V_2(\mathbb{R}^d)}\left(LCOT_2(\mathcal{R}f_1(U,\cdot),\mathcal{R}f_2(U,\cdot))\right)^2 \, d\sigma(U)
\end{equation}
where $\mathcal{R}:L^1(\mathbb{S}^{d-1})\to L^1(V_2(\mathbb R^d)\times \mathbb{S}^1)$ is the integral operator 
$$\mathcal{R}f(U,z)=\int_{\{x\in \mathbb S^{d-1}: \, z=P^U(x)\}} f(x)dx,$$
where the surface/line integral is taken over a $(d-2)$-dimensional surface, which coincides with a (1-dimensional) \textit{great semicircle} when $d=3$. We will call this transformation as \textit{Semicircle Transform} when $d=3$, and \textit{$(d-2)$-Hemispherical Transform on $\mathbb{S}^{d-1}$} in general. We describe $\mathcal{R}$ acting on $L^1(\mathbb{S}^2)$ by using different and detailed parameterizations in Appendix \ref{app: Radon S2} and generalize it for any dimension $d$ in Appendix \ref{app: Radon general d}.  In Appendix \ref{app: Radon for measures}, $\mathcal{R}$ is extended to the set of measures on the sphere, and the disintegration theorem from measure theory is used to obtain the general version of \eqref{eq: LSSOT with R for functions}. 

The \textbf{novelty} is that we will relate $\mathcal{R}$ with an integral transformation presented in \cite{groemer1998spherical}, instead of relating it with the Hemispherical Transform in \cite{rubin1999inversion} as done in \cite{bonet2022spherical} (see Remark \ref{remark: our difference with Bonet et al} for a discussion in this regard). By doing so, we can prove the injectivity of $\mathcal{R}$ on continuous functions ( Corollary \ref{eq: 1-1 for cont}), which is the key property to show that if $LSSOT_2(\nu_1,\nu_2)=0$ then $\nu_1=\nu_2$. As a byproduct, we also obtain that $SSW_p(\cdot,\cdot)$ is not only a pseudo-metric as proven in \cite{bonet2022spherical}, but also a metric when it is restricted to probability measures with continuous density functions (Theorem \ref{thm: SSW metric} in Appendix \ref{app: metric property}).       

\begin{figure}[t]
\vspace{-0.2in}
\begin{algorithm}[H]
    \caption{Linear Spherical Sliced Optimal Transport (LSSOT)}\label{alg:lssot}
    \begin{algorithmic}
        \REQUIRE Spherical distributions $\mu=\sum_{i=1}^{N_\mu} a_i \delta_{x_i}$ and $\nu=\sum_{j=1}^{N_\nu} b_j\delta_{y_j}$; number of slices $L$; \\
        reference size $M$, threshold $\epsilon$
        \FOR{$l=1$ to $L$}
        \STATE Construct a matrix $Z_l\in \mathbb{R}^{d\times 2}$ with entries randomly drawn from $\mathcal{N}(0,1)$
        \STATE Apply QR decomposition on $Z_l$ to get an orthogonal $Q$ matrix $U_l$
        \STATE Project $\{x_i\}_{i=1}^{N_\mu}$ and $\{y_j\}_{j=1}^{N_\nu}$ on $\mathbb{R}^2$ to get $\hat{x}^l_i=U_l^Tx_i$ and $\hat{y}^l_j=U_l^Ty_j$, $\forall i, j$
        \STATE Find $I^*_l=\{i^*\}, J^*_l=\{j^*\}$ such that $\|\hat{x}^l_{i^*}\|\leq \epsilon$ or $\|\hat{y}^l_{j^*}\|\leq\epsilon$ 

        \hspace*{31em}%
        \rlap{\smash{$\left.\begin{array}{@{}c@{}}\\{}\\{}\\{}\\{}\end{array}\right\}%
        \begin{tabular}{l} Apply $\epsilon$-cap \\around poles \end{tabular}$}}
        
        \STATE $a^*_l=\sum_{i^*\in I^*} a_{i^*}$, $b^*_l=\sum_{j^*\in J^*} b_{j^*}$; $\tilde{N}_\mu=N_\mu-|I^*_l|$, $\tilde{N}_\nu=N_\nu-|J^*_l|$
        \STATE $\tilde{\mu}_l=\sum_{\tilde{i}=1}^{\tilde{N}_\mu}(a_{\tilde{i}}+\frac{a^*}{\tilde{N}_\mu})\delta_{\hat{x}^l_{\tilde{i}}}$, $\tilde{\nu}_l=\sum_{\tilde{j}=1}^{\tilde{N}_\nu}(b_{\tilde{j}}+\frac{b^*}{\tilde{N}_\nu})\delta_{\hat{y}^l_{\tilde{j}}}$
        \STATE Project all $\hat{x}_{\tilde{i}}^l$ and $\hat{y}_{\tilde{j}}^l$ on $\mathbb{S}^1$: $P(\hat{x}_{\tilde{i}}^l) = \frac{\hat{x}_{\tilde{i}}^l}{\|\hat{x}_{\tilde{i}}^l\|}$, $P(\hat{y}_{\tilde{j}}^l) = \frac{\hat{y}_{\tilde{j}}^l}{\|\hat{y}_{\tilde{j}}^l\|}$
        \STATE Calculate $LCOT_{\bar{\mu}, 2}(\tilde{\mu}_l^{proj}, \tilde{\mu}_l^{proj})$ by \eqref{eq:lcot}, where $\tilde{\mu}_l^{proj}=\sum_{\tilde{i}=1}^{\tilde{N}_\mu}(a_{\tilde{i}}+\frac{a^*}{\tilde{N}_\mu})\delta_{P(\hat{x}^l_{\tilde{i}})}$, $\tilde{\nu}_l^{proj}=\sum_{\tilde{j}=1}^{\tilde{N}_\nu}(b_{\tilde{j}}+\frac{b^*}{\tilde{N}_\nu})\delta_{P(\hat{y}^l_{\tilde{j}})}$, $\bar{\mu}$ is the discrete uniform reference measure of size $M$
        \ENDFOR
        \RETURN{} $LSSOT_2 (\mu, \nu) \approx \left(\frac{1}{L}\sum_{l=1}^L LCOT_2^2(\tilde{\mu}_l^{proj}, \tilde{\mu}_l^{proj})\right)^{\frac{1}{2}}$
    \end{algorithmic}
\end{algorithm}
\vspace{-0.4in}
\end{figure}

% \begin{figure}
%     \centering
%     \includegraphics[width=0.6\textwidth]{Figures/LSSOT_projection.pdf}
%     \caption{Projection on each slice. On each great circle slice, non-polar points are mapped to the closest location on the circle. The North and South Poles are mapped to everywhere on the circle.}
%     \label{fig:enter-label}
% \end{figure}

\subsection{LSSOT Implementation}
Following the Monte-Carlo approach for computing the classical Sliced Wasserstein distance, we consider samples from the uniform distribution 
$\sigma\in\mathcal{P}(V_2(\mathbb{R}^d))$ for generating the slices. To do so, a standard procedure is to first construct matrices in $\mathbb{R}^{d\times 2}$ whose components are drawn from the standard normal distribution $\mathcal{N}(0,1)$, then use the Gram-Schmidt algorithm to orthonormalize the column vectors of the matrix, or apply the QR-decomposition and finally normalize the orthogonal matrix $Q$ (dividing each column vector of $Q$ by its Euclidean norm). In contrast to the Spherical Sliced Wasserstein (SSW) numerical implementation in which projections of poles are not well defined (when $U^Tx=0$ in \ref{eq:projection}), LSSOT maps the polar points everywhere on the equator by definition. To achieve this mapping numerically, we put an $\epsilon$-cap around the poles, and redistribute all the mass inside the $\epsilon$-cap evenly onto the discrete points outside the cap. See Figure \ref{fig:LSSOT_projection} for a visual depiction of the projection and Algorithm \ref{alg:lssot} for detailed implementation steps.
\vspace{-0.1in}
\subsection{Time Complexity and Runtime Analysis}
\vspace{-0.1in}
Let $L$ denote the number of slices and $d$ denote the dimension. For a discrete spherical distribution $\nu$ with $N$ samples, the slicing and projection require $\mathcal{O}(LdN)$. For a single slice, the time complexity of calculating the LCOT embedding (Equation \ref{eq: embedding mu=unif}) of the corresponding circular distributions with respect to the uniform distribution is $\mathcal{O}(N)$ after sorting. Therefore the time complexity of calculating LSSOT embedding of $\nu$ is $\mathcal{O}(LN(d+\log N+1))$, where $LN\log N$ comes from the sorting $\mathbb{S}^1$. 

For a set of $K$ distributions $\{\nu_k\}_{k=1}^K$, each with $N$ samples, the pairwise LSSOT distances can be boiled down to $K$ LSSOT embeddings and $\frac{LK(K-1)}{2}$ distances in $L^2(\mathbb{S}^1)$, thus cost $\mathcal{O}(KLN(d+\log N+1+\frac{K-1}{2}))$. However, other methods such as spherical Optimal Transport (OT), Sinkhorn divergence, Sliced Wasserstein Distance (SWD), SSW, and Stereographic Spherical Sliced Wasserstein distance (S3W) all require the time complexity $\mathcal{O}(K^2D)$, where $D$ is the complexity of each distance, e.g. $\mathcal{O}(K^2N^3\log N)$ for spherical OT and $\mathcal{O}(K^2LN\log N)$ for SSW, compared to $\mathcal{O}(K^2LN)$ for our method. Figure \ref{fig:computation_cpu} shows the pairwise distances computation time of LSSOT along with the methods mentioned above. All methods are run on the CPU for fair comparison. The distributions are randomly generated with sample sizes $N=1000, 5000, 10000, 12500, 15000$. For each fixed sample size, we observe that LSSOT stands out as the fastest in all methods under the increase in the number of distributions. In the Appendix \ref{app: runtime-slices}, we also provide the runtime comparisons for slice-based methods with varying numbers of slices.
\label{subsec: runtime}
\begin{figure}[H]
    \vspace{-0.1in}
    \centering
    \includegraphics[width=\linewidth]{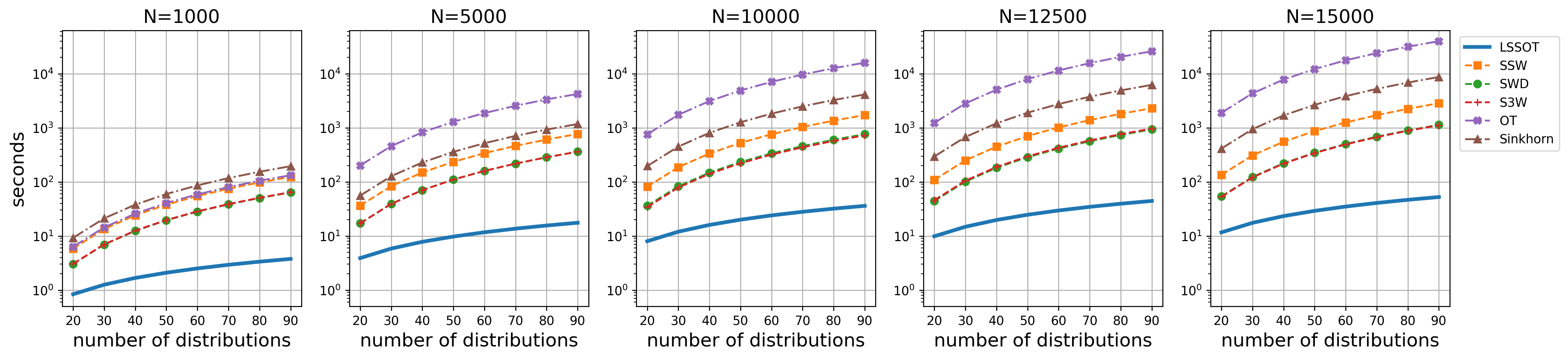}
     \vspace{-0.2in}
    \caption{Pairwise distances runtime (log scale) comparison w.r.t the number of distributions. $N$ denotes sample sizes in each distribution. The number of slices is 500 for all slice-based methods.}
    \label{fig:computation_cpu}
\end{figure}

\section{Experiments}
To demonstrate that the proposed LSSOT is an effective and efficient metric of spherical distributions while preserving the geometry, we design and implement the following experiments on toy examples, cortical surfaces, and point clouds.
\subsection{Rotation and Scaling in Spherical Geometry}
We test the LSSOT metric on spherical geometric transformations such as rotation and scaling.

For rotations, we generate a set of 20 Von Mises-Fisher (VMF) distributions $\{p^i(x;\mu_i, \kappa) = \frac{\kappa}{2\pi(e^\kappa-e^{-\kappa})}\exp(\kappa\mu_i^T x)\}_{i=1}^{20}$ with the same concentration parameter $\kappa$, but the mean directions $\mu_i$'s are 20 distinct points on the Fibonacci Sphere, i.e. spread evenly on $\mathbb{S}^2$. Since the 20 VMFs are of the same shape with varying mean directions, each can be interpreted as a rotated version of another. We calculate the pairwise LSSOT distances among the 20 distributions and visualize the embeddings in 3-dimensional space using multi-dimensional scaling (MDS). We use the Scikit Learn \citep{scikit-learn} sklearn.manifold.MDS function for MDS implementation. As shown in Figure \ref{fig:vmf} (left), the embeddings spread almost evenly on the sphere, implying the LSSOT captures the rotation geometry.

To study the scaling geometry, we generate 6 VMF distributions with fixed $\mu$ but varying $\kappa$, and then apply the same LSSOT pairwise distances calculation followed by the MDS embedding procedure. Figure \ref{fig:vmf} (right) illustrates that the LSSOT metric preserves the linear transitions in scaling geometry, with embeddings falling approximately onto a straight line. 
\begin{figure}[H]
    \centering
    \includegraphics[width=\textwidth]{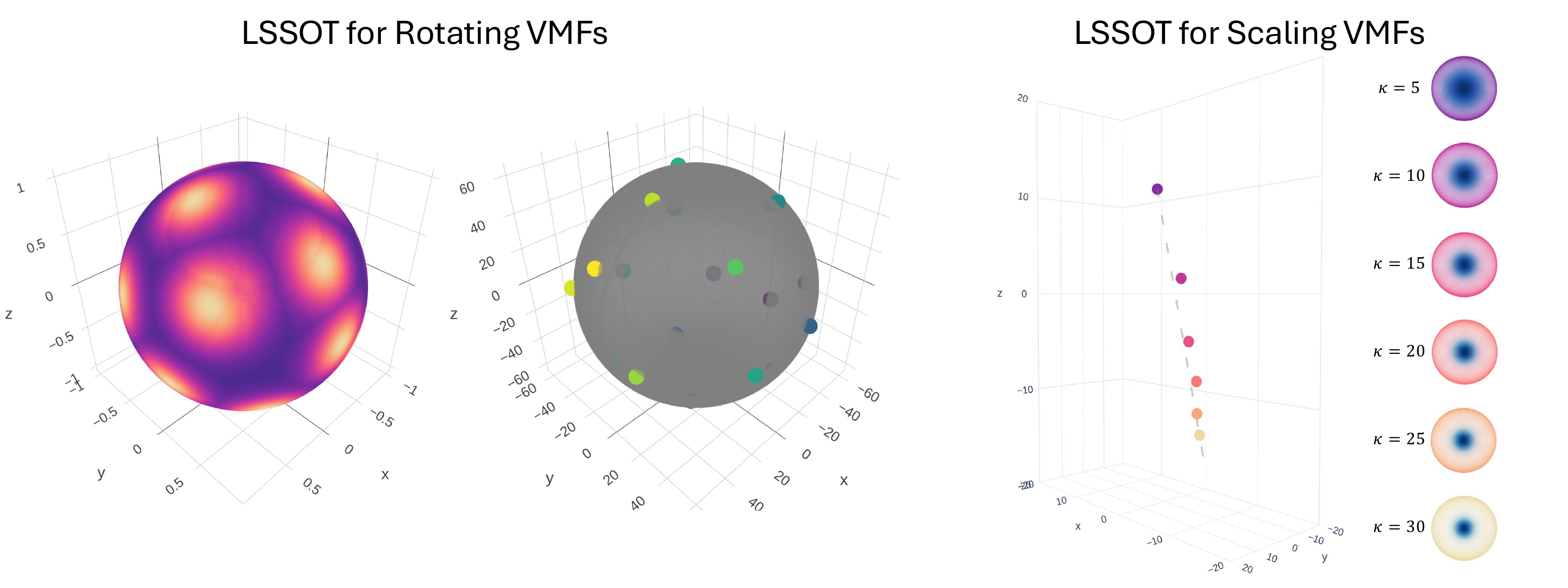}
    \caption{Generated VMF distributions and manifold learning results by LSSOT. Left: a group of 20 VMFs generated by rotating a source VMF, and the corresponding 3-dimensional visualizations by LSSOT and MDS. Right: 6 VMFs generated by scaling a source VMF with $\kappa=30$, and the same 3-dimensional visualizations. }
    \label{fig:vmf}
\end{figure}

\subsection{Cortical Surface Registration}
We further verify the validity and efficiency of the proposed LSSOT method in comparing spherical cerebral cortex data for registration tasks. Cortical surface registration seeks to establish meaningful anatomical correspondences across subjects or time points. This process is essential for surface-based analysis in both cross-sectional and longitudinal neuroimaging studies. Cortical surface scans are often acquired from diverse spatial frameworks, necessitating a preliminary alignment process to situate them within a common space. This alignment is crucial for enabling meaningful group analyses and comparisons. The unit sphere $\mathbb{S}^2$ is usually chosen to be this common space, owing to the inherent genus-0 spherical topology of the cortical surfaces \citep{wang2004intrinsic,gu2004genus,lui2007landmark}. Thus, such aligning involves mapping each hemisphere of the convoluted cerebral cortical surface onto the $\mathbb{S}^2$ space with minimal distortions. Then the goal is to register spherical cortical surfaces to a template, i.e. computing a smooth velocity field on the sphere which is used to deform the moving vertices into the fixed template. Through this deformation, correspondences are established across a group of cortical surfaces. 
\subsubsection{Experimental Setup}
We leverage the Superfast Spherical Surface Registration (S3Reg) \citep{9389746} neural network to perform atlas-based registration, which focuses on registering all surfaces to one atlas surface. S3Reg is an end-to-end unsupervised diffeomorphic registration network based on the Voxelmorph framework \citep{Balakrishnan2018VoxelMorphAL} and the Spherical Unet \citep{Zhao2019SphericalUO} backbone, offering flexibilities in choosing the similarity measure for spherical surfaces in the training objectives. Following the S3Reg experimental setup with resolution level at the 6-th icosahedron subdivisions (i.e. 40962 vertices on each surface), we replace the original mean squared error (MSE) similarity loss with our LSSOT distance and other baselines, and evaluate the registration performance achieved by each similarity measure. We use the Freesurfer fsaverage \citep{fischl2012freesurfer} surface as the fixed (atlas) surface. All models are implemented in PyTorch \citep{paszke2017automatic} and run on a Linux server with NVIDIA RTX A6000 GPU. More experimental details are documented in Appendix \ref{app: cortical-reg}.

\textbf{Dataset and Preprocessing.}
We perform registration on the NKI dataset \citep{nooner2012nki} and ADNI dataset \citep{jack2008alzheimer}. NKI contains T1-weigted brain MRI images from 515 subjects, which are processed to obtain reconstructed cortical surfaces, spherical mappings, parcellations, and anatomical features using the FreeSurfer pipeline \citep{fischl2012freesurfer}. A subset of the ADNI dataset that contains 412 subjects is used and processed in the same FreeSurfer pipeline. We choose the sulcal depth as the single-channel feature for registration and normalize features into probability vectors. For parcellations, we choose the DKT Atlas \citep{klein2012101,klein2017mindboggling}, which parcellates each hemisphere into 31 cortical regions of interest (ROIs). We split both datasets into 70\% scans for training, 10\% scans for validation, and 20\% scans for testing. 

\textbf{Training Losses.}
The training loss has three components: the similarity loss for aligning the moving surfaces to the fixed atlas surface, the Dice loss to impose biological validity, and the regularization loss to enforce the smoothness of the deformation field. Specifically, we employ the parcellation maps generated by FreeSurfer and the torchmetrics.Dice function from the torchmetrics library \citep{Nicki_Skafte_Detlefsen_and_Jiri_Borovec_and_Justus_Schock_and_Ananya_Harsh_and_Teddy_Koker_and_Luca_Di_Liello_and_Daniel_Stancl_and_Changsheng_Quan_and_Maxim_Grechkin_and_William_Falcon_TorchMetrics_-_Measuring_2022} for Dice loss. The training loss is a linear combination of the three components. In our experimental design, we keep the regularization and the Dice terms in all setups. Our primary focus is on investigating the impact of various similarity losses on the registration performance, including LSSOT distance and the following baselines.

\textbf{Baselines.}
Since the original S3Reg framework used MSE as part of the similarity loss, we include MSE as one of the baselines. Besides, Spherical Sliced-Wasserstein (SSW) \citep{bonet2022spherical} and Stereographic Spherical Sliced Wasserstein Distances (S3W) \citep{tran2024stereographic} are powerful spherical distances to test LSSOT against. We use the SSW implementation from the official repo \footnote{https://github.com/clbonet/Spherical\_Sliced-Wasserstein} and the Amortized Rotationally Invariant Extension of S3W (AI-S3W) from the official implementation \footnote{https://github.com/mint-vu/s3wd} with 10 rotations. We also add the Slice Wasserstein distance (SWD), which is defined in Euclidean space $\mathbb{R}^3$ instead of $\mathbb{S}^2$. The implementation of the SWD algorithm is from the Python Optimal Transport (POT) library \citep{flamary2021pot}.

\textbf{Metrics}
We quantitatively evaluate the similarities between the registered moving surfaces and the fixed atlas surface using mean absolute error (MAE) and Pearson correlation coefficient (CC) on features, as well as LSSOT, SWD on the spherical distributions (represented by vertex locations and feature probability vectors). We calculate the Dice score of the registered parcellations and the fixed parcellation. Moreover, we measure the area (resp. edge) distortions between the original moving surface meshes and the deformed surface meshes as relative changes in the areas of all faces (resp. the lengths of all edges) of meshes.

\subsubsection{Results}
Table \ref{tab: cortical-reg} summarizes the performance of registration evaluated by the metrics for four scenarios: left/right hemispheres of the NKI and ADNI datasets. As the two best methods overall, our proposed LSSOT metric and SSW are on par with each other, yet LSSOT costs less than 10\% of the training time. We emphasize that LSSOT is the leading method in the alignment of the parcellations (i.e. Dice Score), indicating that LSSOT yields more biologically meaningful registrations. Besides, with the least area/edge distortions, LSSOT finds the simplest solution of the deformation field to align the moving surface to the fixed surface. Figure \ref{fig:nki-A00065749} shows the registered sulcal depth map and parcellation map of a random subject. More visualizations can be found in the Appendix \ref{app: visual-cortical-reg}. It's worth noting that the LSSOT embedding $\hat{v}^S_{\text{fix}}$ of the fixed surface is calculated once and for all before training, and in each training iteration, only the LSSOT embedding of the deformed moving surface $\hat{v}^S_{\text{mov}}$ is calculated and then compared to $\hat{v}^S_{\text{fix}}$. Other competing methods, however, need to process both surfaces and calculate the distance during each iteration, which accounts for LSSOT taking less time than SWD. This makes LSSOT a perfect fit for atlas-based registrations, as the template is fixed. 

\begin{table}[t!]
    \centering
    \scalebox{0.9}{
    \begin{NiceTabular}{ccccccc}
        % \multicolumn{6}{c}{Sulcal Depth (lh)} \\
        \thickhline
       && \multirow{2}{*}{MSE} & \multirow{2}{*}{S3W} & \multirow{2}{*}{SWD} & \multirow{2}{*}{SSW} & \multirow{2}{*}{LSSOT (Ours)} \\ 
       &&&&&& \\
       \hline
	\multirow{8}{*}{\centering \rotatebox{90}{NKI (Left Hemisphere)}}
        % &&&&{($\epsilon=0.1$)}&{($\epsilon=0.01$)}& \\ \hline
         &\rule{0pt}{3ex}   LSSOT($\downarrow$) &0.2754$\pm$0.0611&0.2298$\pm$0.0346 & 0.2411$\pm$ 0.0366 &\textbf{0.2079$\pm$0.0369}&\textbf{0.1890$\pm$0.0361} \\ 
         &\rule{0pt}{3ex}   SWD($\downarrow$) &\textbf{0.0051$\pm$0.0017}&0.0053$\pm$0.0010 & \textbf{0.0027$\pm$0.0011}&0.0059$\pm$0.0011&0.0052$\pm$0.0011 \\ 
         &\rule{0pt}{3ex}   MAE($\downarrow$) &\textbf{0.1129$\pm$0.0471}&0.2278$\pm$0.0372& 0.2658$\pm$0.0410&\textbf{0.2145$\pm$0.0266}&0.2516$\pm$0.0397 \\ 
         &\rule{0pt}{3ex}   CC($\uparrow$) &0.8269$\pm$0.0425&\textbf{0.9216$\pm$0.0174}&\textbf{0.8722$\pm$0.0302}&0.8671$\pm$0.0314&0.8649$\pm$0.0295 \\ 
         &\rule{0pt}{3ex}   Dice ($\uparrow$) &0.7746$\pm$0.0861&\textbf{0.8498$\pm$0.0670}&0.7984$\pm$0.0539&0.8429$\pm$0.0692&\textbf{0.8462$\pm$0.0548} \\ 
         &\rule{0pt}{3ex}   Edge Dist.($\downarrow$) &0.3060$\pm$0.0404&0.3922$\pm$0.0647&0.3442$\pm$0.0346&\textbf{0.2476$\pm$0.0348}&\textbf{0.2365$\pm$0.0343} \\ 
         &\rule{0pt}{3ex}   Area Dist.($\downarrow$) &0.4305$\pm$0.0488&0.4048$\pm$0.0752&0.4073$\pm$0.0368&\textbf{0.2733$\pm$0.0402}&\textbf{0.2897$\pm$0.0396} \\ 
         &\rule{0pt}{3ex}   Time(seconds)($\downarrow$) &\textbf{73.07}&121.00& 118.96&1350.96&\textbf{101.01} \\ 
%        \vspace{0.2cm}
%        \cline{2-7}
       \hline
       \multirow{8}{*}{\rotatebox{90}{NKI (Right Hemisphere)}}
%       \Block{8-1}<\rotate>{NKI (Right Hemisphere)}

         &\rule{0pt}{3ex}LSSOT($\downarrow$) &0.2847$\pm$0.0502&0.2062$\pm$0.0510&0.2454$\pm$0.0396&\textbf{0.1341$\pm$0.0209}&\textbf{0.1121$\pm$0.0253} \\ 
         &\rule{0pt}{3ex}SWD($\downarrow$) &0.0048$\pm$0.0015&0.0046$\pm$0.0009& \textbf{0.0019$\pm$0.0003}&\textbf{0.0031$\pm$0.0017}&0.0033$\pm$0.0015 \\ 
         &\rule{0pt}{3ex}MAE($\downarrow$) &\textbf{0.1411$\pm$0.0215}&\textbf{0.2224$\pm$0.0324}&0.2379$\pm$0.0348&0.2268$\pm$0.0551&0.2489$\pm$0.0568 \\ 
         &\rule{0pt}{3ex}CC($\uparrow$) &\textbf{0.9334$\pm$0.0111}&0.9145$\pm$0.0130&0.9191$\pm$0.0263&0.8979$\pm$0.0383&\textbf{0.9284$\pm$0.0599} \\ 
         &\rule{0pt}{3ex}Dice ($\uparrow$) &0.7318$\pm$0.0943&0.6961$\pm$0.0777&0.6972$\pm$0.0820&\textbf{0.7952$\pm$0.0751}&\textbf{0.7996$\pm$0.0706} \\ 
         &\rule{0pt}{3ex}Shape Dist.($\downarrow$) &0.2872$\pm$0.0421&\textbf{0.1395$\pm$0.0709}&0.2160$\pm$0.0406&0.1460$\pm$0.0417&\textbf{0.1428$\pm$0.0162} \\ 
         &\rule{0pt}{3ex}Area Dist.($\downarrow$) &0.3453$\pm$0.0422&0.3125$\pm$0.0876&0.2924$\pm$0.0412&\textbf{0.1709$\pm$0.0126}&\textbf{0.1910$\pm$0.0189} \\ 
         &\rule{0pt}{3ex}Time(seconds)($\downarrow$) &\textbf{72.14}&121.02& 116.91 &1362.16&\textbf{100.68} \\

%        \vspace{0.2cm}
%        \cline{2-7}
	\hline 
	\multirow{8}{*}{\rotatebox{90}{ADNI (Left Hemisphere)}}
%        \Block{8-1}<\rotate>{ADNI (Left Hemisphere)}

         &\rule{0pt}{3ex}   LSSOT($\downarrow$) &0.1476$\pm$0.0396&0.1340$\pm$0.0397 & 0.1412$\pm$0.0419 &\textbf{0.1262$\pm$0.0261}&\textbf{0.1041$\pm$0.0277} \\ 
         &\rule{0pt}{3ex}   SWD($\downarrow$) &0.0046$\pm$0.0011&\textbf{0.0041$\pm$0.0011} & \textbf{0.0027$\pm$0.0009}&0.0052$\pm$0.0014&0.0063$\pm$0.0016 \\ 
         &\rule{0pt}{3ex}   MAE($\downarrow$) &\textbf{0.2784$\pm$0.0367}&0.3295$\pm$0.0427& 0.3627$\pm$0.0477&\textbf{0.3153$\pm$0.0485}&0.3332$\pm$0.0873 \\ 
         &\rule{0pt}{3ex}   CC($\uparrow$) &\textbf{0.9141$\pm$0.0269}& 0.9077$\pm$0.0207&\textbf{0.9139$\pm$0.0406}&0.9032$\pm$0.0139&0.9036$\pm$0.1128 \\ 
         &\rule{0pt}{3ex}   Dice ($\uparrow$) &0.7859$\pm$0.0949&0.7733$\pm$0.0816&0.7457$\pm$0.0683&\textbf{0.7987$\pm$0.0775}&\textbf{0.8119$\pm$0.0940} \\ 
         &\rule{0pt}{3ex}   Edge Dist.($\downarrow$) &0.3241$\pm$0.0516&0.2836$\pm$0.0750&0.2379$\pm$0.0459&\textbf{0.1927$\pm$0.0582}&\textbf{0.0979$\pm$0.0129} \\ 
         &\rule{0pt}{3ex}   Area Dist.($\downarrow$) &0.2990$\pm$0.0580&0.1937$\pm$0.0849&0.1930$\pm$0.0450&\textbf{ 0.1008$\pm$0.0053}&\textbf{0.1303$\pm$0.0088} \\ 
         &\rule{0pt}{3ex}   Time(seconds)($\downarrow$) &\textbf{59.34}&100.78&96.69 &1094.06&\textbf{84.51} \\

%        \vspace{0.2cm}
%        \cline{2-7}
%        \vspace{0.2cm}
	\hline
	\multirow{8}{*}{\rotatebox{90}{ADNI (Right Hemisphere)}}

        &\rule{0pt}{3ex}LSSOT($\downarrow$) &0.1807$\pm$0.0347&0.1367$\pm$0.0375&0.1497$\pm$0.0509&\textbf{0.1349$\pm$0.0541}&\textbf{0.1295$\pm$0.0281} \\ 
         &\rule{0pt}{3ex}SWD($\downarrow$) &\textbf{0.0034$\pm$0.0010}&0.0046$\pm$0.0011& \textbf{0.0020$\pm$0.0005}&0.0041$\pm$0.0016&0.0037$\pm$0.0020 \\ 
         &\rule{0pt}{3ex}MAE($\downarrow$) &\textbf{0.2433$\pm$0.0273}&0.3229$\pm$0.0386&\textbf{0.2618$\pm$0.0450}&0.3236$\pm$0.0519&0.3493$\pm$0.0732 \\ 
         &\rule{0pt}{3ex}CC($\uparrow$) &\textbf{0.9330$\pm$0.0163}&0.9139$\pm$0.0204&0.8985$\pm$0.0391&\textbf{0.9375$\pm$0.0437}&0.8013$\pm$0.0851 \\ 
         &\rule{0pt}{3ex}Dice ($\uparrow$) &0.8073$\pm$0.0983&0.7617$\pm$0.0806&0.8122$\pm$0.0648&\textbf{0.8622$\pm$0.0915}&\textbf{0.8518$\pm$0.0879} \\ 
         &\rule{0pt}{3ex}Shape Dist.($\downarrow$) &0.3371$\pm$0.0627&0.2868$\pm$0.0835&0.3690$\pm$0.0513&\textbf{0.1841$\pm$0.0576}&\textbf{0.1226$\pm$0.0213} \\ 
         &\rule{0pt}{3ex}Area Dist.($\downarrow$) &0.3015$\pm$0.0568&0.3109$\pm$0.0987&0.4123$\pm$0.0438&\textbf{0.1939$\pm$0.0392}&\textbf{0.1948$\pm$0.0325} \\ 
        &\rule{0pt}{3ex}   Time(seconds)($\downarrow$) &\textbf{59.69}&101.35&97.52 &1114.83&\textbf{85.37} \\
        \thickhline
        \end{NiceTabular}}
    \vspace{0.2in}
    \caption{Evaluation metrics on the test datasets (mean $\pm$ standard deviation) for sulcal depth registration of both hemispheres from NKI dataset and ADNI dataset. The best two methods are bolded. The average training time for each epoch is also included in the bottom row of each scenario. We would like to note that as the number of samples is 40,962, using spherical optimal transport or even Sinkhorn divergence is computationally intractable in both time and space.}
    \label{tab: cortical-reg}
\end{table}

\begin{figure}[t!]
    \centering
    \vspace{-0.2in}
    \includegraphics[width=\linewidth]{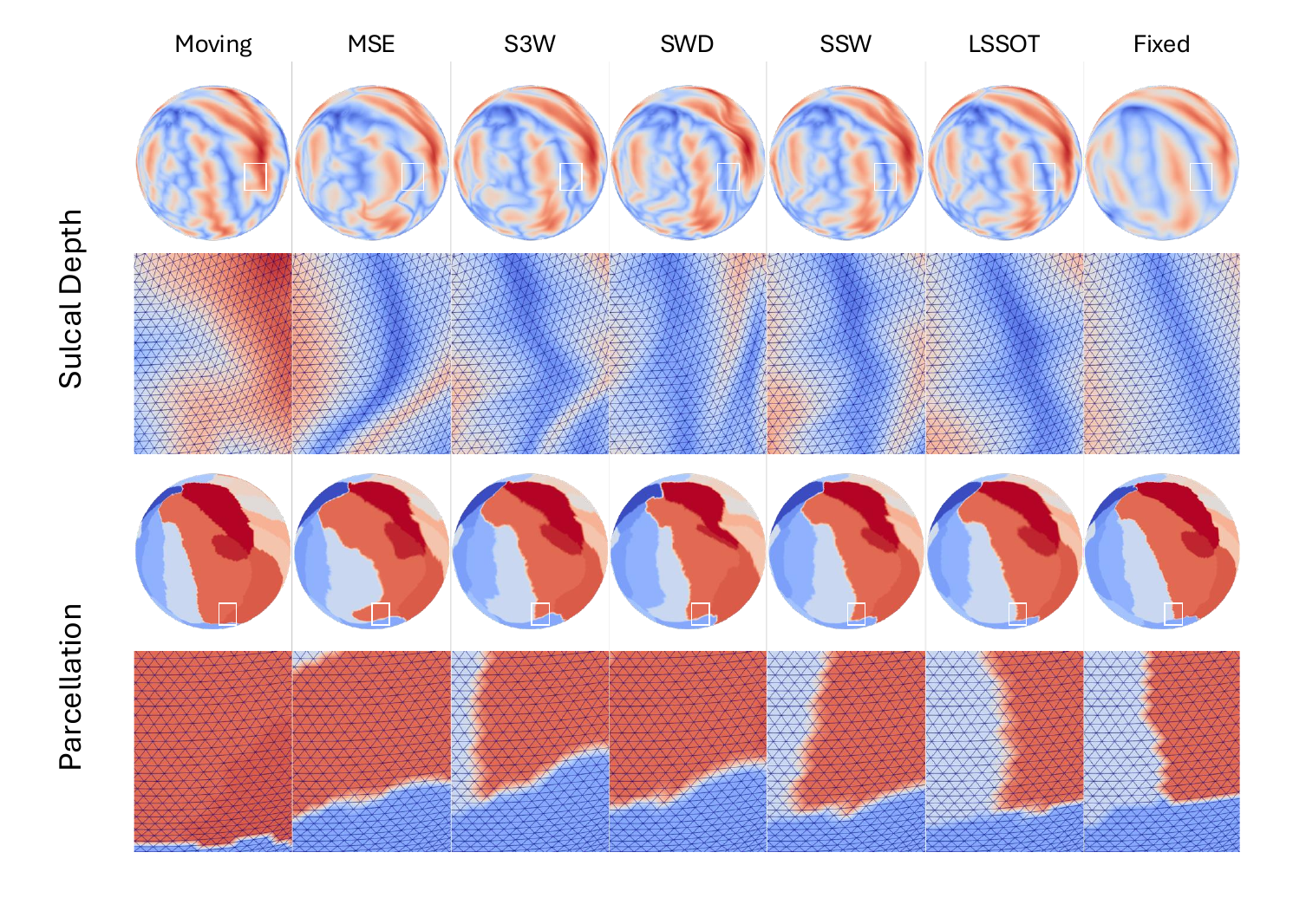}
    \vspace{-0.5in}
    \caption{Qualitative registration results (middle columns) from a moving surface (left column) to the fixed surface (right column). Both sulcal depth and parcellations are visualized with global and close-up views. This moving surface is from Subject A00065749 in the NKI dataset. Visualizations for more subjects can be found in the Appendix \ref{app: visual-cortical-reg}.}
    \label{fig:nki-A00065749}
\end{figure}

% \vspace{-0.3in}
\subsection{Point Cloud Interpolation}
LSSOT can also be utilized in point cloud analysis, once each point cloud is endowed with a spherical representation. In this experiment, we explore the interpolations between point cloud pairs from the ModelNet dataset \citep{Wu_2015_CVPR}. Specifically, we train an autoencoder to project the original point clouds to a spherical latent space, to represent each of them as a spherical distribution. Then we apply gradient flow between the pairs of spherical distributions using the LSSOT metric along with SSW and Spherical OT. Finally, this transformation is reconstructed in the original space by the trained decoder, resulting in an interpolation between the original pairs. Figure \ref{fig:modelnet_interpolation} (top) illustrates this process.
\subsubsection{Mapping Point Clouds to Spherical Latent Space via LSSOT Autoencoder}
As point clouds are set-structured data, we leverage the Set Transformer \citep{lee2019set} architecture to build the autoencoder\footnote{https://github.com/juho-lee/set\_transformer}. The encoder $\Phi_{\text{enc}}$ is composed of three consecutive Set Attention Blocks, followed by a linear layer and a normalization layer to output a set of points on $\mathbb{S}^2$. The decoder $\Psi_{\text{dec}}$ consists of four Set Attention Blocks with a linear layer at the end. For a set of point clouds $\{X_i\}_{i=1}^I$ (each point cloud $X_i$ is a set of points), the autoencoder is trained to reconstruct each $X_i$ under a soft regularity constraint in the latent space. That is, the objective function is defined as 
\begin{align}
    \label{eq: autoencoder loss}
    \mathcal{L} = \frac{1}{I}\sum_{i=1}^I\Bigl(\|\Psi_{\text{dec}}(\Phi_{\text{enc}}(X_i))-X_i\|^2_2+\lambda\cdot LSSOT_2 (\Phi_{\text{enc}}(X_i), X_{\text{unif}})\Bigr)
\end{align}
where $X_{\text{unif}}$ is a set of samples from the uniform distribution on $\mathbb{S}^2$. 

After training, the encoder $\Phi_{\text{enc}}$ can serve as a spherical mapping from $\mathbb{R}^3$ to $\mathbb{S}^2$, and the decoder $\Psi_{\text{dec}}$ acts as an inverse mapping. Enforcing the latent representation to closely approximate the uniform distribution offers significant benefits. It minimizes the occurrence of singularities in both the spherical mapping and its inverse function, resulting in fewer distortions or discontinuities, hence the structural and geometrical characteristics of the original point clouds are better preserved. In implementation, we select $\lambda=1e-4$ through hyperparameter search. We set the number of slices to be $L=1000$, and the reference size as $M=1024$ for the $LSSOT_2$ function.

\subsubsection{Gradient Flow and Interpolations of Spherical Representations}
Given two spherical distributions as the source and the target, we can attach gradients on the source distribution and drive it toward the target by minimizing the distance between them. For the distance, we use LSSOT, SSW, and Spherical OT for comparisons. Here the Riemannian Gradient flow algorithm (see Appendix \ref{subsec: gradient-flow algo}) is implemented on $\mathbb{S}^2$. With a slight abuse of notation to mimic the constant speed geodesic, we sample 5 time points denoted as $T=0, 1/4, 2/4, 3/4, 1$, from the beginning ($T=0$) to the loss converging ($T=1$), such that the loss goes down by approximately the same amount in each interval. In Figure \ref{fig:modelnet_interpolation} (bottom), we visualize the interpolations for two example pairs of point clouds. 
\begin{figure}[t!]
    \vspace{-0.2in}
    \centering
    \includegraphics[width=\linewidth]{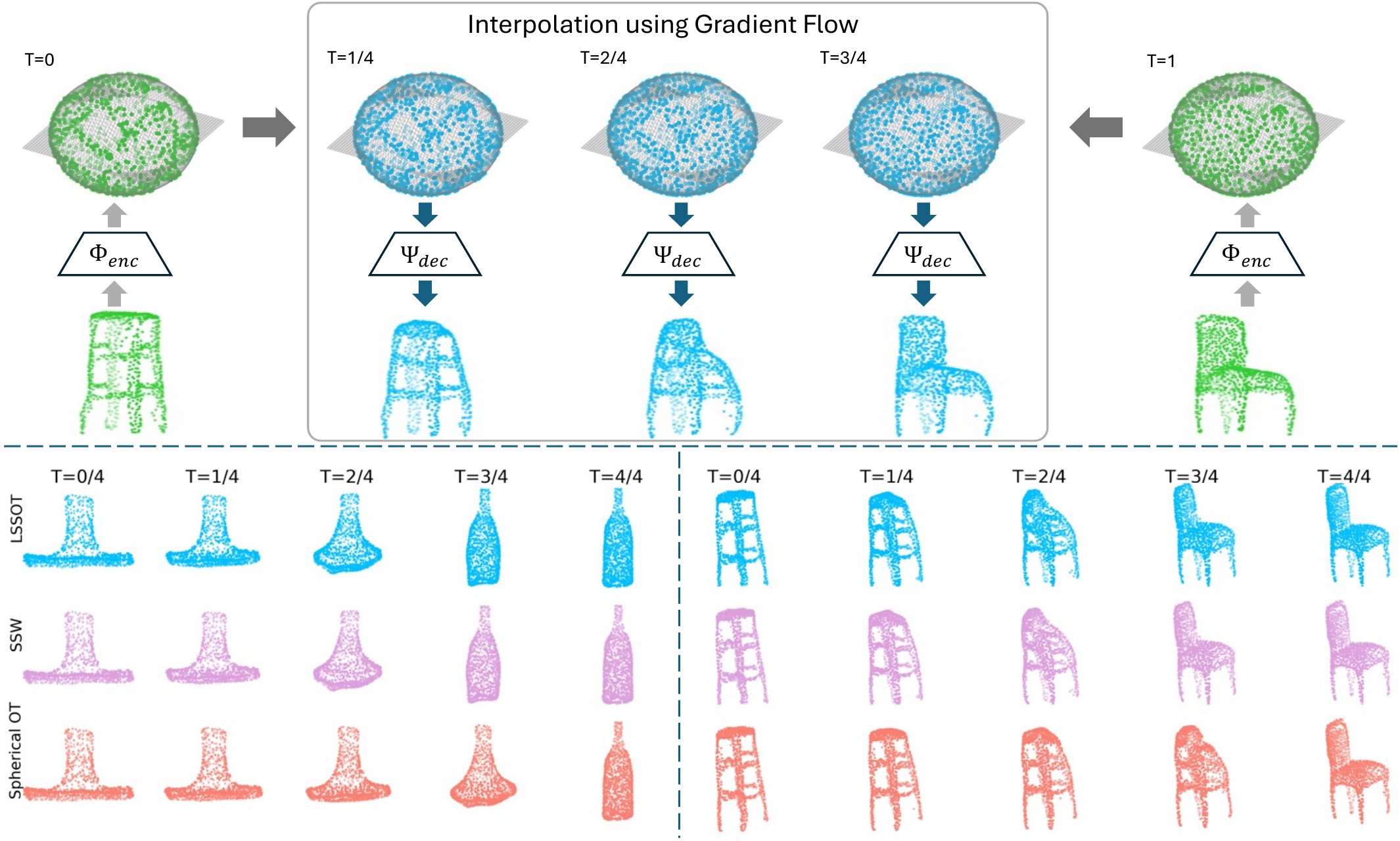}
    \caption{Top panel: the interpolation process between a pair of point clouds using gradient flow in the latent space $\mathbb{S}^2$. Bottom panel: gradient flow interpolations from a range hood to a bottle (left), and from a stool to a chair (right) using three metrics LSSOT, SSW and spherical OT. See Appendix \ref{subsec: pc_interp} for more interpolated pairs.}
    \label{fig:modelnet_interpolation}
\end{figure}

% \vspace{-0.3in}
\section{Limitations and Future Work}
\vspace{-0.15in}
To the best of our knowledge, this work is the first to adapt the Linear Optimal Transport methodology onto hyperspheres, enabling rapid comparisons among a group of spherical distributions. However, spherical data/signals may not come in the form of distributions in practice, and normalizing data into probability distributions may result in the loss of information, thus one limitation of our work is that LSSOT is defined only for probability measures. Immediate future work includes a partial extension of LSSOT for positive signals with unequal mass, and transport $L^p$ \cite{thorpe2017transportation, liu2023ptlp,crook2024linear} extension for non-positive spherical signals, both of which will be studied on $\mathbb{S}^1$ as the first step, and extended to $\mathbb{S}^2$ by slicing.

% \vspace{-0.2in}
\section{Conclusion}
\vspace{-0.15in}
In this work, we propose Linear Spherical Sliced Optimal Transport (LSSOT) as a novel and fast metric for defining the similarities between spherical data. We provide rigorous mathematical proofs establishing LSSOT as a true metric. We also demonstrate the computation efficiency of the LSSOT, superior to other baselines in the setting of pairwise comparisons of spherical data. We verify that LSSOT captures the rotation and scaling geometry on spheres through experiments on the Von Mises-Fisher distributions. In the task of atlas-based cortical surface registration on various datasets, LSSOT delivers top registration performance across multiple evaluation metrics while reducing the training time significantly. Furthermore, LSSOT is shown to yield meaningful interpolations between point cloud pairs. The metric's effectiveness in both point cloud analysis and practical neuroimaging applications demonstrates its broad potential in 3D data analysis.

\section{Acknowledgement}
SK acknowledges support from NSF CAREER Award \#2339898, CC acknowledges funding from NIH Grant P50MH109429, and BAL acknowledges funding from NIH 1R01EB017230. The authors also thank Shiyu Wang and Karthik Ramdas for processing the NKI and ADNI datasets and providing the processed files to the research team.

Data collection and sharing for this project was funded (in part) by the Alzheimer's Disease Neuroimaging Initiative (ADNI) (National Institutes of Health Grant U01 AG024904),
DOD ADNI (Department of Defense award number W81XWH-12-2-0012). ADNI is funded by the National Institute on Aging, the National Institute of Biomedical Imaging and Bioengineering, and through generous contributions from the following: AbbVie, Alzheimer’s Association; Alzheimer’s Drug Discovery Foundation; Araclon Biotech; BioClinica, Inc.; Biogen; Bristol-Myers Squibb Company; CereSpir, Inc.; Cogstate; Eisai Inc.; Elan Pharmaceuticals, Inc.; Eli Lilly and Company; EuroImmun; F. Hoffmann-La Roche Ltd and its affiliated company Genentech, Inc.; Fujirebio; GE Healthcare; IXICO Ltd.; Janssen Alzheimer Immunotherapy Research \& Development, LLC.; Johnson \& Johnson Pharmaceutical Research \& Development LLC.; Lumosity; Lundbeck; Merck \& Co., Inc.;
Meso Scale Diagnostics, LLC.; NeuroRx Research; Neurotrack Technologies; Novartis Pharmaceuticals Corporation; Pfizer Inc.; Piramal Imaging; Servier; Takeda Pharmaceutical Company; and Transition Therapeutics. The Canadian Institutes of Health Research is providing funds to support ADNI clinical sites in Canada. Private sector contributions are facilitated by the Foundation for the National Institutes of Health (\url{www.fnih.org}). The grantee organization is the Northern California Institute for Research and Education, and the study is coordinated by the Alzheimer’s Therapeutic Research Institute at the University of Southern California. ADNI data are disseminated by the Laboratory for Neuro Imaging at the University of Southern California.

\bibliography{iclr2025_conference.bib}
\bibliographystyle{iclr2025_conference}

\newpage
\appendix

\section{Spherical Radon transform in $\mathbb{S}^2$}\label{app: Radon S2}

In \cite{bonet2022spherical}, a \textbf{Spherical Radon transform} is introduced and it is used to give an equivalent formula of the SSW discrepancy \eqref{eq: SSW}.   
 In this section, we will discuss such Spherical Radon transform and its relation with the \textbf{Semicircle Transform}. 

\subsection{Notations on Sphere $\mathbb{S}^2$}
Let $U\in V_2(\mathbb{R}^3)$, where 
$$V_2(\mathbb{R}^3)=\{U\in \mathbb{R}^{3\times 2}: \,  U^TU=I_2\}.$$
Each $U$ determines a $2$-dimensional plane spanned by the columns of $U$: 
$$H(U):=\text{span}\{U[:,1], U[:,2]\}\subset \mathbb{R}^3.$$
The set $\{U[:,1], U[:,2]\}$ is an orthonormal basis for $H(U)$. The subspace $H(U)$ can be equivalently defined by a normal unit vector
$n_U$, which is the uniquely  determined by $U$ (up to scalar $\pm1$): 
\begin{align}
  n_U\in \mathbb{S}^2, \qquad  U^Tn_U=0_2. %\qquad  \|n_U\|=1.
  \label{eq:n_U}  
\end{align}
Thus, the plane $H(U)$ can be defined as 
$$H(U)=n_U^\perp:=\{x\in \mathbb{R}^3: \,  x^Tn_U=0\}.$$ 

Note that $\mathcal{S}(U):=H(U)\cap \mathbb{S}^2$ 
defines a great circle. 

\subsection{Projection onto the great circle $\mathcal{S}(U)$}
%Now we will discuss the \textbf{projection on the great circle.}
Given $U\in V_2(\mathbb{R}^3)$, the operator $UU^T:\mathbb{R}^3\to\mathbb{R}^3$ defines a projection. Indeed, for each point $s\in \mathbb{S}^2\setminus\{\pm n_U\}$, the mapping 
\begin{align}
  s\mapsto \mathcal{P}^1_{\mathcal{S}(U)}(s):= \frac{UU^Ts}{\|UU^Ts\|}=\frac{UU^Ts}{\|U^Ts\|}=\frac{U[:,1]^Ts}{\|U^Ts\|}U[:,1]+\frac{U[:,2]^Ts}{\|U^Ts\|}U[:,2] \in\mathcal{S}(U)\subset\mathbb{R}^3\label{eq:proj_1}.
\end{align}
defines a projection onto the great circle $\mathcal{S}(U)$, where in \eqref{eq:proj_1} we used that
$$\left\|U U^T s\right\|=s^T U U^T U U^T s=s^T U U^T s=\left\|U^T s\right\|$$
(see \cite[Lemma 1]{bonet2022spherical}). 
Note that in \cite{bonet2022spherical}, the above projection is written as 
\begin{align}
    s\mapsto P^Us:=\frac{U^Ts}{\|U^Ts\|} \label{eq:P^u}, 
\end{align}
which is a map from $\mathbb{S}^2\setminus\{\pm n_U\}$ to $\mathbb{R}^2$ that saves the coordinates $[U[:,1]^Ts,U[:,2]^Ts]$.
This function maps $s$ into a 1-dimensional circle $\approx\mathbb{S}^1$, which can be regarded as a representation/parametrization of the great circle $\mathcal{S}(U)$.

An equivalent method that allows us to define such a projection is introduced by the following parameterization system \cite{quellmalz2023sliced}:

Each $s\in \mathbb{S}^2$ can be written in spherical coordinates: 
$$s=[\cos \alpha\sin\beta,\sin\alpha\sin\beta,\cos\beta]^T.$$
where $\alpha\in[0,2\pi)$ is called the azimuth angle of $s$, and  $\beta\in [0,\pi]$ the zenith angle of $s$. We will use the notation $\alpha=azi(s)$, $\beta=zen(s)$.
\begin{remark}
When $s=\pm [0,0,1]$, the above parametrization is not unique. As in \cite{quellmalz2023sliced}, we set $\alpha=0,\beta=0$ for $s=[0,0,1]$ and 
$\alpha=0,\beta=\pi$ for $s=[0,0,-1]$. 
Thus, 
$$s\mapsto (\alpha,\beta)$$ becomes a bijection (see \cite[page 6]{quellmalz2023sliced}).
\end{remark}
Let us define the following mappings: 
\begin{align}
[0,2\pi)\times[0,\pi]\ni (\alpha,\beta)&\mapsto \Phi(\alpha,\beta):=[\cos \alpha\sin\beta,\sin\alpha\sin\beta,\cos\beta]^T=:s
\in \mathbb{S}^2\label{eq:phi}  \\
\mathbb{S}^2\ni s &\mapsto azi(s)=:\alpha 
\in [0,2\pi) \label{eq:azi}\\
\mathbb{S}^2\ni s &\mapsto zen(s)=:\beta 
\in [0,\pi] \label{eq:zen}
\end{align}

In addition, %given $s$, or equivalently $(\alpha,\beta)$, 
we define the following matrix: 
\begin{align}
\Psi(\alpha,\beta,\gamma):=\begin{bmatrix}
\cos \alpha & -\sin \alpha & 0 \\
\sin \alpha & \cos \alpha & 0 \\ 
0           & 0           & 1 
\end{bmatrix}\begin{bmatrix}
\cos \beta & 0 & \sin\beta \\
0          & 1 & 0 \\ 
-\sin\beta & 0 & \cos\beta 
\end{bmatrix}\begin{bmatrix}
\cos \gamma & -\sin \gamma & 0 \\
\sin \gamma & \cos \gamma & 0 \\ 
0           & 0           & 1 
\end{bmatrix}\nonumber
\end{align}
By \cite[page 7]{quellmalz2023sliced} and \cite{graf2009sampling}, 
\begin{align}
  \mathbb{S}^2\times [0,2\pi)\ni (s,\gamma)\mapsto \Psi({azi}(s),{zen}(s),\gamma)\in SO(3)  \label{eq:bijection_Psi}  
\end{align}
is a bijection, where $SO(3)$ is the special orthogonal group in 
$\mathbb{R}^3$:
\begin{align}
  SO(3)=\{G\in \mathbb{R}^{3\times 3}: \,  GG^T=G^TG=I_3, \, \text{det}(G)=1\}.\nonumber 
\end{align}

In the special case $\gamma=0$, we adopt the following notation:
\begin{align}
\Psi(s):=\Psi(\alpha,\beta,0)&=\begin{bmatrix}
\cos\alpha\cos\beta & -\sin\alpha & \cos\alpha\sin\beta \\
\sin\alpha\cos\beta &\cos\alpha  & \sin\alpha\sin\beta\\
-\sin\beta & 0 & \cos\beta 
\end{bmatrix}    \nonumber 
\end{align}
where, given $s\in\mathbb{S}^2$, $\alpha$ and $\beta$ are as in \eqref{eq:azi} and \eqref{eq:zen} (or equivalently, given $(\alpha,\beta)$, $s\in\mathbb{S}^2$ is defined by \eqref{eq:phi}).

It is straightforward to verify 
$$\Psi(s)^Ts=\mathfrak{n}:=[0,0,1]^T.$$
That is, we can regard $\Psi(s)^T$ as a rotation matrix that can rotate $s$ back to the North Pole $\mathfrak{n}=[0,0,1]^T$. Equivalently, $\Psi(s)$ is a rotation matrix, which can rotate $[0,0,1]^T$ to $s$.

Define the Equator by 
$$\mathcal{E}:=\{[\cos\gamma,\sin\gamma,0]: \, \gamma\in[0,2\pi)\}.$$
Given $U\in V_2(\mathbb{R}^3)$, consider the following set
$$\{s\in \mathbb{S}^2: \Psi(n_U)^Ts\in \mathcal{E}\}$$
to obtain the following characterization of great circles for $\mathbb{S}^2\subset\mathbb{R}^3$:
\begin{lemma}\label{lem:S(U)}
Given $U\in V_2(\mathbb{R}^3)$, it defines a great circle $\mathcal{S}(U)=H(U)\cap \mathbb{S}^2$ which is characterized by the following identity: 
\begin{equation}\label{eq:SU charact}
  \mathcal{S}(U)=\{s\in \mathbb{S}^2: \Psi(n_U)^Ts\in \mathcal{E}\}.
\end{equation}
\end{lemma}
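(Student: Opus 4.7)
The plan is to reduce the claim to a single algebraic identity about the third coordinate of the rotated vector $\Psi(n_U)^T s$. The key observations are (i) $\Psi(n_U)^T$ is a rotation, so it preserves $\mathbb{S}^2$, and (ii) the equator $\mathcal{E}$ is exactly the set of unit vectors whose third component vanishes, i.e., $\mathcal{E} = \mathbb{S}^2 \cap \{x \in \mathbb{R}^3 : x^T \mathfrak{n} = 0\}$ where $\mathfrak{n} = [0,0,1]^T$. Therefore, membership in $\mathcal{E}$ becomes the scalar condition $\mathfrak{n}^T(\Psi(n_U)^T s) = 0$ together with $s \in \mathbb{S}^2$.

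Next, I would exploit the fundamental property of $\Psi(\cdot)$ noted in the excerpt, namely $\Psi(n_U) \mathfrak{n} = n_U$, equivalently $\Psi(n_U)^T n_U = \mathfrak{n}$. Taking transposes gives
\[
\mathfrak{n}^T \Psi(n_U)^T = (\Psi(n_U)\mathfrak{n})^T = n_U^T,
\]
so the third component of $\Psi(n_U)^T s$ satisfies
\[
\mathfrak{n}^T\bigl(\Psi(n_U)^T s\bigr) = n_U^T s.
\]
Combining this with step (ii): $\Psi(n_U)^T s \in \mathcal{E}$ if and only if $s \in \mathbb{S}^2$ and $n_U^T s = 0$.

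Finally, I would invoke the definition of $H(U) = n_U^\perp$ from \eqref{eq:n_U} to recognize $n_U^T s = 0$ as the statement $s \in H(U)$. Thus $\Psi(n_U)^T s \in \mathcal{E}$ is equivalent to $s \in \mathbb{S}^2 \cap H(U) = \mathcal{S}(U)$, which is exactly \eqref{eq:SU charact}.

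There is no real obstacle here; the whole argument hinges on recognizing that the equator is the preimage of the horizontal hyperplane under $\Psi(n_U)^T$ and that $\Psi(n_U)$ was constructed precisely to send $\mathfrak{n}$ to $n_U$. The only mild subtlety is the special convention chosen in the preceding remark for $s = \pm \mathfrak{n}$ in the spherical-coordinate parametrization, but since the argument uses only the rotation identity $\Psi(n_U)\mathfrak{n} = n_U$ (which holds unambiguously), these corner cases do not cause trouble.
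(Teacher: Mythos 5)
Your proof is correct and follows essentially the same route as the paper: both arguments reduce the claim to the observation that the third component of $\Psi(n_U)^T s$ equals $n_U^T s$ (the paper writes out the third row of $\Psi(n_U)^T$ in spherical coordinates, you obtain it from the identity $\Psi(n_U)\mathfrak{n}=n_U$), combined with the fact that $\Psi(n_U)^T$ preserves $\mathbb{S}^2$. The abstract phrasing via $\mathfrak{n}^T\Psi(n_U)^T=n_U^T$ is a clean way to package the same computation.
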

\begin{proof}
First, since $\Psi(n_U)$ is a rotation matrix, $\Psi(n_U)^Ts\in \mathbb{S}^2$ for all $s\in\mathbb{S}^2$, as $\|\Psi(n_U)^Ts\|=\|s\|=1$. 

Now, write $n_U=[\cos\alpha^*\sin\beta^*,\sin\alpha^*\sin\beta^*,\cos\beta^*]^T$, where $\alpha^*:=azi(n_U)$ and $\beta^*=zen(n_U)$. 

For any $s\in \mathcal{S}(U)$, we have $(n_U)^Ts=0$. 

We have to prove that the third coordinate of $\Psi(n_U)^Ts\in\mathbb{S}^2$ is $0$:
\begin{align}
[\Psi(n_U)^Ts][3] &=[\cos\alpha^*\sin\beta^*, \sin\alpha^*\sin\beta^*, \cos \beta^*]s=(n_U)^Ts=0.\nonumber 
\end{align}
That is, for every  $s\in \mathcal{S}(U)$, we have obtained that $s\in \{s\in \mathbb{S}^2: \Psi(n_U)^Ts\in \mathcal{E}\}$.

Similarly, if we consider $s\in \{s\in \mathbb{S}^2: \Psi(n_U)^Ts\in \mathcal{E}\}$, we have 
$0=[\Psi(n_U)^Ts][3]=(n_U)^Ts$. Thus, $s\in \mathcal{S}(U)$. 
 
 Therefore, we have the equality in \eqref{eq:SU charact}.
% $$\mathcal{S}=\mathcal{S}(U).$$ 
% For the second equality, we have 
% $\{\Psi(n_U)s:s\in \mathcal{E}\}\subset \mathcal{S}$. 
% For the other direction, pick $s\in \mathcal{S}$. We have $\Psi(n_U)^Ts\in \mathcal{E}$. Thus $s=\Psi(n_U)\Psi(n_U)^Ts\in \{\Psi(n_U)s:s\in \mathcal{E}\}$. Thus 
% $\mathcal{S}\subset \{\Psi(n_U)s:s\in \mathcal{E}\}$. 
\end{proof}

\begin{lemma}\label{lem:gamma-S(U)}
The mapping 
$$[0,2\pi)\ni\gamma\mapsto \Phi(\gamma,\pi/2)=[\cos\gamma,\sin\gamma,0]^T\in \mathcal{E} $$
and the mapping 
$$\mathcal{E}\ni s\mapsto \Psi(n_U)s \in \mathcal{S}(U) $$
are bijections. 
Thus, the mapping 
$$[0,2\pi)\ni \gamma\mapsto s:=\Psi(n_U)\Phi(\gamma,\pi/2)\in \mathcal{S}(U)$$
is also a bijection. 
\end{lemma}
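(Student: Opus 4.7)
The plan is to prove the lemma by verifying each of the two maps is a bijection separately and then invoking the standard fact that a composition of bijections is a bijection.

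First, for the map $\gamma\mapsto\Phi(\gamma,\pi/2)=[\cos\gamma,\sin\gamma,0]^T$, I would simply observe that this is the standard parametrization of the unit circle in the $xy$-plane by its argument. Since the cosine and sine functions together uniquely determine a point on the unit circle as $\gamma$ ranges over $[0,2\pi)$, this is an immediate bijection onto $\mathcal{E}$. Nothing beyond routine trigonometry is needed here.

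Second, for the map $s\mapsto \Psi(n_U)s$, I would split the argument into three short steps: well-definedness into $\mathcal{S}(U)$, injectivity, and surjectivity. For well-definedness, since $\Psi(n_U)\in SO(3)$ by \eqref{eq:bijection_Psi}, the image $\Psi(n_U)s$ lies in $\mathbb{S}^2$; and to land in $\mathcal{S}(U)$, I invoke the characterization \eqref{eq:SU charact} from Lemma \ref{lem:S(U)}, noting that $\Psi(n_U)^T(\Psi(n_U)s)=s\in\mathcal{E}$ by orthogonality of $\Psi(n_U)$. Injectivity is immediate from the invertibility of the rotation matrix $\Psi(n_U)$. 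For surjectivity, given any $s'\in\mathcal{S}(U)$, I set $s:=\Psi(n_U)^Ts'$; Lemma \ref{lem:S(U)} gives $s\in\mathcal{E}$, and $\Psi(n_U)s=s'$ by orthogonality again.

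Finally, the composition $\gamma\mapsto \Psi(n_U)\Phi(\gamma,\pi/2)$ is a bijection from $[0,2\pi)$ onto $\mathcal{S}(U)$ as a composition of two bijections. I do not foresee any genuine obstacle here: the whole proof is essentially bookkeeping built on Lemma \ref{lem:S(U)} and the fact that $\Psi(n_U)$ is a rotation. The only mild subtlety worth mentioning explicitly in the write-up is to make sure the two appeals to the identity $\Psi(n_U)^T\Psi(n_U)=I_3$ (once for well-definedness, once for surjectivity) are clearly tied to the definition of $\mathcal{S}(U)$ via \eqref{eq:SU charact}, so that the reader sees why $\Psi(n_U)$ transports $\mathcal{E}$ bijectively onto the great circle $\mathcal{S}(U)$ rather than onto some other great circle.
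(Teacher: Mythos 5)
Your proposal is correct and follows essentially the same route as the paper: both arguments rest on the characterization $\mathcal{S}(U)=\{s\in\mathbb{S}^2:\Psi(n_U)^Ts\in\mathcal{E}\}$ from Lemma \ref{lem:S(U)} together with the fact that $\Psi(n_U)$ is a rotation with inverse $\Psi(n_U)^T$, and then conclude by composing bijections. Your write-up is merely a bit more explicit in separating well-definedness, injectivity, and surjectivity, which the paper compresses into the single observation that the rotation restricted to $\mathcal{E}$ has range $\mathcal{S}(U)$.
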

\begin{proof}
It is trivial to verify that the first mapping, $\gamma\mapsto [\cos\gamma,\sin\gamma,0]^T$,
is a bijection. 

By the Lemma \ref{lem:S(U)}, the mapping $s\mapsto \Psi(n_U)^Ts$ is well-defined from $\mathcal{S}(U)$ to $\mathcal{E}$. 
Since $\Psi(n_U)^T$ is a rotation matrix, it is bijection $\mathbb{S}^2\to \mathbb{S}^2$ whose inverse is $\Psi(n_U)$, and so  
the mapping $s\mapsto \Psi(n_U)s$ is a well-defined bijection from $\mathcal{E}$ to $\mathcal{S}(U)$: that is, if we restrict $\Psi(n_U)$ to $\mathcal{E}$, by Lemma \ref{lem:S(U)}, the matrix $\Psi(n_U)$ defines a bijection from $\mathcal{E}$ to $\text{Range}(\Psi(n_U)\mathcal{E})=\mathcal{S}(U)$. 
%Thus, $s\mapsto\Psi(n_U)$ is a bijection. 
\end{proof}

Now, we define the following composition map $\mathcal{P}^2_{\mathcal{S}(U)}(s)$:
\begin{align}
\mathbb{S}^2\setminus\{\pm n_U\}\ni s\mapsto \gamma:={azi}(\Psi(n_U)^Ts)\mapsto \mathcal{P}^2_{\mathcal{S}(U)}(s):=  \Psi(n_U)\Phi(\gamma,\pi/2)\in \mathcal{S}(U)\label{eq:proj_2} 
\end{align}
\begin{proposition}\label{pro:proj_12}
The mappings $\mathcal{P}^1_{\mathcal{S}(U)}$ (defined by \ref{eq:proj_1}) and $\mathcal{P}^2_{\mathcal{S}(U)}$ (defined by \ref{eq:proj_2}) coincide on $\mathbb{S}^2\setminus \{\pm n_U\}$. 
\end{proposition}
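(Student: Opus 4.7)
The plan is to show that both maps coincide with the geodesic projection onto the great circle $\mathcal{S}(U)$, which can be computed explicitly by changing coordinates via the rotation $\Psi(n_U)$. The core identity I aim to establish is
\begin{equation*}
\Psi(n_U)\, E\, \Psi(n_U)^T = U U^T, \qquad E := \mathrm{diag}(1,1,0),
\end{equation*}
from which the equality $\mathcal{P}^1_{\mathcal{S}(U)}(s) = \mathcal{P}^2_{\mathcal{S}(U)}(s)$ falls out by direct computation.

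First, I would rewrite $\mathcal{P}^2_{\mathcal{S}(U)}(s)$ in a form that isolates the rotation. Letting $t := \Psi(n_U)^T s \in \mathbb{S}^2$, and writing $t = [\cos\gamma \sin\delta, \sin\gamma \sin\delta, \cos\delta]^T$ with $\gamma = azi(t)$ and $\delta = zen(t)$, I can observe that $\Phi(\gamma,\pi/2) = [\cos\gamma,\sin\gamma,0]^T = \frac{E t}{\|E t\|}$, provided $\sin\delta \neq 0$, which is the case exactly when $t \neq \pm \mathfrak{n}$, i.e.\ when $s \neq \pm n_U$ (here I use that $\Psi(n_U)^T n_U = \mathfrak{n}$, so by orthogonality $\Psi(n_U)^T s = \pm\mathfrak{n}$ iff $s = \pm n_U$). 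Substituting back, I obtain the compact expression
\begin{equation*}
\mathcal{P}^2_{\mathcal{S}(U)}(s) = \Psi(n_U)\, \frac{E\, \Psi(n_U)^T s}{\|E\, \Psi(n_U)^T s\|}.
\end{equation*}

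Next, I would verify the claimed identity $\Psi(n_U)E\Psi(n_U)^T = UU^T$. Since $\Psi(n_U)^T n_U = \mathfrak{n}$, the third column of $\Psi(n_U)$ is exactly $n_U$, so if $V \in \mathbb{R}^{3\times 2}$ denotes the matrix of the first two columns of $\Psi(n_U)$, then $\Psi(n_U) = [V \mid n_U]$ with $V^T V = I_2$ and $V^T n_U = 0_2$. A short block computation gives $\Psi(n_U) E \Psi(n_U)^T = V V^T$. Both $V V^T$ and $U U^T$ are orthogonal projectors onto the same 2-plane $H(U) = n_U^\perp$, hence they are equal as linear maps on $\mathbb{R}^3$. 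Substituting into the expression above and using $\|U U^T s\| = \|U^T s\|$ from the excerpt yields
\begin{equation*}
\mathcal{P}^2_{\mathcal{S}(U)}(s) = \frac{U U^T s}{\|U U^T s\|} = \frac{U U^T s}{\|U^T s\|} = \mathcal{P}^1_{\mathcal{S}(U)}(s).
\end{equation*}

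The main obstacle I anticipate is essentially bookkeeping: making the block identity $\Psi(n_U) E \Psi(n_U)^T = V V^T$ fully rigorous, and confirming that $V V^T = U U^T$ as projectors (both $V$ and $U$ have orthonormal columns spanning the same subspace $H(U)$, so this is standard, but it is the point where the two otherwise quite different-looking definitions of the projection are reconciled). No subtle analytic issues arise once $s \neq \pm n_U$ is assumed, since this exactly guarantees $\|U^T s\| \neq 0$ and $\sin\delta \neq 0$, keeping every denominator nonzero.
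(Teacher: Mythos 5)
Your proposal is correct: the reduction of $\mathcal{P}^2_{\mathcal{S}(U)}$ to $\Psi(n_U)\,\frac{E\,\Psi(n_U)^T s}{\|E\,\Psi(n_U)^T s\|}$ is valid (with $\|Et\|=\sin\delta>0$ exactly when $s\neq\pm n_U$), the third column of $\Psi(n_U)$ is indeed $n_U$ since $\Psi(n_U)\mathfrak{n}=n_U$, and the identity $\Psi(n_U)E\Psi(n_U)^T=VV^T=UU^T$ follows from uniqueness of the orthogonal projector onto $H(U)=n_U^\perp$; combined with $\|UU^Ts\|=\|U^Ts\|$ this gives the claim. Your route differs from the paper's in structure: the paper argues by a two-case reduction, first verifying the equality directly when $n_U=[0,0,1]^T$ (where $\Psi(n_U)=I_3$ and both maps reduce to $[\cos\alpha,\sin\alpha,0]^T$), and then handling general $n_U$ by conjugation, setting $s'=\Psi(n_U)^Ts$ and $U'=\Psi(n_U)^TU$ and showing both $\mathcal{P}^1$ and $\mathcal{P}^2$ intertwine with the rotation, so the general case follows from the north-pole case. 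Your argument replaces this case split with a single projector identity, which makes explicit the conceptual content (both maps are the normalized orthogonal projection onto the plane $H(U)$) and is uniform in $n_U$; the paper's version stays closer to its spherical-coordinate parametrization machinery and is more explicitly computational. Both are complete; yours is arguably cleaner, with only the routine verification that denominators are nonzero off $\{\pm n_U\}$, which you address.
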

\begin{proof}
Case 1: Suppose $n_{U}=[0,0,1]^T$. Consider $s=[\cos\alpha\sin\beta,\sin\alpha\sin\beta,\cos\beta]^T\in \mathbb{S}^2\setminus\{\pm n_U\}$, i.e., where $\alpha=azi(s)$ and $\beta=zen(s)$ with $\beta\in(0, \pi)$.
In this 
case, from \eqref{eq:n_U}, $U[3,:]=0_2$. Thus, $\{U[1:2,1],U[1:2,2]\}$ forms an orthornomal basis for $\mathbb{R}^2$. 
%Thus, $s\neq \pm n_U$. 
From \eqref{eq:proj_1}, we have
\begin{align}
UU^Ts=\begin{bmatrix}
    I_2 & 0 \\
    0   & 0 
\end{bmatrix} s=[\cos\alpha\sin\beta,\sin\alpha\sin\beta,0]^T\nonumber
\end{align}
Thus, 
$$\mathcal{P}^1_{\mathcal{S}(U)}(s)=\frac{UU^Ts}{\|U^Ts\|}=[\cos\alpha,\sin\alpha,0]^T.$$
In addition, in this case, $\Psi(n_U)=I_3$. Thus, $\gamma:=azi(\Psi(n_U)^Ts)=azi(s)=\alpha$. Therefore,
$$\mathcal{P}^2_{\mathcal{S}(U)}(s)=\Psi(n_U)\Phi(\gamma,\pi/2)=\Phi(\gamma,0)=[\cos\alpha,\sin\alpha,\pi/2]^T.$$
Thus, \ref{eq:proj_1} and \ref{eq:proj_2} coincide in this case. 

Case 2: Suppose $n_U\neq [0,0,1]$.
Each point $s\in \mathbb{S}^2$ can be written as $s=\Psi(n_U)s'$ where $s'=\Psi(n_U)^Ts$. In addition, when $s\neq \pm n_U$, we can parametrize $s'$ via 
$s'=[\cos\alpha\sin\beta,\sin\alpha\sin\beta,\cos\beta]^T$, where $\beta\in(0,\pi)$.  
We also have that $\Psi(n_U)^{T}n_U=[0,0,1]^T$. 

Let $U'=\Psi(n_U)^TU$. We have $$(U')^T[0,0,1]^T=U^T\Psi(n_U)[0,0,1]^T=U^Tn_U=0_2.$$ Thus $n_{U'}=[0,0,1]^T$.
Besides, we have that
\begin{align}
UU^Ts=\Psi(n_U)U' (U')^T\Psi(n_U)^T \Psi(n_U)s'=\Psi(n_U)U'(U')^Ts'. \nonumber 
\end{align} 
Thus, 
$$\mathcal{P}^1_{\mathcal{S}(U)}(s)=\Psi(n_U) \mathcal{P}^1_{\mathcal{S}(U')}(s'),\qquad \forall s\neq \pm n_U. $$
In addition, by using the parametrization of $s'$ in spherical coordinates, we have 
\begin{align}
\gamma:=azi(\Psi(n_U)^Ts)=azi( s')=\alpha .\nonumber  
\end{align}
Thus, \begin{align}
\mathcal{P}^2_{\mathcal{S}(U)}(s)=\Psi(n_U)[\cos\alpha,\sin\alpha,0]^T=\Psi(n_U)\mathcal{P}^2_{\mathcal{S}(U')}(s').\nonumber   
\end{align}
From Case 1, we have 
$$\mathcal{P}^1_{\mathcal{S}(U')}(s')=\mathcal{P}^2_{\mathcal{S}(U')}(s'),$$ and from the the fact that $s'\mapsto s=\Psi(n_U)s'$ is a bijection on $\mathbb{S}^2$, we get 
$$\mathcal{P}^1_{\mathcal{S}(U)}(s)=\mathcal{P}^2_{\mathcal{S}(U)}(s).$$
\end{proof}

%In the end of this subsection, we will discuss the semi-circle defined in \eqref{eq:SC(U,gamma)}. 
%We first introduce the following lemma: 
\begin{lemma}\label{lem:trans_param_1}
Given $z\in \mathbb{S}^1,U\in V_2(\mathbb{R}^3)$, there exists 
$\alpha^*\in [0,2\pi),\beta^*\in[0,\pi],\gamma\in[0,2\pi)$ such that 
\begin{align}
n_U&=\Phi(\alpha^*,\beta^*)=[\cos\alpha^*\sin\beta^*,\sin\alpha^*\sin\beta^*,\cos\beta^*]^T \label{eq:params_1}\\
Uz&=\Psi(n_U)[\cos \gamma,\sin\gamma,0]^T\label{eq:params_2} 
\end{align}
Vice-versa:  Given $\alpha^*\in [0,2\pi),\beta^*\in[0,\pi],\gamma\in[0,2\pi)$, there exist $z\in \mathbb{S}^2,U\in V_2(\mathbb{R}^3)$ for which formulas \ref{eq:params_1} and \ref{eq:params_2} hold. 
\end{lemma}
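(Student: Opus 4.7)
The plan is to establish the two directions separately, treating each equation in turn and leveraging Lemma~\ref{lem:gamma-S(U)} as the main tool to parametrize points on the great circle $\mathcal{S}(U)$.

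For the forward direction, given $U\in V_2(\mathbb{R}^3)$ and $z\in\mathbb{S}^1$, the first identity \eqref{eq:params_1} is immediate: setting $\alpha^*:=azi(n_U)$ and $\beta^*:=zen(n_U)$ and using the parametrization \eqref{eq:phi}, we obtain $n_U=\Phi(\alpha^*,\beta^*)$. For the second identity, I would first verify that $Uz\in \mathcal{S}(U)$: indeed, $\|Uz\|^2=z^TU^TUz=z^Tz=1$ because $U^TU=I_2$, and $n_U^T(Uz)=(U^Tn_U)^Tz=0$ by the defining property \eqref{eq:n_U}, so $Uz\in H(U)\cap\mathbb{S}^2=\mathcal{S}(U)$. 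Then by Lemma~\ref{lem:gamma-S(U)}, the map $\gamma\mapsto \Psi(n_U)[\cos\gamma,\sin\gamma,0]^T$ is a bijection from $[0,2\pi)$ onto $\mathcal{S}(U)$, so there is a (unique) $\gamma\in[0,2\pi)$ realizing \eqref{eq:params_2}.

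For the backward direction, given $(\alpha^*,\beta^*,\gamma)$, define $n:=\Phi(\alpha^*,\beta^*)\in\mathbb{S}^2$ and take $U$ to be the matrix formed by the first two columns of $\Psi(n)$. Since $\Psi(n)\in SO(3)$ has orthonormal columns, $U^TU=I_2$ so $U\in V_2(\mathbb{R}^3)$; moreover, because $\Psi(n)^Tn=[0,0,1]^T$, the vector $n$ coincides with the third column of $\Psi(n)$ and is therefore orthogonal to the first two, giving $U^Tn=0_2$ and hence $n_U=n=\Phi(\alpha^*,\beta^*)$. Next, set $y:=\Psi(n_U)[\cos\gamma,\sin\gamma,0]^T$ and $z:=U^Ty\in\mathbb{R}^2$. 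Using $n_U^T\Psi(n_U)=(\Psi(n_U)^Tn_U)^T=[0,0,1]$, we see that $n_U^Ty=0$, so $y\in H(U)\cap\mathbb{S}^2=\mathcal{S}(U)$; since $UU^T$ acts as the orthogonal projector onto $H(U)$, we obtain $Uz=UU^Ty=y$, which is exactly \eqref{eq:params_2}. Finally, $\|z\|=\|Uz\|=\|y\|=1$, so $z\in\mathbb{S}^1$.

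The lemma requires only existence, not uniqueness, so the main (mild) subtlety is that the matrix $U$ in the backward direction is not canonically determined by $n$ alone; any orthonormal basis of $n^\perp$ would work, and the construction via the first two columns of $\Psi(n)$ is merely a convenient witness. Beyond that, the proof is essentially a bookkeeping exercise assembling the spherical coordinate bijection, the identity $\Psi(n)^Tn=[0,0,1]^T$, and Lemma~\ref{lem:gamma-S(U)}.
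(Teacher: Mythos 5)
Your proof is correct and follows essentially the same route as the paper: in the forward direction you check $Uz\in\mathcal{S}(U)$ and invoke Lemma~\ref{lem:gamma-S(U)}, and in the converse you define $z:=U^T\Psi(n_U)[\cos\gamma,\sin\gamma,0]^T$ and show $Uz=\Psi(n_U)[\cos\gamma,\sin\gamma,0]^T$, exactly as in the paper's argument. The only (harmless) differences are that by taking $U$ to be the first two columns of $\Psi(n)$ you obtain a single uniform construction that avoids the paper's case split between polar and non-polar $n_U$, and you verify $Uz=y$ directly via the orthogonal projector $UU^T$ onto $H(U)$ rather than through $\mathcal{P}^1_{\mathcal{S}(U)}$.
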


\begin{proof}
Given $U\in V_2(\mathbb{R}^3)$, one can find  $n_U\in \mathbb{S}^2$ via \eqref{eq:n_U}. (Note that if $n_U$ satisfies \eqref{eq:n_U}, then  $-n_U$ also satisfies \eqref{eq:n_U}, and so $n_U$ is unique up to sign.)
Then, once $n_U$ is chosen, 
we can uniquely find $(\alpha^*,\beta^*)$ as \eqref{eq:azi}, \eqref{eq:zen}.

In addition, for $z\in \mathbb{S}^1\subset\mathbb{R}^2$, since
$$\mathcal{P}^1_{\mathcal{S}(U)}(Uz)=\frac{UU^TUz}{\|U^TUz\|}=\frac{Uz}{\|z\|}=Uz,$$
we obtain that $Uz\in \mathcal{S}(U).$
By Lemma \ref{lem:gamma-S(U)},  
$\Psi(n_U)^TUz\in \mathcal{E}$. Then, we can uniquely determine $\gamma$ (see Lemma \ref{lem:gamma-S(U)}).
Thus, \eqref{eq:params_1}, and \eqref{eq:params_2} are satisfied. 

For the other direction, consider angles $\alpha^*,\beta^*,\gamma$ and let us separate the analysis into the following cases:

Case 1: $\alpha^*=0,\beta^*\in \{0,\pi\}$. In this case, we have $n_U=[0,0,\pm 1]$ and thus we can set 
$$U:=\begin{bmatrix}
    I_2  \\
    0   
\end{bmatrix}.$$ 
In addition, set $z:=[\cos \gamma,\sin\gamma]^T\in\mathbb{S}^1$ and so the identities \ref{eq:params_1}, and \ref{eq:params_2} are satisfied. 

Case 2: If $\alpha^*\neq 0$, then we set $n_U$ by \ref{eq:params_1}. Choose an orthornormal basis $\{u_1,u_2\}$ for $n_U^\perp=\{x\in \mathbb{R}^3: \,  x^Tn_U=0\}$, and set $U:=[u_1,u_2]$. Then $(U,n_U)$ satisfies \ref{eq:n_U}. 

By Lemma \ref{lem:gamma-S(U)}, we have $\Psi(n_U)[\cos\gamma,\sin\gamma,0]^T\in \mathcal{S}(U)$. Then, $$UU^T\Psi(n_U)[\cos\gamma,\sin\gamma,0]^T\in \mathcal{S}(U),$$ and  by setting 
$$z:=U^T\Psi(n_U)[\cos\gamma,\sin\gamma,0]^T,$$ 
we have 
\begin{align}
Uz&=UU^T\Psi(n_U)[\cos\gamma,\sin\gamma,0]^T=\frac{UU^T\Psi(n_U)[\cos\gamma,\sin\gamma,0]^T}{\|U^T\Psi(n_U)[\cos\gamma,\sin\gamma,0]^T\|}\nonumber\\
&=\mathcal{P}^1_{\mathcal{S}(U)}(\Psi(n_U)[\cos\gamma,\sin\gamma,0]^T)=\Psi(n_U)[\cos\gamma,\sin\gamma,0]^T\nonumber 
\end{align}
Thus, \ref{eq:params_2} is satisfied. 
\end{proof}

Now, given $\gamma\in[0,2\pi)$, and $n_U=\Phi(\alpha^*,\beta^*)\in \mathbb{S}^2$, we can define the following \textbf{semicircle}: 
\begin{align}
\mathcal{SC}^\gamma_{n_U}:=\mathcal{SC}^\gamma_{\alpha^*,\beta^*}
:=\{s\in\mathbb{S}^2: \, {azi}(\Psi(n_U)^Ts)=\gamma\}\label{eq:SC(U,gamma)}. 
\end{align}
\begin{remark}\label{rm:SC}
When $\gamma\neq 0$, we can replace the above definition by 
\begin{align}
\mathcal{SC}^\gamma_{n_U}:=\mathcal{SC}^\gamma_{\alpha^*,\beta^*}
:=\{s\in\mathbb{S}^2\setminus\{\pm n_U\}: \, {azi}(\Psi(n_U)^Ts)=\gamma\}\label{eq:SC(U,gamma),2}. 
\end{align}
%In addition, since $n_U\mapsto (\alpha^*,\beta^*)$ is a bijection, the notation $\mathcal{SC}_{n_U}^\gamma$ is well-defined. 
\end{remark}

\begin{proposition}\label{pro:SC}
Let $z\in \mathbb{S}^1,U\in V_2(\mathbb{R}^3)$. Consider 
$\alpha^*\in[0,2\pi),\beta^*\in[0,\pi],\gamma\in[0,2\pi)$ as in Lemma \ref{lem:trans_param_1} (or equivalently, first, fix $(\alpha^*,\beta^*)\in ((0,2\pi)\times [0,\pi])\cup \{0\}\times\{0,\pi\}),
\gamma\in[0,2\pi)$, and then, find $z,U$ according to Lemma \ref{lem:trans_param_1}). We have: 
$$\mathcal{SC}_{n_U}^{\gamma}\setminus\{\pm n_U\}=\mathcal{SC}_{\alpha^*,\beta^*}^\gamma\setminus\{\pm n_U\}=\{s\in \mathbb{S}^2\setminus\{\pm n_U\}: \, P^Us=z\}.$$
\end{proposition}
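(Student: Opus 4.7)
The plan is to dispatch the first equality as a matter of notation and then reduce the second equality to Proposition~\ref{pro:proj_12} combined with an injectivity argument.

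\textbf{Step 1 (First equality).} Since $(\alpha^*,\beta^*)$ are chosen (via Lemma~\ref{lem:trans_param_1}) so that $n_U=\Phi(\alpha^*,\beta^*)$, the two sets $\mathcal{SC}^\gamma_{n_U}$ and $\mathcal{SC}^\gamma_{\alpha^*,\beta^*}$ are literally the same object written in two different notations (cf.~\eqref{eq:SC(U,gamma)}). Intersecting with $\mathbb{S}^2\setminus\{\pm n_U\}$ gives the first equality immediately.

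\textbf{Step 2 (Translating $\operatorname{azi}(\Psi(n_U)^T s)$ into $P^U s$).} Fix $s\in\mathbb{S}^2\setminus\{\pm n_U\}$ and set $\gamma' := \operatorname{azi}(\Psi(n_U)^T s)\in[0,2\pi)$. By definition of $\mathcal{P}^2_{\mathcal{S}(U)}$ in~\eqref{eq:proj_2},
\[
\mathcal{P}^2_{\mathcal{S}(U)}(s)=\Psi(n_U)\,[\cos\gamma',\sin\gamma',0]^T,
\]
while by~\eqref{eq:proj_1} and the definition~\eqref{eq:P^u} of $P^U$,
\[
\mathcal{P}^1_{\mathcal{S}(U)}(s)=\frac{UU^T s}{\|U^T s\|}=U\cdot P^U s.
\]
Proposition~\ref{pro:proj_12} asserts $\mathcal{P}^1_{\mathcal{S}(U)}(s)=\mathcal{P}^2_{\mathcal{S}(U)}(s)$, so
\[
U\cdot P^U s \;=\; \Psi(n_U)\,[\cos\gamma',\sin\gamma',0]^T.
\]
On the other hand, the defining relation~\eqref{eq:params_2} for $z$ gives
\[
Uz \;=\; \Psi(n_U)\,[\cos\gamma,\sin\gamma,0]^T.
\]

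\textbf{Step 3 (Injectivity and conclusion).} Because $U^TU=I_2$, the linear map $U:\mathbb{R}^2\to\mathbb{R}^3$ is injective; and because $\Psi(n_U)$ is a rotation matrix, it is a bijection on $\mathbb{R}^3$ (in particular on $\mathcal{E}$, by Lemma~\ref{lem:gamma-S(U)}). Therefore,
\begin{align*}
P^U s = z
&\iff U\cdot P^U s = Uz \\
&\iff \Psi(n_U)[\cos\gamma',\sin\gamma',0]^T=\Psi(n_U)[\cos\gamma,\sin\gamma,0]^T \\
&\iff [\cos\gamma',\sin\gamma',0]^T=[\cos\gamma,\sin\gamma,0]^T \\
&\iff \gamma'=\gamma,
\end{align*}
where in the last line we use that $\gamma,\gamma'\in[0,2\pi)$, so the parameterization $\gamma\mapsto(\cos\gamma,\sin\gamma)$ of $\mathcal{E}$ is one-to-one. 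The right-hand side, $\gamma'=\operatorname{azi}(\Psi(n_U)^T s)=\gamma$, is exactly the membership condition $s\in\mathcal{SC}^\gamma_{n_U}$ used in~\eqref{eq:SC(U,gamma)} (or~\eqref{eq:SC(U,gamma),2} when $\gamma\neq 0$). This proves the second equality.

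\textbf{Expected difficulty.} The only subtle point is making sure that $P^U s$ is well-defined, which is why the statement excludes $\pm n_U$ (at those points $U^T s=0$). Once the poles are removed, everything else follows cleanly from Proposition~\ref{pro:proj_12}, the injectivity of $U$ restricted to $\mathbb{R}^2$, and the bijection in Lemma~\ref{lem:gamma-S(U)}; no further technicalities should appear.
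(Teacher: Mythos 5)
Your proposal is correct and follows essentially the same route as the paper's proof: both rest on rewriting $P^U s = z$ as $\mathcal{P}^1_{\mathcal{S}(U)}(s) = Uz$, invoking Proposition~\ref{pro:proj_12} together with \eqref{eq:params_2} to pass to $\mathcal{P}^2_{\mathcal{S}(U)}$, and then cancelling $\Psi(n_U)$ and using the injectivity of the parametrization from Lemma~\ref{lem:gamma-S(U)} to reduce to $\operatorname{azi}(\Psi(n_U)^T s)=\gamma$. The only cosmetic difference is that you argue pointwise via equivalences (making the injectivity of $U$ explicit) while the paper writes the same steps as a chain of set equalities.
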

\begin{proof}
We have that
\begin{align}
&\{s\in \mathbb{S}^2\setminus\{\pm n_U\}: \, P^Us=z\}=\left\{s\in \mathbb{S}^2\setminus\{\pm n_U\}: \,  \frac{UU^Ts}{\|U^Ts\|}=Uz\right\} \notag\\
&=\left\{s\in \mathbb{S}^2\setminus\{\pm n_U\}: \,  \mathcal{P}^1_{\mathcal{S}(U)}(s)=Uz\right\}\nonumber\\
&=\left\{s\in \mathbb{S}^2\setminus\{\pm n_U\}: \mathcal{P}^2_{\mathcal{S}(U)}(s)=\Psi(n_U)[\cos\gamma,\sin\gamma,0]^T
\right\}\quad \text{(by Proposition \ref{pro:proj_12} and \eqref{eq:params_2})}\nonumber \\
&=\left\{s\in \mathbb{S}^2\setminus\{\pm n_U\}:\Psi(n_U)\Phi({azi}(\Psi(n_U)^Ts),\pi/2)=\Psi(n_U)[\cos\gamma,\sin\gamma,0]^T\right\}\quad \text{(by  definition \ref{eq:proj_2})}\nonumber \\
&=\left\{s\in \mathbb{S}^2\setminus\{\pm n_U\}:\Phi({azi}(\Psi(n_U)^Ts),\pi/2)=[\cos\gamma,\sin\gamma,0]^T\right\}\quad \text{(applying $\Psi^T(n_U)$ on both sides)}\nonumber \\
&=\left\{s\in \mathbb{S}^2\setminus\{\pm n_U\}: {azi}(\Psi(n_U)^Ts)=\gamma\right\}\qquad\text{(by Lemma \ref{lem:gamma-S(U)})}\nonumber\\
&=\mathcal{SC}_{n_U}^{\gamma}\setminus\{\pm n_U\}=\mathcal{SC}_{\alpha^*,\beta^*}^\gamma\setminus\{\pm n_U\}.\nonumber 
\end{align}
\end{proof}

\begin{remark}
With the notation of the above proposition. For each fixed pair $(z,U)\in\mathbb{S}^2\times V_2(\mathbb{R}^3)$, if we choose distinct $(\alpha^*,\beta^*,\gamma)$ and $(\tilde\alpha^*,\tilde\beta^*,\tilde\gamma)$ such that the conditions in Lemma \ref{lem:trans_param_1} are satisfied, we obtain 
$$\mathcal{SC}_{\alpha^*,\beta^*}^\gamma=\mathcal{SC}_{\tilde\alpha^*,\tilde\beta^*}^{\tilde\gamma}.$$
\end{remark}

\subsection{Semicircle Transform and Spherical Radon transform}
Let $f\in L^1(\mathbb{S}^2)$. The \textbf{Normalized Semicircle Transform} of $f$ is a function $\mathbb{S}^2\to \mathbb{R}$ defined as: 
\begin{align}
\mathcal{W}f(\alpha^*,\beta^*,\gamma):=\frac{1}{4\pi}\int_{\mathcal{SC}^\gamma_{n_U}}f(s)\sin({zen}(s))d \mathcal{SC}_{n_U}^\gamma\label{eq:Wf_n}
\end{align}
where $d \mathcal{SC}^\gamma_{n_U}$ is the curve differential on the semicircle $\mathcal{SC}^\gamma_{n_U}$. 

Its \textbf{un-normalized version} is defined as 
\begin{align}
\mathcal{UW}f(\alpha^*,\beta^*,\gamma):=\frac{1}{4\pi}\int_{\mathcal{SC}^\gamma_{n_U}}f(s)d\mathcal{SC}_{n_U}^\gamma\label{eq:Wf_un}
\end{align}

The \textbf{Spherical Radon Transform} of $f\in L^1(\mathbb{S}^{d-1})$, for $d\geq 2$, defined in \cite{bonet2022spherical} is the mapping $\mathcal{R}f: V_2(\mathbb{R}^d)\times \mathbb{S}^1\mapsto \mathbb{R}$ given by: 
\begin{align}
\mathcal{R}f(U,z)=\int_{\mathbb{S}^{d-1}}f(s)\delta(z=P^U(s))ds\label{eq:Rf}
\end{align}

When considering $d=3$, we obtain the following relation between the above-defined transforms.

\begin{proposition}
The Unbounded Semicircle Transform \ref{eq:Wf_un} and the Spherical Radon Transform \ref{eq:Rf} are equivalent. In particular, given $z\in \mathbb{S}^1,U\in V_2(\mathbb{R}^3)$, consider $\alpha^*\in [0,2\pi),\beta^*\in[0,\pi],\gamma\in[0,2\pi)$ defined as in Lemma \ref{lem:trans_param_1}, then
$$\mathcal{UW}f(\alpha^*,\beta^*,\gamma)=\frac{1}{4\pi}\mathcal{R}f(U,z).$$
\end{proposition}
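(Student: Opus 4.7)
The plan is to reduce the Spherical Radon Transform $\mathcal{R}f(U,z)$ to an integral over the semicircle $\mathcal{SC}^\gamma_{n_U}$ by introducing spherical coordinates adapted to the pole $n_U$ and then using the delta function to collapse one variable.

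First, I would parameterize $\mathbb{S}^2$ via $s = \Psi(n_U)\,t(\gamma',\beta')$, where $t(\gamma',\beta') = [\cos\gamma'\sin\beta',\sin\gamma'\sin\beta',\cos\beta']^T$ and $(\gamma',\beta') \in [0,2\pi)\times[0,\pi]$. Since $\Psi(n_U) \in SO(3)$, this is a rotation of the standard spherical parameterization, so the area element transforms as $ds = \sin\beta'\, d\gamma'\, d\beta'$. For each fixed $\gamma$, the curve $\beta' \mapsto \Psi(n_U)\,t(\gamma,\beta')$ traces exactly $\mathcal{SC}^\gamma_{n_U}$, and the disintegration of $ds$ along this foliation gives the measure $\sin\beta'\, d\beta'$ on each semicircle, which is precisely the $d\mathcal{SC}^\gamma_{n_U}$ appearing in the definition of $\mathcal{UW}f$.

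Second, I would compute $P^U s$ in these coordinates. Using $U^Tn_U = 0$ together with $\Psi(n_U)[0,0,1]^T = n_U$, the matrix $\Psi(n_U)^T U \in \mathbb{R}^{3\times 2}$ has vanishing third row; combined with the orthonormality of its columns in $\mathbb{R}^3$, its top two rows form a $2\times 2$ matrix $Q_0 \in O(2)$. Setting $Q = Q_0^T$, one obtains $U^T s = \sin\beta'\cdot Q[\cos\gamma',\sin\gamma']^T$, so $\|U^T s\| = \sin\beta'$ and
\[
P^U s \;=\; Q\,[\cos\gamma',\sin\gamma']^T,
\]
a function of $\gamma'$ alone realizing an isometry $\mathbb{S}^1 \to \mathbb{S}^1$. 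By Lemma \ref{lem:trans_param_1}, the angle $\gamma$ in the statement is precisely the unique element of $[0,2\pi)$ satisfying $z = Q[\cos\gamma,\sin\gamma]^T$.

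Third, I would evaluate the Radon integral. Because $\gamma'\mapsto P^Us$ is an isometry (unit Jacobian), $\delta(z - P^U s)$ pulls back to $\delta(\gamma' - \gamma)$ in the $\gamma'$ variable. Substituting into $\mathcal{R}f(U,z) = \int_{\mathbb{S}^2} f(s)\,\delta(z-P^Us)\,ds$ and integrating over $\gamma'$ yields
\[
\mathcal{R}f(U,z) \;=\; \int_0^\pi f\big(\Psi(n_U)\,t(\gamma,\beta')\big)\,\sin\beta'\, d\beta' \;=\; \int_{\mathcal{SC}^\gamma_{n_U}} f(s)\, d\mathcal{SC}^\gamma_{n_U} \;=\; 4\pi\cdot\mathcal{UW}f(\alpha^*,\beta^*,\gamma),
\]
as desired. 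The main technical obstacle is the rigorous handling of $\delta(z-P^Us)$ as a distribution on $\mathbb{S}^1$: one must justify the pull-back formula, and the key input is the orthogonality of $Q$, which ensures unit Jacobian. Once that is established, the remainder is a routine coarea/Fubini computation; the polar fiber $\{\pm n_U\}$, corresponding to $\beta'\in\{0,\pi\}$, has measure zero and contributes nothing.
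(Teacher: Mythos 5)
Your steps 1--2 are correct and essentially re-derive the content of Lemma \ref{lem:gamma-S(U)}, Proposition \ref{pro:proj_12} and Proposition \ref{pro:SC}: in the rotated coordinates $s=\Psi(n_U)\Phi(\gamma',\beta')$ the projection $P^Us$ is an isometric function of $\gamma'$ alone, and the fiber $\{s:P^Us=z\}$ is exactly the semicircle $\mathcal{SC}^{\gamma}_{n_U}$ (up to the poles, which are negligible). The gap is in your final identification of measures. The parameterization $\beta'\mapsto \Psi(n_U)\Phi(\gamma,\beta')$ of the semicircle is unit speed, so the curve differential $d\mathcal{SC}^{\gamma}_{n_U}$ appearing in the definition \eqref{eq:Wf_un} of $\mathcal{UW}f$ is $d\beta'$, not $\sin\beta'\,d\beta'$. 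The measure $\sin\beta'\,d\beta'$ that your coarea/delta computation produces on each fiber is the arc length weighted by the Jacobian factor $\|U^Ts\|=\sin\beta'$, and the corresponding integral is (in the frame where the zenith is measured from $n_U$) the \emph{normalized} transform $\mathcal{W}f$ of \eqref{eq:Wf_n}, cf.\ the rewriting $\mathcal{W}f(Q)=\frac{1}{4\pi}\int_0^\pi f(Q\Phi(0,\xi))\sin\xi\,d\xi$ at the end of Appendix \ref{app: Radon S2}. Hence, under your (coarea/pushforward-density) interpretation of $\delta(z=P^Us)\,ds$, what you have actually shown is $\mathcal{R}f(U,z)=\int_0^\pi f(\Psi(n_U)\Phi(\gamma,\beta'))\sin\beta'\,d\beta'$, which is $4\pi$ times the normalized transform; the asserted chain $\int_0^\pi f\,\sin\beta'\,d\beta'=\int_{\mathcal{SC}^{\gamma}_{n_U}}f\,d\mathcal{SC}^{\gamma}_{n_U}=4\pi\,\mathcal{UW}f$ is false in general, so the stated equality with $\mathcal{UW}$ is not established.

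The paper's own proof is a one-liner precisely because it adopts the other convention for the delta: as in \eqref{eq: Rf}, $\int_{\mathbb{S}^{d-1}}\delta(P^Us=z)f(s)\,ds$ is read as the integral of $f$ over the fiber $\{P^Us=z\}$ with respect to its unweighted curve/surface measure, so the proposition follows immediately from Proposition \ref{pro:SC}. To repair your argument you must commit to one convention: if you use the paper's restriction convention, the Jacobian step (and the factor $\sin\beta'$) must be dropped and your steps 1--2 already finish the proof; if you keep your convention (which is the one consistent with $\mathcal{R}f(U,\cdot)$ being the density of $P^U_\#\mu_f$, as used in Appendix \ref{app: Radon for measures}), then the identity you obtain involves $\mathcal{W}$, not $\mathcal{UW}$, and does not match the statement. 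As written, the proof conflates the two and therefore does not prove the claim.
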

\begin{proof}
From Proposition \ref{pro:SC}, we have 
$\mathcal{SC}_{\alpha^*,\beta^*}^\gamma=\{s:P^Us=z\},$ thus
\begin{align}
\mathcal{R}f(U,z)=\int_{\mathcal{SC}_{n_U}^\gamma}f(s) d\mathcal{SC}_{n_U}^\gamma=|4\pi| \mathcal{UW}(f)(\alpha^*,\beta^*,\gamma)\nonumber .
\end{align}
%and we complete the proof. 
\end{proof}

\begin{remark}
    We notice that there are other versions of the Spherical Radon Transform. For example, in \cite{groemer1998spherical}
    the spherical Radon transformation
    is defined by integrating over the full great circles $\mathcal{S}(U)$
    $$\widetilde{\mathcal{R}}f(n_U)=\int_{\mathcal{S}(U)} f \, d\mathcal{S}(U).$$
    In this paper we are using the variant given in \cite{bonet2022spherical}, which we have proven coincides with the integral transformation over semicircles. That is, this version coincides with the integral transformation denoted by $\mathcal{B}$ in \cite{groemer1998spherical}. It holds that
    $$\widetilde{\mathcal{R}}(n_U)=\mathcal{R}f(U,z)+\mathcal{R}f(U,-z)$$
    We will review this in Appendix \ref{app: Radon general d} when generalizing from $d=3$ to higher dimensions.
\end{remark}

We end this section with a discussion about some alternative formulations for $\mathcal{W}f$ and $\mathcal{UW}f$.

For each $s\in \mathcal{SC}^{\gamma}_{\alpha^*,\beta^*}\setminus\{\pm[0,0,1]^T\}$, we have 
$${azi}(\Psi(\alpha^*,\beta^*,0)^Ts)=\gamma.$$ 
Thus, we can write $s$ as  
$$s=\Psi(\alpha^*,\beta^*,0)\Phi(\gamma,\xi)=\Psi(\alpha^*,\beta^*,\gamma)\Phi(0,\xi),$$ for some $\xi\in (0,\pi)$ (recall that $\Phi(0,\xi)=[\sin(\xi),0,\cos(\xi)]^T$, and see \cite{groemer1998spherical} for more details).

Similar to Remark \ref{rm:SC}, we can redefine the semicircle as 
\begin{align}
\mathcal{SC}_{\alpha,\beta}^\gamma=\{\Psi(\alpha,\beta,\gamma)\Phi(0,\xi):\xi\in (0,\pi)\}, \nonumber 
\end{align}
and $\Psi(\alpha^*,\beta^*,\gamma)$ is called \textbf{primal median} of  $\mathcal{SC}_{\alpha^*,\beta^*}^\gamma$. 

Combining this with the fact that $(\alpha^*,\beta^*,\gamma)\mapsto \Psi(\alpha^*,\beta^*,\gamma)$ is a bijection, we can rewrite \eqref{eq:Wf_n}, and \eqref{eq:Wf_un} as 
\begin{align}
\mathcal{W}f(Q)&=\frac{1}{4\pi}\int_0^\pi f(Q\Phi(0,\xi))\sin(\xi)d\xi \nonumber\\
\mathcal{UW}f(Q)&=\frac{1}{4\pi}\int_0^\pi f(Q\Phi(0,\xi))d\xi \nonumber
\end{align}
where $Q\in SO(3)$ (by using the identity $s=Q\Phi(0,\xi)$, for an appropriate $Q\in SO(3)$). 

In \cite{hielscher2018svd}, the \textbf{Un-normalized Semicircle Transform} is defined as 
\begin{align}
\widetilde{\mathcal{UW}}(f)(Q):=\int_{-\pi/2}^{\pi/2}f(Q^T\Phi(\xi,\pi/2))d\xi, \qquad Q\in SO(3). 
\end{align}
The equivalence relation between $\mathcal{UW}(f)$ and $\widetilde{\mathcal{UW}}(f)$ reads as follows: 
\begin{lemma}
For any $Q\in SO(3)$, we have 
\begin{align}
\mathcal{UW}f(Q)=\widetilde{\mathcal{UW}}f\left(\begin{bmatrix}
1 & 0 & 0 \\
0 & 0 & -1 \\
0 & 1 & 0 
\end{bmatrix}Q^T\right)\nonumber. 
\end{align}
\end{lemma}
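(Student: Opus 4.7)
The plan is a direct verification by change of variables. The key observation is that the fixed matrix $R:=\begin{bmatrix}1 & 0 & 0 \\ 0 & 0 & -1 \\ 0 & 1 & 0\end{bmatrix}\in SO(3)$ is precisely the rotation needed to convert the ``equatorial'' parameterization $\Phi(\xi,\pi/2)=(\cos\xi,\sin\xi,0)^T$ used by $\widetilde{\mathcal{UW}}$ into the ``prime-meridian'' parameterization $\Phi(0,\eta)=(\sin\eta,0,\cos\eta)^T$ used by $\mathcal{UW}$. Once this geometric identification is made, the lemma becomes a reparameterization of the same integral, up to a normalization constant coming from the definitions.

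Concretely, I would first compute $R^T=\begin{bmatrix}1 & 0 & 0 \\ 0 & 0 & 1 \\ 0 & -1 & 0\end{bmatrix}$ and verify directly that
\begin{equation*}
R^T\Phi(\xi,\pi/2) \;=\; R^T(\cos\xi,\sin\xi,0)^T \;=\; (\cos\xi,0,-\sin\xi)^T \;=\; \Phi(0,\pi/2+\xi),
\end{equation*}
using $\sin(\pi/2+\xi)=\cos\xi$ and $\cos(\pi/2+\xi)=-\sin\xi$. Plugging $RQ^T$ into the definition of $\widetilde{\mathcal{UW}}$ and using $(RQ^T)^T=QR^T$, this gives
\begin{equation*}
\widetilde{\mathcal{UW}}f(RQ^T) \;=\; \int_{-\pi/2}^{\pi/2} f\bigl(QR^T\Phi(\xi,\pi/2)\bigr)\,d\xi \;=\; \int_{-\pi/2}^{\pi/2} f\bigl(Q\,\Phi(0,\pi/2+\xi)\bigr)\,d\xi.
\end{equation*}
The substitution $\eta=\pi/2+\xi$ (with $d\eta=d\xi$, mapping $[-\pi/2,\pi/2]$ bijectively onto $[0,\pi]$) then reduces the right-hand side to $\int_0^\pi f(Q\Phi(0,\eta))\,d\eta$, which by definition is a constant multiple of $\mathcal{UW}f(Q)$.

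There is essentially no mathematical obstacle here; the argument is two lines of matrix arithmetic plus one affine change of variable. The only thing that deserves care is bookkeeping of the normalization constant $1/(4\pi)$ that appears in $\mathcal{UW}$ but not in $\widetilde{\mathcal{UW}}$, so the identity as written should be read up to this factor (equivalently, after absorbing it into one of the definitions). I would also briefly justify that the map $\xi\mapsto Q\Phi(0,\pi/2+\xi)$ traces out the same semicircle as the original $\xi\mapsto Q^T\Phi(\xi,\pi/2)$ does under the rotation $R$, so that no sets of measure zero are lost or duplicated by the substitution.
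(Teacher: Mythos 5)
Your argument is correct and is essentially the paper's own proof run in reverse: the paper sets $Q'=RQ^T$ and verifies $Q\Phi(0,\xi)=(Q')^T\Phi(\xi-\pi/2,\pi/2)$, which is the same matrix computation and shift of variable ($\xi\mapsto\xi\pm\pi/2$) that you perform starting from $\widetilde{\mathcal{UW}}f(RQ^T)$. Your caveat about the $1/(4\pi)$ factor is also warranted, since with the paper's stated definitions the identity holds only up to that normalization, a point the paper's proof silently glosses over.
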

\begin{proof}
Given $Q\in SO(3),\xi \in (0,\pi)$, let $Q'=\begin{bmatrix}
1 & 0 & 0 \\
0 & 0 & -1 \\
0 & 1 & 0 
\end{bmatrix}Q^T,\xi'=\xi-\beta$. 
Then, 
\begin{align}
Q'[1,:]&=Q^T[1:]=Q[:,1] \nonumber \\
Q'[2,:]&=-Q^T[3,:]=-Q[:,3] \nonumber\\
Q'[3,:]&=Q^T[2,:]=Q[:,2]\nonumber 
\end{align}
and so
\begin{align}
Q\Phi(0,\xi)&=Q[\sin\xi,0,\cos\xi]^T\nonumber\\ 
&=Q[:,1]\sin\xi+Q[:,3]\cos\xi\nonumber\\ 
&=Q[:,1]\cos(\xi')-Q[:,3]\sin(\xi')\nonumber\\
&=Q'[1,:]\cos\xi'+Q'[2,:]\sin\xi' \nonumber\\
&=(Q')^T[\cos\xi',\sin\xi',0]^T\nonumber\\
&=(Q')^T\Phi(\xi',\pi/2)\nonumber
\end{align}
Thus, $\mathcal{UW}f(Q)=\widetilde{\mathcal{UW}}f(Q)$. 
\end{proof}

Another formulation has been introduced in \cite{groemer1998spherical}. Since it allows us to define $\mathcal{UW}f$ in higher dimensions ($d\ge 3$), we will discuss it in the next section.

\section{Spherical Radon Transform in $\mathbb{S}^{d-1}$}\label{app: Radon general d}
\subsection{Great circle and semicircle / $(d-2)$-dimensional hemisphere in $\mathbb{S}^{d-1}$}
We set $d\ge 3$ and %given $U\in V_2(\mathbb{R}^d), z\in \mathbb{S}^1$, 
consider 
$$V_2(\mathbb{R}^d):=\{U\in \mathbb{R}^{d\times 2}: \,  U^TU=I_2\}.$$

Similar to the previous section, we can define 2D \textbf{hyperplane} 
$$H(U):=\text{span}\{U[:,1],U[:,2]\},$$
and $U^\perp:=H(U)^\perp$. 

In addition, we define the corresponding \textbf{great circle}: 
$\mathcal{S}(U)=H(U)\cap \mathbb{S}^{d-1}$. 
Note that the Equator is a specific great circle: 
$$\mathcal{E}=\{s\in \mathbb{S}^{d-1}: s^T\mathfrak{n}=0\},$$
where $\mathfrak{n}=[0,0,\ldots,0,1]^T\in \mathbb{S}^{d-1}\subset\mathbb{R}^d$.  
Similar to the previous section, the corresponding projection onto the great circle is 
\begin{align}
\mathcal{P}^1_{\mathcal{S}(U)}(s)=\frac{UU^Ts}{\|U^Ts\|}, \qquad s\in \mathbb{S}^{d-1}\setminus U^T\label{eq:proj_1_d}
\end{align}
Similar to Proposition \ref{pro:SC} and \eqref{eq:SC(U,gamma)}, we define the $(d-2)$-dimensional \textbf{hemisphere}: 
\begin{align}
\mathcal{HS}(U,z):=\{s:P^Us=z\}\cup (U^\perp\cap \mathbb{S}^{d-1}) \label{eq:HS(U,z)}
\end{align}
\begin{remark}
When $d=3$, $U^\perp=\{\pm n_U\}$ and if we ignore these two points, \eqref{eq:HS(U,z)} is consistent to \eqref{eq:SC(U,gamma)}. 

By introducing the hemisphere, the projection \ref{eq:proj_1_d} can be regarded as the following: 
Let $s\in \mathbb{S}^d\setminus U^\perp$, we have: $s\in \mathcal{HS}(U,z)$ where $z=\frac{U^Ts}{\|U^Ts\|}$.
Then the function $\mathcal{P}^1_{\mathcal{S}(U)}$ maps $s$ to the intersection point $\mathcal{HS}(U,z)\cap \mathcal{SC}(U)$.

\end{remark}
\begin{proposition}\label{pro:S(U)}
The following identity holds for great circles: 
\begin{align}
 \mathcal{S}(U)=\mathcal{P}^1_{\mathcal{S}(U)}(\mathbb{S}^d\setminus U^\perp)=\{Uz: z\in \mathbb{S}^1\} \label{eq:S(U),d}
\end{align}
\end{proposition}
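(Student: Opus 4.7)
The plan is to prove the two equalities in \eqref{eq:S(U),d} separately by chasing definitions and exploiting the isometry property of $U$, namely $U^TU=I_2$, which implies $\|Uz\|=\|z\|$ for all $z\in\mathbb{R}^2$ and, dually, $\|U^Ts\|\le\|s\|$ for $s\in\mathbb{R}^d$ with equality precisely on $H(U)$.

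First I would establish $\{Uz:z\in\mathbb{S}^1\}=\mathcal{S}(U)$. The inclusion $\subseteq$ is immediate: if $z\in\mathbb{S}^1\subset\mathbb{R}^2$, then $Uz=z_1U[:,1]+z_2U[:,2]\in H(U)$ and $\|Uz\|^2=z^TU^TUz=z^Tz=1$, hence $Uz\in H(U)\cap\mathbb{S}^{d-1}=\mathcal{S}(U)$. For the reverse inclusion, any $s\in\mathcal{S}(U)\subset H(U)$ can be written as $s=Uw$ for some $w\in\mathbb{R}^2$ (the coefficients in the orthonormal basis $\{U[:,1],U[:,2]\}$); since $1=\|s\|^2=\|Uw\|^2=w^Tw$, we have $w\in\mathbb{S}^1$.

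Next I would show $\mathcal{P}^1_{\mathcal{S}(U)}(\mathbb{S}^{d-1}\setminus U^\perp)=\{Uz:z\in\mathbb{S}^1\}$. For $\subseteq$: if $s\in\mathbb{S}^{d-1}\setminus U^\perp$ then $U^Ts\neq 0$, so $z:=\tfrac{U^Ts}{\|U^Ts\|}\in\mathbb{S}^1$ and
\begin{equation*}
\mathcal{P}^1_{\mathcal{S}(U)}(s)=\frac{UU^Ts}{\|U^Ts\|}=U\cdot\frac{U^Ts}{\|U^Ts\|}=Uz.
\end{equation*}
For $\supseteq$: given $z\in\mathbb{S}^1$, set $s:=Uz$. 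By the first step $s\in\mathcal{S}(U)\subset\mathbb{S}^{d-1}$, and $U^Ts=U^TUz=z\ne 0$, so $s\notin U^\perp$. Then
\begin{equation*}
\mathcal{P}^1_{\mathcal{S}(U)}(s)=\frac{UU^TUz}{\|U^TUz\|}=\frac{Uz}{\|z\|}=Uz,
\end{equation*}
showing that $Uz$ lies in the image of the projection.

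There is no real obstacle here; the only point deserving care is the exclusion of $U^\perp$ in the domain of $\mathcal{P}^1_{\mathcal{S}(U)}$, which is exactly the condition that guarantees $\|U^Ts\|\neq 0$ and makes the normalization well-defined. Combining the two established equalities yields \eqref{eq:S(U),d}.
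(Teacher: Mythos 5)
Your proof is correct and takes essentially the same approach as the paper: both are direct definition chases using $U^TU=I_2$ together with the computation $\mathcal{P}^1_{\mathcal{S}(U)}(Uz)=\frac{UU^TUz}{\|U^TUz\|}=Uz$. The only difference is organizational --- you route both equalities through the set $\{Uz:z\in\mathbb{S}^1\}$, whereas the paper first proves $\mathcal{S}(U)=\mathcal{P}^1_{\mathcal{S}(U)}(\mathbb{S}^{d-1}\setminus U^\perp)$ via the orthogonal decomposition $s=s_U+s_{U^\perp}$ and then separately shows $\mathcal{S}(U)=\{Uz:z\in\mathbb{S}^1\}$; this changes nothing of substance.
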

\begin{proof}
Since $\mathcal{P}^1_{\mathcal{S}(U)}$ is the projection, we have 
$\mathcal{P}^1_{\mathcal{S}(U)}(\mathbb{S}^d\setminus U^\perp)\subset \mathcal{S}(U)$. 

For the other direction, choose $s\in \mathcal{S}(U)$ and we decompose $s$ as 
$$s=s_U+s_{U^\perp},$$
where $s_U\in H(U), s_{U^\perp}\in U^\perp$. 

By definition of $\mathcal{S}(U)$, we have $s_{U^\perp}=0$. Thus $s=s_U$. From the fact $H(U)\cap U^\perp=\{0_d\}$, we have $s\notin U^\perp$. Therefore, $s\in \mathbb{S}^{d-1}\setminus U^\perp$. Thus 
$s=\mathcal{P}^1_{\mathcal{S}(U)}(s)$. Therefore, we have 
$$\mathcal{S}(U)\subset \mathcal{P}^1_{\mathcal{S}(U)}(\mathbb{S}^d\setminus U^\perp).$$

It remains to verify the second equality. Pick $z\in \mathbb{S}^1$, we have 
\begin{align}
\mathcal{P}^1_{\mathcal{S}(U)}(Uz)=\frac{UU^TUz}{\|U^TUz\|}=Uz\nonumber
\end{align}
Thus $Uz\in \mathcal{S}(U)$. 
For the other direction, pick $s\in \mathcal{S}(U)$, thus 
$$s=s_U=UU^Ts$$ 
In addition, $1=\|s\|=\|UU^Ts\|=\|U^Ts\|$. 
Thus, $s\in \{Uz:z\in \mathbb{S}^1\}$ and the second equality holds.
\end{proof}

\subsection{Great circles and hemispheres in \cite{groemer1998spherical}}
In this subsection, we re-parametrize the great circles and semicircles by using the notation in \cite{groemer1998spherical}. 

Let $\mathcal{B}^d=\{(u,v): \, u,v\in \mathbb{S}^{d-1},u^Tv=0\}.$
We define, as in \cite{groemer1998spherical}, the hyperplane $H(u,v)=\text{span}\{u,v\}$ and the corresponding ``great circle'' 
$$\mathcal{S}(u,v)=\mathbb{S}^{d-1}\cap H(u,v).$$

The $(d-2)$-dimensional hemisphere is defined by 
\begin{align}
\mathcal{HS}(u,v)=\{s\in \mathbb{S}^{d-1}:s^Tu=0,s^Tv\ge 0\} \label{eq:HS(u,v)}
\end{align}   

\begin{lemma}\label{lem:trans_param_2}
Pick $U\in V_2(\mathbb{R}^d)$, $z\in \mathbb{S}^1$, there exists $(u,v)\in \mathcal{B}^d$ such that \begin{align}
    \text{span}(U)&=\text{span}(\{u,v\})\label{eq:params_cond3}\\
    U^Tv&=z\label{eq:params_cond4}
\end{align}
and vice-versa. 

In addition, if $U,z,u,v$ satisfy \eqref{eq:params_cond3}, and \eqref{eq:params_cond4}, we have: 
\begin{align}
\mathcal{S}(U)&=\mathcal{S}(u,v)\nonumber\\
\mathcal{HS}(U,z)&=\mathcal{HS}(u,v)\nonumber 
\end{align}
\end{lemma}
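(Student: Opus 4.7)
The plan is to split the lemma into two stages: first, construct the correspondence $(U,z)\leftrightarrow (u,v)$ in both directions; then use this explicit correspondence to deduce the set equalities $\mathcal{S}(U)=\mathcal{S}(u,v)$ and $\mathcal{HS}(U,z)=\mathcal{HS}(u,v)$.

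For the forward direction, given $U\in V_2(\mathbb{R}^d)$ and $z\in\mathbb{S}^1$, I would set $v:=Uz$ and $u:=Uw$, where $w\in\mathbb{S}^1$ is any unit vector orthogonal to $z$ (e.g.\ the rotation of $z$ by $\pi/2$). Orthonormality of the columns of $U$ immediately gives $u,v\in\mathbb{S}^{d-1}$ with $u^Tv=w^Tz=0$, so $(u,v)\in\mathcal{B}^d$; the span condition \eqref{eq:params_cond3} and the identity $U^Tv=z$ follow directly. For the reverse direction, given $(u,v)\in\mathcal{B}^d$, the natural choice is $U:=[u,v]$ with $z:=U^Tv=[0,1]^T\in\mathbb{S}^1$; more generally, any $U$ whose columns form an orthonormal basis of $\text{span}\{u,v\}$ works with $z:=U^Tv$, whose unit norm follows from $v\in\text{span}(U)$ and so $UU^Tv=v$. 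The identity $\mathcal{S}(U)=\mathcal{S}(u,v)$ is then one line: $H(U)=\text{span}(U)=\text{span}\{u,v\}=H(u,v)$, and intersecting with $\mathbb{S}^{d-1}$ preserves the equality.

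The substantive content is the hemisphere identity $\mathcal{HS}(U,z)=\mathcal{HS}(u,v)$. My plan is to use two basis-independent algebraic facts derived from the construction: (a) $UU^T=uu^T+vv^T$, since both sides are the orthogonal projection onto $\text{span}\{u,v\}=\text{span}(U)$; and (b) $Uz=UU^Tv=v$, combining $v\in\text{span}(U)$ with $U^Tv=z$. With (a), for $s\in\mathcal{HS}(u,v)$ I would split on whether $s^Tv=0$ (which together with $s^Tu=0$ forces $s\in U^\perp\cap\mathbb{S}^{d-1}$) or $s^Tv>0$ (in which case (a) yields $U^Ts=(s^Tv)z$, hence $P^Us=z$). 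For the reverse inclusion, if $s\in U^\perp$ the conclusion is immediate, while if $P^Us=z$ then the relation $U^Ts=\|U^Ts\|z$ multiplied on the left by $U$ together with (b) yields $UU^Ts=\|U^Ts\|v$; reading off the $u$- and $v$-components via (a) then gives $s^Tu=0$ and $s^Tv=\|U^Ts\|\geq 0$.

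The main obstacle I anticipate is not any single hard computation but the careful bookkeeping of the boundary case $s\in U^\perp\cap\mathbb{S}^{d-1}$: the definition of $\mathcal{HS}(U,z)$ deliberately adjoins this set because $P^U$ is undefined there, and one must check that it coincides exactly with the ``equator'' $\{s:s^Tu=0,\,s^Tv=0\}\cap\mathbb{S}^{d-1}$ of the hemisphere $\mathcal{HS}(u,v)$, so that no points are double-counted and none are omitted in either inclusion.
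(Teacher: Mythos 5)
Your proposal is correct and follows essentially the same route as the paper's proof: the same construction $v=Uz$ with $u$ a unit vector in $\mathrm{span}(U)\cap v^\perp$ (resp. $U=[u,v]$, $z=[0,1]^T$), the same one-line span argument for $\mathcal{S}(U)=\mathcal{S}(u,v)$, and the same two-case analysis of the hemisphere identity distinguishing $s\in U^\perp$ from $P^Us=z$. Your identity $UU^T=uu^T+vv^T$ is just a repackaging of the paper's decomposition $s=s_U+s_{U^\perp}$ with $s_U=UU^Ts$, and your boundary-case bookkeeping matches the paper's Case 1 ($\alpha=0$) exactly.
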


\begin{proof}
First, we pick $U,z$. We set $:v=Uz$. There exists a uniquely determined (up to scalar $\pm 1$) $u\in H(U)\cap v^\perp$ with $\|u\|=1$.
Since $U^Tv=U^TUz=z$,
we have that \eqref{eq:params_cond3}, and \eqref{eq:params_cond4} hold. 

Now consider $(u,v)\in \mathcal{B}^d$. We set $U=[u,v]$. (Note that the choice of $U$ is not unique, for example, we can also set $U=[-u,v]$ or $U=[u,-v]$.) 
Let $z=U^Tv=[0,1]^T$. We have that \eqref{eq:params_cond3} and \eqref{eq:params_cond4} hold. 

Suppose $U,z,u,v$ satisfy \eqref{eq:params_cond3} and \eqref{eq:params_cond4}. We have 
$ H(U,z)= H(u,v)$, and thus $$\mathcal{S}(U)=\mathcal{S}(u,v).$$

Consider $s\in \mathcal{HS}(U,z)$, we decompose $s$ as 
$s=s_U+s_{U^\perp}$, then (by Proposition \ref{pro:S(U)})
$$\mathcal{P}^1_{\mathcal{S}(U)}(s)=%U\frac{U^Ts}{\|U^Ts\|}=
Uz=UU^Tv=v.$$
The last equation holds from the fact $v\in H(U)$. 
Since $u,v\in H(U)$, we have 
\begin{align}
s^Tv&=s_U^Tv=(UU^Ts)^Tv=\|U^Ts\|v^Tv\ge 0\nonumber\\
s^Tu&=s_U^Tu=(UU^Ts)^Tu=\|U^Ts\|v^Tu=0\nonumber
\end{align}
So, $s\in \mathcal{HS}(u,v)$ and thus 
$$\mathcal{HS}(U,z)\subset \mathcal{HS}(u,v).$$

For the other direction, we pick $s\in \mathcal{HS}(u,v)$. We have $s^Tu=0$. 

Since $u\in H(U)$, we obtain: 
\begin{align}
0=s^Tu=s_U^Tu+s_{U^\perp}^Tu=s_U^Tu\nonumber
\end{align}
By combining this with the fact that $s_U\in H(U)=\text{span}(\{u,v\})$, we have $s_U=\alpha v$ for some $\alpha\in \mathbb{R}$. 
In addition, from $s^Tv\ge 0$, we have $\alpha\ge0$. 

Case 1: If $\alpha=0$, we have $s\in U^\perp$, thus $s\in \mathcal{HS}(U,z)$. 

Case 2: If $\alpha>0$, we have: 
$$P^Us=P^Us_U=\frac{U^Ts}{\|U^Ts\|}=\frac{U^T\alpha v}{\alpha}=U^TUz=z.$$
Thus, $s\in \mathcal{HS}(U,z)$. 
Therefore $\mathcal{HS}(u,v)\subset \mathcal{HS}(U,z)$. 
\end{proof}

\begin{remark}
    When $d=3$, then $u=U[:,1]$, $v=U[:,2]$, and $n_U=\frac{1}{\|u\times v\|}u\times v$ (cross product, normalized).

    Moreover, these $(d-2)$-dimensional hemispheres $\mathcal{HS}$ generalize the semicircles $\mathcal{SC}$  (see \eqref{eq:SC(U,gamma)}) given for $d=3$.
\end{remark}

\subsection{Spherical Radon transform and a Hemispherical transform}
We call the \textit{integral transformation} defined in \cite[Section 2]{groemer1998spherical} as the \textbf{``$(d-2)$-Hemispherical Transform on $\mathbb{S}^{d-1}$''}: it is the $d-$dimensional extension of the Un-normalized \textbf{Semicircle Transform} given by \eqref{eq:Wf_un} for $d=3$. 
For $f\in L^1(\mathbb{S}^{d-1})$, we define 
\begin{equation}
\mathcal{H}f(u,v):=\int_{\mathcal{HS}(u,v)}f(s)d \mathcal{HS}(u,v)\nonumber %\\
%&=\int_{\mathcal{HS}(u,v)}f(s)d \mathcal{HS}(u,v)\nonumber 
\end{equation}
where $d\mathcal{HS}(u,v)$ is the surface area differential on $\mathcal{HS}(u,v)$. (We note that in the article \cite{groemer1998spherical}, the notation for the above transformation is $\mathcal{B}$.)

\begin{remark}
    In the literature, the so-called \textbf{Hemispherical Transform} integrates over a full $(d-1)$-dimensional hemisphere of $\mathbb{S}^{d-1}$, that is,
    \begin{equation}\label{eq: classical hemispherical transform}
      f(z):=\int_{\{x\in\mathbb{S}^{d-1}: \, x^Tz\geq 0\}} f(x)dx \qquad \quad (z\in\mathbb{S}^{d-1}).  
    \end{equation}
    See, e.g., \cite{rubin1999inversion,groemer1998spherical}.
\end{remark}

The \textbf{Spherical Radon Transform} in $\mathbb{S}^{d-1}$ is defined in \cite{bonet2022spherical} as the linear and bounded integral  operator $\mathcal{R}:L^1(\mathbb{S}^{d-1})\to L^1(V_2(\mathbb{R}^d)\times \mathbb S^1)$ with the formula:
\begin{equation}
\mathcal{R}f(U,z):=\int_{\mathbb{S}^{d-1}}\delta(P^Us=z)f(s)ds. \label{eq:R(f)_d}
\end{equation}
Thus, we can re-write it as
\begin{equation}\label{eq: Rf}
\mathcal{R}f(U,z)=\int_{\mathbb{S}^{d-1}}\delta(P^Us=z)f(s)ds =\int_{\mathcal{HS}(U,z)}f(s) d\mathcal{HS}(U,z)
\end{equation}
We define the following \textit{variant} of such Spherical Radon Transform: 
\begin{align}
    \widetilde{\mathcal{H}}(f)(U,z):=
    \int_{\mathbb{S}^{d-1}\setminus U^\perp}\delta(P^Us=z)f(s)ds=\int_{\mathcal{HS}(U,z)\setminus U^\perp}f(s)d\mathcal{HS}(U,z).
    \label{eq:R(f)_2}
\end{align}

\begin{proposition}\label{pro:R(f)-H(f)}
For $U\in V_2(\mathbb{R}^d)$ and $z\in \mathbb{S}^1$, consider $(u,v)$ as in Lemma \ref{lem:trans_param_2}. 
Then 
\begin{align}
    \mathcal{R}f(U,z)=\mathcal{H}f(u,v)=\widetilde{\mathcal{H}}f(U,z), \qquad \forall f\in L^1(\mathbb{S}^{d-1})\nonumber
\end{align}
\end{proposition}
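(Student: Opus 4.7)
The plan is to chain the three quantities together by combining the set identifications from Lemma~\ref{lem:trans_param_2} with a measure-zero negligibility argument for the excluded pole set $U^\perp\cap\mathbb{S}^{d-1}$.

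First, I would start from the surface-integral representation already recorded in \eqref{eq: Rf}, namely
\[
\mathcal{R}f(U,z)=\int_{\mathcal{HS}(U,z)} f(s)\, d\mathcal{HS}(U,z),
\]
which realizes the formal Dirac-delta formula for $\mathcal{R}f$ via coarea/disintegration of the uniform surface measure on $\mathbb{S}^{d-1}$ with respect to the fibres of the map $s\mapsto P^Us$. Lemma~\ref{lem:trans_param_2} then yields the key geometric identification $\mathcal{HS}(U,z)=\mathcal{HS}(u,v)$ as subsets of $\mathbb{S}^{d-1}$; since both carry the same $(d-2)$-dimensional surface measure induced from the ambient sphere, direct substitution gives $\mathcal{R}f(U,z)=\mathcal{H}f(u,v)$.

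For the second equality $\mathcal{R}f(U,z)=\widetilde{\mathcal{H}}f(U,z)$, the integrands and the underlying surface measure coincide, so the only difference between the two integrals is that $\widetilde{\mathcal{H}}$ excludes $U^\perp$. It therefore suffices to show that $\mathcal{HS}(U,z)\cap U^\perp = U^\perp\cap\mathbb{S}^{d-1}$ has zero $(d-2)$-dimensional Hausdorff measure. Since $U\in V_2(\mathbb{R}^d)$, the orthogonal complement $U^\perp$ is a $(d-2)$-dimensional linear subspace of $\mathbb{R}^d$, so $U^\perp\cap\mathbb{S}^{d-1}$ is a $(d-3)$-dimensional sub-sphere (a pair of antipodal points when $d=3$, a circle when $d=4$, and so on), which is strictly lower-dimensional than the $(d-2)$-dimensional hemisphere $\mathcal{HS}(U,z)$ and therefore null in its surface measure.

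The only genuine subtlety I anticipate lies in justifying the opening rewriting of the Dirac-delta formula as a surface integral: if we cannot simply cite \eqref{eq: Rf}, we would need an explicit coarea argument, parametrizing $s\in\mathbb{S}^{d-1}\setminus U^\perp$ in an orthonormal frame aligned with $U$, computing the Jacobian of the uniform measure on $\mathbb{S}^{d-1}$, and integrating out the fibre coordinate after imposing $P^Us=z$, the fibres being exactly $\mathcal{HS}(U,z)\setminus U^\perp$. Once that identity is in place, the remainder of the proposition is a short translation between the two parametrizations $(U,z)$ and $(u,v)$ of a great hemisphere, with no further analytic content.
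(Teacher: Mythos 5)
Your proposal is correct and follows essentially the same route as the paper: both use Lemma \ref{lem:trans_param_2} to identify $\mathcal{HS}(U,z)=\mathcal{HS}(u,v)$ and then cite \eqref{eq: Rf} for the first equality, and both dispose of the second equality by observing that $\mathcal{HS}(U,z)\cap U^\perp=U^\perp\cap\mathbb{S}^{d-1}$ is a $(d-3)$-dimensional sub-sphere, hence null for the $(d-2)$-dimensional surface measure on the hemisphere. The extra coarea discussion you flag is not needed, since the paper (like you) simply takes \eqref{eq: Rf} as the working definition of $\mathcal{R}f$.
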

\begin{proof}
    By Lemma \ref{lem:trans_param_2}, we have 
    $\mathcal{HS}(U,z)=\mathcal{HS}(u,v)$. 
    Then, by \eqref{eq: Rf}, $${\mathcal{R}}f(U,z)=\mathcal{H}f(u,v).$$

    Thus, it remains to show $\widetilde{\mathcal{H}}f=\mathcal{R}f$. 
    
    For each $(U,z)$, $U^\perp$ is a $(d-2)$-dimensional subspace. Thus, $\mathbb{S}^{d-1}\cap U^\perp$ is a $(d-3)$-dimensional sub-sphere.

    Since $\{s\in \mathbb{S}^{d-1}\setminus U^\perp\},U^\perp \cap \mathbb{S}^{d-1}$ are disjoint, we have: 
    \begin{align}
    \mathcal{HS}(U,z)\cap U^\perp&=\left[\{s\in \mathbb{S}^{d-1}\setminus U^\perp: P^Us=z\} \cup (U^\perp\cap \mathbb{S}^{d-1}))\right] \cap  U^\perp \\
    &=U^\perp \cap \mathbb{S}^{d-1}\nonumber 
    \end{align}
In addition, since $U^\perp\cap \mathbb{S}^{d-1}$ is a $({d-3})$-dimensional sub-sphere, while $\mathcal{HS}(U,z)=\mathcal{HS}(u,z)$ is a $(d-3)$-dimensional hemisphere, 
we have 
\begin{align}
\mathcal{R}f(U,z)&=\widetilde{\mathcal{H}}f(U,z)+\int_{U^\perp \cap \mathbb{S}^{d-1}}f(x) d\mathcal{HS}(U,z)\nonumber\\
&=\widetilde{\mathcal{H}}f(U,z)+0=\widetilde{\mathcal{H}}f(U,z) \nonumber 
\end{align}
and we have completed the proof. 
\end{proof}

% We directly have the consequence 
\begin{corollary}\label{eq: 1-1 for cont}
Given continuous function $f,g\in C(\mathbb{S}^{d-1})$, if $\mathcal{R}f=\mathcal{R}g$, then $f=g$. 
\end{corollary}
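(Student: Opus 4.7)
The plan is to reduce the injectivity of $\mathcal{R}$ on continuous functions to the injectivity of the $(d-2)$-Hemispherical Transform $\mathcal{H}$ (equivalently, Groemer's integral transformation $\mathcal{B}$ from \cite{groemer1998spherical}), which is the external input we invoke. By linearity of $\mathcal{R}$, it suffices to set $h := f - g \in C(\mathbb{S}^{d-1})$ and show that $\mathcal{R}h \equiv 0$ forces $h \equiv 0$.

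First I would translate the vanishing of $\mathcal{R}h$ on the parameter space $V_2(\mathbb{R}^d) \times \mathbb{S}^1$ into vanishing of $\mathcal{H}h$ on $\mathcal{B}^d$. Proposition \ref{pro:R(f)-H(f)} gives the pointwise identity $\mathcal{R}h(U,z) = \mathcal{H}h(u,v)$ whenever $(U,z)$ and $(u,v)$ are linked via Lemma \ref{lem:trans_param_2}. The second half of that lemma shows that every $(u,v) \in \mathcal{B}^d$ is realized by some valid pair $(U,z)$ (concretely, take $U := [u,v]$, which lies in $V_2(\mathbb{R}^d)$ since $u,v$ are orthonormal, and $z := [0,1]^T$, giving $U^Tv = z$). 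Thus $\mathcal{R}h = 0$ on all of $V_2(\mathbb{R}^d) \times \mathbb{S}^1$ is equivalent to $\mathcal{H}h(u,v) = 0$ for all $(u,v) \in \mathcal{B}^d$.

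Second, I would invoke the injectivity of Groemer's transformation on continuous functions from \cite{groemer1998spherical}: $\mathcal{H} = \mathcal{B}$ acts as a nonzero multiplier on every spherical-harmonic component of $h$, so $\mathcal{H}h \equiv 0$ forces all spherical-harmonic coefficients of $h$ to vanish; continuity of $h$ then yields $h \equiv 0$, hence $f = g$.

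The main obstacle — already overcome in the preceding sections — is that $\mathcal{R}$ must be identified with the $(d-2)$-dimensional hemispherical transform $\mathcal{H}$ and \emph{not} with the classical $(d-1)$-dimensional Hemispherical Transform of \cite{rubin1999inversion}. The latter has a nontrivial kernel on continuous functions (it annihilates odd parts), so a naïve identification would not yield injectivity. The remark preceding the corollary highlights exactly this distinction, and Proposition \ref{pro:R(f)-H(f)} delivers the correct identification with $\mathcal{H}$; with that in hand, the rest of the argument is the short chain of reductions sketched above.
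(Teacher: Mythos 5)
Your proposal is correct and follows essentially the same route as the paper: reduce $\mathcal{R}$ to the $(d-2)$-Hemispherical Transform $\mathcal{H}$ via Proposition \ref{pro:R(f)-H(f)} and Lemma \ref{lem:trans_param_2}, then invoke \cite[Theorem 1]{groemer1998spherical} for injectivity on continuous functions. Your use of the ``vice-versa'' direction of Lemma \ref{lem:trans_param_2} (every $(u,v)\in\mathcal{B}^d$ is realized by some $(U,z)$) is in fact the logically cleaner way to conclude $\mathcal{H}(f-g)\equiv 0$ on all of $\mathcal{B}^d$, whereas the paper's proof states the correspondence in the opposite direction but rests on the same lemma.
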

\begin{proof}
For every $(U,z)$ there exists $u,v\in \mathbb S^{d-1}$ such that, by Lemma \ref{pro:R(f)-H(f)}, 
$$\mathcal{R}f(U,z)=\mathcal{H}f(u,v),\quad \mathcal{R}g(U,z)=\mathcal{H}g(u,v).$$
Thus, $\mathcal{H}f(u,v)=\mathcal{H}g(u,v)$. 
From \cite[Theorem 1]{groemer1998spherical}, we have $f=g$. 
\end{proof}
% \begin{proposition}[Continuity of Spherical Radon transform]
% Given $f_1,f_2\in \mathcal{L}_1(\mathbb{S}^{d-1})$. We have $\|\mathcal{R}(f_1),\mathcal{R}(f_2)\|_{L_1}\to 0$ as $\|f_1- f_2\|_{L_1}\to 0$. 
% \end{proposition}
% \begin{proof}
% Choose $\epsilon>0$, and we suppose $\|f_1-f_2\|_{L_1}<\epsilon$. Since $f_1,f_2$ are a function of $L_1$, there exists $M>0$ such that $|f_1|,|f_2|\leq M,\sigma_{\mathbb{S}^{d-1}}-a.s.$. Thus,         

% \end{proof}
% \begin{corollary}\label{eq: 1-1 for L1}
%     The spherical Radon transform is injective in $L^1(\mathbb{S}^{d-1})$.
% \end{corollary}
% \begin{proof}
%      This fact, together with the continuity of the integral transformation $\mathcal{R}$, and Corollary \ref{eq: 1-1 for cont} imply that $\mathcal{R}$ is 1-1 in $L^1(\mathbb{S}^{d-1})$. Indeed, by linearity of $\mathcal{R}$ it is enough to prove that if for $f\in L^1(\mathbb{S}^{d-1})$, we have $\mathcal{R}f=0$ then $f=0$. let $(f_n)_n{\in \mathbb N}\subset C(\mathbb S^{d-1})$ be a sequence of continuous functions such that
%     $\lim_{n\to\infty}\|f-f_n\|_1=0 $
% \textcolor{red}{ Not finished}
    
% \end{proof}

\begin{remark}\label{remark: our difference with Bonet et al}
    In the article \cite{bonet2022spherical} the authors relate the Spherical Radon Transform $\mathcal{R}$ with the Hemispherical Transform $\mathcal{F}$ given in \eqref{eq: classical hemispherical transform}.
    Indeed,     $$\mathcal{R}f(U,z)=\mathcal{F}(f_{|_{\mathcal{S}(U)}})(z),$$
    where $f_{|_{\mathcal{S}(U)}}$ denotes the restriction of $f\in L^1(\mathbb{S}^{d-1})$ to the $(d-2)$-dimensional ``great circle'' ${\mathcal{S}(U)}\simeq\mathbb S^{d-2}$, and the above operator $\mathcal{F}$ is the Hemispherical Transform in $L^1(\mathcal{S}^{d-2})$.
    By doing so, the authors can use the characterization of $Ker(\mathcal{F})$ provided in \cite{rubin1999inversion} to prove their characterization of $Ker(\mathcal{R})$ (see \cite[Proposition 4]{bonet2022spherical}). Also, in \cite{bonet2022spherical} the authors claim that they leave for future works checking whether the set $Ker(\mathcal{R})$ is null or not. In this paper we show that it is more natural to relate $\mathcal{R}$ with the integral transformation given in \cite[Section 2]{groemer1998spherical} that we have named as the \textit{``$(d-2)$-Hemispherical Transform on $\mathbb{S}^{d-1}$''} and denoted by $\mathcal{H}$. Thus,  \cite[Theorem 1]{groemer1998spherical} we can show the injectivity of $\mathcal{R}$ for absolutely continuous measures on the sphere with continuous density functions.  (Since $\mathbb{S}^{d-1}$ is compact, the space of continuous functions $C(\mathbb S^{d-1})$ is dense in $L^1(\mathbb{S}^{d-1})$ under closure with respect to the $L^1$-norm. However, we still can not guarantee that the injectivity property of $\mathcal{R}$ extends from $C(\mathbb R^{d-1})$ to $L^1(\mathbb{S}^{d-1})$, which represents the densities of absolutely continuous measures).
\end{remark}

\section{Spherical Radon transform for measures on $\mathbb{S}^{d-1}$}\label{app: Radon for measures}

Similarly to the classical Radon Transform in Euclidean spaces, the dual-operator of $\mathcal{R}$, called the \textit{back-projection} operator and denoted as $\mathcal{R}^*: C_0(V_2(\mathbb{R}^d)\times\mathbb S^1)\to C(\mathbb{S}^{d-1})$, is given in \cite{bonet2022spherical} by the following formula:
\begin{equation}\label{eq:R*(g)_2}
    \mathcal{R}^*(\psi)(x)=\int_{V_2(\mathbb{R}^d)}\psi(U,P^U(x))d\sigma(U), \qquad x\in\mathbb{S}^{d-1}, \quad \psi\in C_0(V_2(\mathbb{R}^d)\times\mathbb S^1).
\end{equation}
That is, $\mathcal{R}^*$ is such that the following identity holds:
\begin{align}
\int_{\mathbb{S}^1\times V_2(\mathbb{R}^d)}\mathcal{R}f(U,z)\psi(U,z) d\sigma(U) dz=\int_{\mathbb{S}^{d-1}}f(s) \, \mathcal{R}^*\psi(s)ds\label{eq:R*(g)},  
\end{align}
for $f\in L^1(\mathbb{S}^{d-1}),\psi\in C_0(V_2(\mathbb{R}^d)\times\mathbb S^1)$ (see \cite[Proposition 2]{bonet2022spherical}).
Then, the Spherical Radon transform  $\mathcal{R}$ can be extended from $L^1(\mathbb{S}^{d-1})$ to the space of measures supported on $\mathbb{S}^{d-1}$ by defining $\mathcal{R}\mu$ as a new measure on $V_2(\mathbb{R}^d)\times \mathbb S^1$ by the ``duality expression'':
\begin{align}
\int_{\mathbb{S}^1\times V_2(\mathbb{R}^d)}\psi(U,z) d\mathcal{R}\mu(U,z)=\int_{\mathbb{S}^{d-1}}\mathcal{R}^*\psi(s)d\mu(s), \qquad \forall\psi\in C_0(V_2(\mathbb{R}^d)\times\mathbb S^1).
\end{align}

% By \cite[Proposition 2]{bonet2022spherical}, the above definition is equivalent to the following:
% \begin{align}
% \mathcal{R}^*g(s)=\int_{V_2(\mathbb{R}^d)}g(U,P^U(s))d\sigma_{V_2(\mathbb{R}^d)}(U),\qquad \forall s\in \mathbb{S}^d \label{eq:R*(g)_2}
% \end{align}
\begin{remark}
For each $s\in \mathbb{S}^{d-1}$, if $s\in U^\perp$, then $U[:,1],U[:,2]\in s^\perp$. Thus 
\begin{align}
&\sigma(\{U\in V_2(\mathbb{R}^d): s\in U^\perp\})\nonumber\\
&=\sigma(\{U\in V_2(\mathbb{R}^d): U[:,1],U[:,2]\in s^\perp\})\nonumber\\
&=0\nonumber 
\end{align}
In \cite{bonet2022spherical}, the authors mention the above that definition is considered a.e.. Via the above property, we have that \eqref{eq:R*(g)_2} is well-defined for all $s\in \mathbb{S}^{d-1}$. 
\end{remark}

Now, consider probability measures $\mu\in\mathcal{P}(\mathbb{S}^{d-1})$. In analogy to the classical Radon Transform in Euclidean spaces, by the \textbf{disintegration} theorem in classic measure theory, there exists  a 
$\sigma$-almost everywhere uniquely determined family of $1$-dimensional probability measures $\{(\mathcal{R}\mu)^U\}_{U\in V_2(\mathbb R^d)}$ on the unit circle $\mathbb{S}^1$ such that for any $\psi \in C_0(V_2(\mathbb R^d)\times\mathbb{S}^{1})$ we have 
\begin{align}
    \int_{V_2(\mathbb R^d)\times\mathbb{S}^{1}}\psi(U,z) \, d\mathcal{R}\mu(U,z)=\int_{V_2(\mathbb{R}^d)}\int_{\mathbb{S}^1}\psi(t,\theta) \, d(\mathcal{R}\mu)^U(z) \, d\sigma(U) \label{eq: nu_theta}.
\end{align}
 In \cite[Proposition 3]{bonet2022spherical} the authors prove that if $\mu_f$ is an absolutely continuous probability measure on the sphere $\mathbb{S}^{d-1}$, that is, it has a density $f\in L^1(\mathbb{S}^{d-1})$, then
$$(\mathcal{R}\mu_f)^U=P^U_\#(\mu_f)$$
The same computation can be extended to any measure $\mu$ supported on $\mathbb{S}^{d-1}$ to get $(\mathcal{R}\mu)^U=P^U_\#(\mu)$. 

\begin{remark}
In the particular case of $\mu_f$, that is, for functions $f\in L^1(\mathbb S^{d-1})$ we have that the measure $\mathcal{R}\mu_f$ has density $\mathcal{R}f$. This implies that the function $\mathcal{R}f(U,\cdot)$ (i.e., the function $z\mapsto \mathcal{R}f(U,z)$ defined for $z\in\mathbb S^1$) is the density of $(\mathcal{R}\mu_f)^U$, and therefore $\mathcal{R}f(U,\cdot)$ is also the density of
$P^U_\#(\mu_f)$.    
\end{remark}

\subsection{Linear Spherical Sliced Optimal Transport in terms of the Spherical Radon Transform}\label{sec: LSSOT in terms of Radon}

By using the previous expressions, given $\nu^1,\nu^2\in\mathcal{P}(\mathbb S^{d-1})$ we can re-write $LSSOT_2(\cdot,\cdot)$ as follows:
\begin{align}\label{eq: LSSOT in terms of Radon}
 \left(LSSOT_2(\nu_1,\nu_2)\right)^2&:=\int_{V_2(\mathbb{R}^d)}\left(LCOT_2(P_{\#}^U\nu_1,P_{\#}^U\nu_2)\right)^2 \, d\sigma(U) \notag\\
 &=\int_{V_2(\mathbb{R}^d)}\left(LCOT_2((\mathcal{R}\nu_1)^U,(\mathcal{R}\nu_2)^U)\right)^2  \, d\sigma(U).
\end{align}

% \begin{proposition}\label{pro:R*-injective}
% Pick $g_1,g_2\in C_0(\mathbb{S}^1\times \mathbb{V}_{d,2})$, if 
% $\mathcal{R}^*(g_1)=\mathcal{R}^*(g_2)$, then $g_1=g_2$. 
% \end{proposition}
% \begin{proof}
% Note, $\mathcal{R}^*(g_1)\in C(\mathbb{S}^d)$ defines a functional for $C(\mathbb{S}^d)$ via Riez Representation theorem: 
% \begin{align}
% C_0\ni f\to \int_{\mathbb{S}^d}f(s) \mathbb{R}^*(g_1) ds \in \mathbb{R}\nonumber  
% \end{align}
% Let $g=g_1-g_2$, from \eqref{eq:R*(g)}, we have 
% $$\int_{\mathbb{S}^{d-1}}f(x)\mathcal{R}^*g(s)ds=0,\forall f\in L^1(\mathbb{S}^{d-1}).$$
% Thus $\mathcal{R}^*(g)=0$ by Riez representation theorem. 

% \end{proof}

% Note that $\mathcal{H}$ maps function from $L^1(\mathbb{S}^{d-1})$ to $L^1(\mathbb{B}^d)$. It is natural to define the adjoint (aka Hermitian conjugate) transform $\mathcal{H}^*: L^\infty(\mathbb{B}^d)\to L^\infty(\mathbb{S}^{d-1})$. 

% By Riesz representation theorem, we identify the dual space of $L^1$ with $L^\infty$: for 
% $g\in L^\infty(\mathbb{B}^d)$, $\mathcal{H}^*(g)$ is defined via
% \begin{align}
% \int_{\mathbb{S}^d}f(s)\mathcal{H}^*(g)(s)ds=\int_{\mathbb{S}^{d-1}}\mathcal{H}(f)(u,v)g(u,v)d
% \end{align}

% Thus 

\section{Metric property of Linear Spherical Sliced Optimal Transport}\label{app: metric property}

\begin{proposition}\label{thm: pseudo metric}
$LSSOT_2(\cdot,\cdot)$ is a well-defined \textbf{pseudo-metric} in $\mathcal{P}(\mathbb{S}^{d-1})$, that is, $LSSOT_2(\nu,\nu)=0$ for all $\nu\in\mathcal{P}(\mathbb{S}^1)$, it is non-negative,  symmetric, and satisfies the triangle inequality.     
\end{proposition}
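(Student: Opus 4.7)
The plan is to verify each defining property of a pseudo-metric in turn, leveraging two inputs stated earlier in the paper: (i) $LCOT_2$ is already known to be a genuine metric on $\mathcal{P}(\mathbb{S}^1)$, with the canonical identity $LCOT_{\mu,p}(\nu_1,\nu_2) = \|\widehat{\nu_1}^{\mu,h} - \widehat{\nu_2}^{\mu,h}\|_{L^p(\mathbb{S}^1,d\mu)}$ from \eqref{eq:lcot}; and (ii) for each $U \in V_2(\mathbb{R}^d)$, the projection $P^U$ is well-defined $\sigma$-almost everywhere, so the push-forwards $P_\#^U\nu_i$ are elements of $\mathcal{P}(\mathbb{S}^1)$.

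The first step is well-definedness: I would argue that the integrand $U \mapsto LCOT_2(P_\#^U\nu_1, P_\#^U\nu_2)^2$ is $\sigma$-measurable and uniformly bounded. Measurability follows from the weak-$*$ continuity of $U \mapsto P_\#^U\nu_i$ on the full-$\sigma$-measure set $\{U : \sigma(\{x : U^Tx = 0\}) = 0\}$ together with the continuity of $LCOT_2$ under weak convergence. Uniform boundedness follows because the geodesic cost on $\mathbb{S}^1$ is bounded by $1/2$ under the parametrization of Section~\ref{subsec:COT}, so $LCOT_2 \leq \mathrm{const}$ on $\mathcal{P}(\mathbb{S}^1)^2$. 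Hence the defining integral in \eqref{eq: LSSOT} is finite, and $LSSOT_2 \ge 0$. Reflexivity $LSSOT_2(\nu,\nu) = 0$ and symmetry are immediate from the corresponding properties of $LCOT_2$ applied pointwise in $U$.

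The main content is the triangle inequality. Fix $\nu_1,\nu_2,\nu_3 \in \mathcal{P}(\mathbb{S}^{d-1})$ and define the scalar functions $g_{ij}(U) := LCOT_2(P_\#^U\nu_i, P_\#^U\nu_j)$ on $V_2(\mathbb{R}^d)$. Since $LCOT_2$ is a metric (Proposition stated in Martin et al.\ and recalled in Section~\ref{subsec:COT}), for $\sigma$-a.e.\ $U$ we have the pointwise bound $g_{13}(U) \leq g_{12}(U) + g_{23}(U)$. View the $g_{ij}$ as elements of $L^2(V_2(\mathbb{R}^d), d\sigma)$; then Minkowski's inequality gives
\begin{equation*}
\|g_{13}\|_{L^2(\sigma)} \le \|g_{12} + g_{23}\|_{L^2(\sigma)} \le \|g_{12}\|_{L^2(\sigma)} + \|g_{23}\|_{L^2(\sigma)},
\end{equation*}
which by \eqref{eq: LSSOT} is exactly $LSSOT_2(\nu_1,\nu_3) \le LSSOT_2(\nu_1,\nu_2) + LSSOT_2(\nu_2,\nu_3)$.

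The only mildly delicate point is the measurability in Step~1; everything else reduces to applying pointwise metric properties of $LCOT_2$ under the integral sign and invoking Minkowski. The identity of indiscernibles is explicitly excluded here (it is the content of Theorem~\ref{thm: metric LSSOT} and requires the Radon-transform inversion result of Corollary~\ref{eq: 1-1 for cont}), so it does not enter this proposition's proof.
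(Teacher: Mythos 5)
Your proof follows essentially the same route as the paper's: non-negativity, symmetry, and $LSSOT_2(\nu,\nu)=0$ are obtained pointwise in $U$ from the corresponding properties of $LCOT_2$, and the triangle inequality is obtained by combining the pointwise triangle inequality for $LCOT_2$ (a metric by \cite[Theorem 3.6]{martin2023lcot}) with Minkowski's inequality in $L^2(V_2(\mathbb{R}^d),d\sigma)$, exactly as in Proposition \ref{thm: pseudo metric}. Your extra well-definedness discussion is material the paper omits; just note that the full-measure set there should be $\{U:\nu_i(\{x:U^Tx=0\})=0\}$ (which has full $\sigma$-measure by Fubini) rather than the set written with $\sigma$ evaluated on subsets of the sphere, a slip that does not affect the main argument.
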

\begin{proof}
Let $\nu_1,\nu_2,\nu_3\in \mathcal{P}(\mathbb{S}^{d-1})$. 

It is straightforward to verify positivity
(i.e., $LSSOT_2(\nu_1,\nu_2)\ge 0$), and symmetry (i.e., $LSSOT_2(\nu_1,\nu_2)=LSSOT_2(\nu_2,\nu_1)$).

If $\nu_1=\nu_2$, for each $U\in V_2(\mathbb{R}^d)$, we have $P_\#^U\nu_1=P_\#^U\nu_2$, thus 
$\hat{\nu}_1^S(\cdot,U)=\hat{\nu}_2^S(\cdot,U)$.
It implies that
$\|\hat{\nu}_1^S(\cdot,U)-\hat{\nu}_2^S(\cdot,U)\|^2_{L^2(\mathbb{S}^{d-1})}=0$, and so
$LSSOT_2(\nu_1,\nu_2)=0$. 

Finally, we verify the triangle inequality. 
Let $U\in V_2(\mathbb{R}^d)$. 
By \cite[Theorem 3.6]{martin2023lcot}, we have $LCOT_2(\cdot,\cdot)$ defines a metric. Thus, the following triangle inequality holds: 
$$LCOT_2(P_\#^U\nu_1,P_\#^U\nu_2)
\leq LCOT_2(P_\#^U\nu_1,P_\#^U\nu_3)+LCOT_2(P_\#^U\nu_2,P_\#^U\nu_3).$$
Thus, 
\begin{align}
&LSSOT_2(\nu_1,\nu_2)\nonumber\\
&=\left(\int_{V_2(\mathbb{R}^d)}(LCOT_2(P_\#^U\nu_1,P_\#^U\nu_2))d\sigma(U)\right)^{1/2}\nonumber\\
&\leq\left(\int_{V_2(\mathbb{R}^d)}(LCOT_2(P_\#^U\nu_1,P_\#^U\nu_3)+LCOT_2(P_\#^U\nu_2,P_\#^U\nu_3))d\sigma(U)\right)^{1/2}\nonumber\\
&\leq\left(\int_{V_2(\mathbb{R}^d)}(LCOT_2(P_\#^U\nu_1,P_\#^U\nu_3))d\sigma(U)\right)^{1/2}+\left(\int_{V_2(\mathbb{R}^d)}(LCOT_2(P_\#^U\nu_2,P_\#^U\nu_3))d\sigma(U)\right)^{1/2}\label{pf:tri_inq}\\
&=LSSOT_2(\nu_1,\nu_3)+LSSOT_2(\nu_2,\nu_3)\nonumber
\end{align}
where \eqref{pf:tri_inq} holds by Minkowski inequality in space $L^2(V_2(\mathbb{R}^d),d\sigma)$. 
\end{proof}

\begin{theorem}[Theorem \ref{thm: metric LSSOT}]
    $LSSOT_2(\cdot,\cdot)$ is a well-defined \textbf{metric} in the set of absolutely continuous measures in $\mathcal{P}(\mathbb{S}^{d-1})$ with continuous density function, that is, it is pseudo-metric satisfying also the identity of indiscernibles. 
\end{theorem}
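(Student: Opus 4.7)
The plan is to split the theorem into its two claims and handle them in turn. For the pseudo-metric part, I would simply invoke Proposition \ref{thm: pseudo metric}: symmetry and non-negativity follow by inspection of the definition, the condition $LSSOT_2(\nu,\nu)=0$ is immediate because each sliced pushforward pair $P^U_\#\nu$ coincides with itself, and the triangle inequality comes from combining the triangle inequality for $LCOT_2$ (a genuine metric by \cite[Theorem 3.6]{martin2023lcot}) with Minkowski's inequality in $L^2(V_2(\mathbb{R}^d),d\sigma)$.

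The substantive part is the identity of indiscernibles when we restrict to absolutely continuous measures with continuous densities. My strategy is to chain three injectivity-type facts that move from LSSOT down to the densities one layer at a time. First, assuming $LSSOT_2(\nu_1,\nu_2)=0$ and using non-negativity of the integrand in the Spherical Radon representation \eqref{eq: LSSOT in terms of Radon}, I deduce that $LCOT_2((\mathcal{R}\nu_1)^U,(\mathcal{R}\nu_2)^U)=0$ for $\sigma$-almost every $U \in V_2(\mathbb{R}^d)$. Because $LCOT_2$ is a metric on $\mathcal{P}(\mathbb{S}^1)$, this promotes to the measure equality $(\mathcal{R}\nu_i)^U = P^U_\#\nu_i$ agreeing for $i=1,2$ at $\sigma$-a.e.\ $U$.

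Next, I would use that for $\nu_i = f_i\,dx$ with $f_i \in C(\mathbb{S}^{d-1})$, the density of $P^U_\#\nu_i$ on $\mathbb{S}^1$ is precisely $\mathcal{R}f_i(U,\cdot)$ (as recalled in Appendix \ref{app: Radon for measures}), and that the full function $\mathcal{R}f_i: V_2(\mathbb{R}^d)\times\mathbb{S}^1 \to \mathbb{R}$ is continuous, since $f_i$ is continuous on a compact sphere and the integration domain $\mathcal{HS}(U,z)$ depends continuously on $(U,z)$. The $\sigma$-a.e.\ equality of pushforward measures therefore yields $\mathcal{R}f_1 = \mathcal{R}f_2$ on a dense subset of the parameter space, and joint continuity promotes this to pointwise equality everywhere. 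I would then close by invoking Corollary \ref{eq: 1-1 for cont} to conclude $f_1 = f_2$, hence $\nu_1 = \nu_2$.

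The main obstacle is precisely that last injectivity step: deducing $f_1 = f_2$ from $\mathcal{R}f_1 = \mathcal{R}f_2$ on continuous densities. This is handled by the key observation in the appendices that $\mathcal{R}$ coincides, after the reparameterization in Lemma \ref{lem:trans_param_2}, with the $(d-2)$-Hemispherical Transform $\mathcal{H}$ of \cite{groemer1998spherical} (Proposition \ref{pro:R(f)-H(f)}), rather than with the classical full-hemisphere transform used in \cite{bonet2022spherical,rubin1999inversion}. Once that identification is in place, injectivity on $C(\mathbb{S}^{d-1})$ is delivered by \cite[Theorem 1]{groemer1998spherical} (Corollary \ref{eq: 1-1 for cont}), and the proof closes.
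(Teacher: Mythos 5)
Your proposal is correct and follows essentially the same route as the paper: the pseudo-metric part via Proposition \ref{thm: pseudo metric}, then the Radon representation \eqref{eq: LSSOT in terms of Radon}, the metric property of $LCOT_2$, and the injectivity of $\mathcal{R}$ on continuous densities through its identification with the $(d-2)$-Hemispherical Transform of Groemer (Proposition \ref{pro:R(f)-H(f)}, Corollary \ref{eq: 1-1 for cont}). The only cosmetic difference is that you pass from $\sigma$-a.e.\ equality to everywhere equality of $\mathcal{R}f_1$ and $\mathcal{R}f_2$ by a (plausible, though unproved) joint-continuity argument, whereas the paper instead invokes the $\sigma$-a.e.\ uniqueness of the disintegration to conclude $\mathcal{R}\nu_1=\mathcal{R}\nu_2$ as measures before applying the injectivity corollary.
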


\begin{proof}
    From Proposition \ref{thm: pseudo metric}, we only have to prove that if $LSSOT_2(\nu_1,\nu_2)=0$, then $\nu_1=\nu_2$. To do so, we will the expression of the $LSSOT_2(\cdot,\cdot)$ in terms of a Spherical Radon Transform given in given in \eqref{eq: LSSOT in terms of Radon}. Let $\nu_1,\nu_2\in\mathcal{P}(\mathbb{S}^{d-1})$ such that 
    $$0=\left(LSSOT_2(\nu_1,\nu_2)\right)^2=\int_{V_2(\mathbb{R}^d)}\left(LCOT_2((\mathcal{R}\nu_1)^U,(\mathcal{R}\nu_2)^U\right)^2  \, d\sigma(U).$$
    Thus, $$LCOT_2((\mathcal{R}\nu_1)^U,(\mathcal{R}\nu_2)^U)=0 \qquad \sigma(U)-a.s.$$ 
    Since $LCOT_2(\cdot,\cdot)$ defines a metric in $\mathcal{P}(\mathbb{S}^1)$ (see again \cite[Theorem 3.6]{martin2023lcot}), we have that
    $$(\mathcal{R}\nu_1)^U=(\mathcal{R}\nu_2)^U \qquad \sigma(U)-a.s.$$
    As the disintegration is a family $\sigma$-almost everywhere uniquely determined, we have that $$\mathcal{R}\nu_1=\mathcal{R}\nu_2.$$
    Finally, if $\nu_1,\nu_2$ have continuous density functions, by injectivity of the operator $\mathcal{R}$ (Corollary \ref{eq: 1-1 for cont}), we have that $\nu_1=\nu_2$.
\end{proof}

As a byproduct of our deductions, we obtain that the SSW in \cite{bonet2022spherical} and defined in \eqref{eq: SSW}, is also a metric in $\mathcal{P}(\mathbb{S}^{d-1})$:

\begin{theorem}\label{thm: SSW metric}
    The Spherical Sliced Wasserstein discrepancy $SSW_p(\cdot,\cdot)$, for $p\geq 1$, defines a metric in the set of absolutely continuous measures in $\mathcal{P}(\mathbb{S}^{d-1})$ with continuous density function.
\end{theorem}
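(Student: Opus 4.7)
The plan is to mirror the argument used for $LSSOT_2(\cdot,\cdot)$ in Theorem \ref{thm: metric LSSOT}, exploiting the fact that the hard work (pseudo-metric properties of $SSW_p$ and injectivity of the Spherical Radon transform $\mathcal{R}$ on continuous densities) has already been done. Non-negativity, symmetry, and the triangle inequality are inherited from \cite{bonet2022spherical}, so the only missing ingredient is the identity of indiscernibles, namely that $SSW_p(\nu_1,\nu_2)=0$ together with the continuous-density assumption forces $\nu_1=\nu_2$.

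First, I would assume $SSW_p(\nu_1,\nu_2)=0$. By the definition \eqref{eq: SSW} and non-negativity of $COT_p$, this forces $COT_p(P^U_\#\nu_1, P^U_\#\nu_2)=0$ for $\sigma$-almost every $U\in V_2(\mathbb R^d)$. Since $COT_p(\cdot,\cdot)^{1/p}$ is a bona fide metric on $\mathcal{P}(\mathbb S^1)$ (see \secref{subsec:COT}), I can conclude that $P^U_\#\nu_1 = P^U_\#\nu_2$ for $\sigma$-a.e.\ $U$.

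Next, I would translate this equality of pushforwards into an equality of the disintegrations $(\mathcal{R}\nu_i)^U$. As recalled in Appendix \ref{app: Radon for measures}, one has the identification $(\mathcal{R}\nu_i)^U = P^U_\#\nu_i$; hence $(\mathcal{R}\nu_1)^U = (\mathcal{R}\nu_2)^U$ for $\sigma$-a.e.\ $U$. Using the $\sigma$-a.e.\ uniqueness of the disintegration together with the duality formula \eqref{eq: nu_theta}, I can lift this to an equality of the full Radon measures, $\mathcal{R}\nu_1 = \mathcal{R}\nu_2$.

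Finally, since $\nu_1$ and $\nu_2$ are absolutely continuous with continuous density functions $f_1,f_2\in C(\mathbb S^{d-1})$, the equality $\mathcal{R}\nu_1 = \mathcal{R}\nu_2$ is equivalent to $\mathcal{R}f_1 = \mathcal{R}f_2$ as elements of $L^1(V_2(\mathbb R^d)\times \mathbb S^1)$. Invoking the injectivity of $\mathcal{R}$ on continuous functions (Corollary \ref{eq: 1-1 for cont}) then yields $f_1 = f_2$, hence $\nu_1=\nu_2$. The step I expect to require the most care is the transition from ``$P^U_\#\nu_1=P^U_\#\nu_2$ for $\sigma$-a.e.\ $U$'' to ``$\mathcal{R}\nu_1 = \mathcal{R}\nu_2$ as measures on $V_2(\mathbb R^d)\times\mathbb S^1$'', since it is the only place where the measure-theoretic disintegration machinery is genuinely used; everything else is a direct reduction to results already established in the paper.
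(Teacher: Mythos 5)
Your proposal is correct and follows essentially the same route as the paper: the paper also cites \cite[Proposition 5]{bonet2022spherical} for the pseudo-metric properties and then "repeats the arguments" of Theorem \ref{thm: metric LSSOT}, i.e., zero $SSW_p$ forces $COT_p(P^U_\#\nu_1,P^U_\#\nu_2)=0$ for $\sigma$-a.e.\ $U$, hence $(\mathcal{R}\nu_1)^U=(\mathcal{R}\nu_2)^U$ a.e., hence $\mathcal{R}\nu_1=\mathcal{R}\nu_2$ by uniqueness of the disintegration, and finally $\nu_1=\nu_2$ by the injectivity of $\mathcal{R}$ on continuous densities (Corollary \ref{eq: 1-1 for cont}). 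Your explicit mention of the duality/disintegration formula \eqref{eq: nu_theta} in the lifting step is, if anything, slightly more careful than the paper's one-line appeal to a.e.\ uniqueness.
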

\begin{proof}
    First, notice that since the sphere is compact, there is no need to ask that  $\nu\in\mathcal{P}(\mathbb{S}^{d-1})$ has to have finite $p$-moment, as it automatically has such a property. In \cite[Proposition 5]{bonet2022spherical}, the authors show that $SSW_p(\cdot,\cdot)$ is a pseudo-metric. Finally,  
    by using the expression
    \begin{align}
    (SSW_{p}(\mu,\nu))^p&:=\int_{V_2(\mathbb{R}^d)}\left(COT_p(P_{\#}^U\mu,P_{\#}^U\nu)\right)^p \, d\sigma(U)\\
    &=\int_{V_2(\mathbb{R}^d)}\left(COT_p((\mathcal{R}\mu)^U,(\mathcal{R}\nu)^U)\right)^p \, d\sigma(U)
    \end{align}
    and repeating the arguments in the proof of Theorem \ref{thm: metric LSSOT} with the Wasserstein distance $COTp(\cdot,\cdot)$ in the unit circle in place of the LCOT distance, we get that $SSW_p(\cdot,\cdot)$ satisfies the identity of indiscernibles when restricted to probability measures with continuous densities. 
\end{proof}

\section{Runtime Analysis with respect to Number of Slices}
Here we provide further running time comparisons of the slice-based methods for pairwise distance calculations under different numbers of slices $L=500, 1000, 5000$. All methods are run on NVIDIA RTX A6000 GPU.
\label{app: runtime-slices}
\begin{figure}[H]
    \centering
    \includegraphics[width=\linewidth]{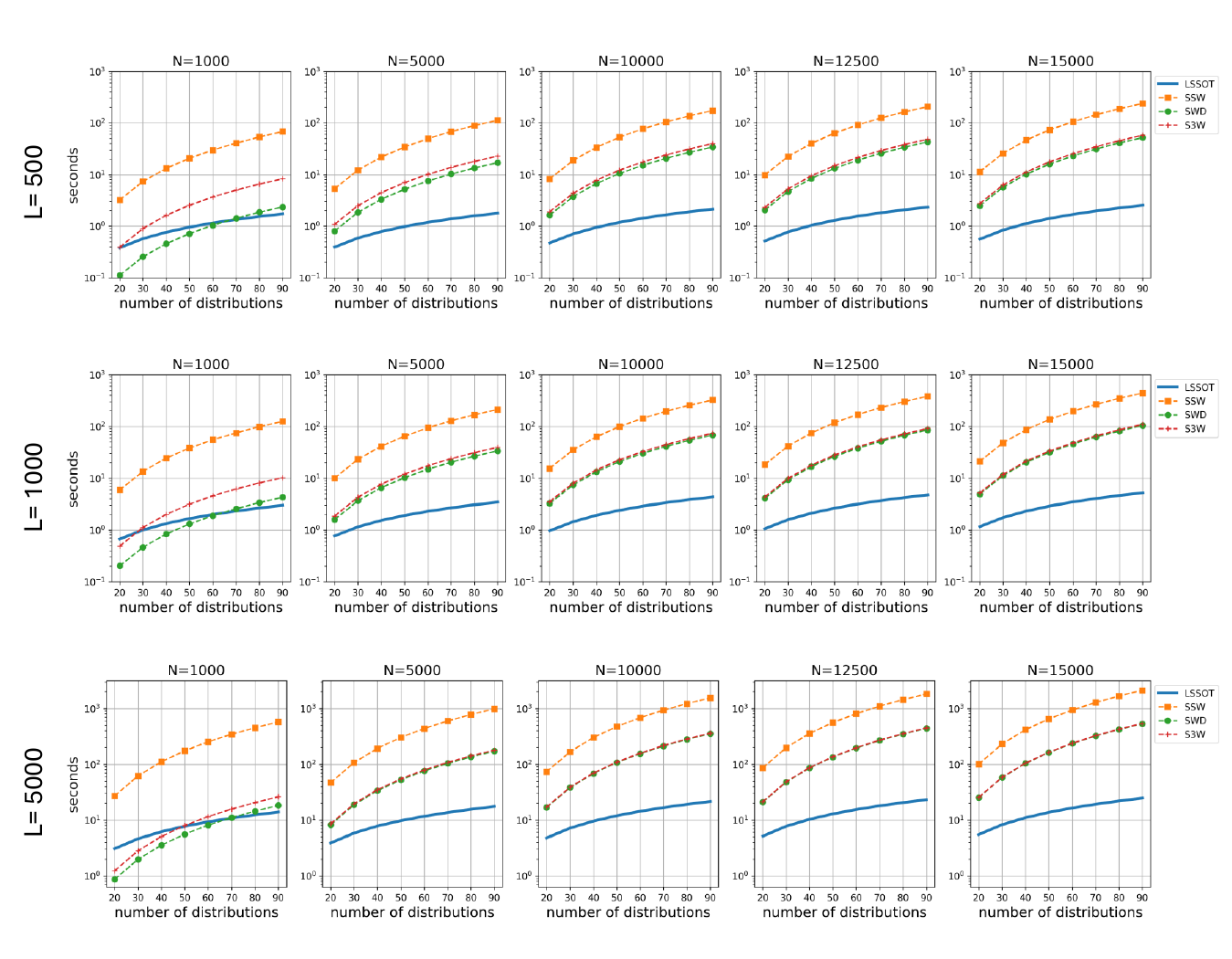}
    \caption{Running time comparison under varying number of slices. $L$ denotes the number of slices used in all the slice-based methods in each row. $N$ is the number of samples in each distribution for each column.}
    \label{fig:runtime-slices}
\end{figure}

\section{Implementation Details of Cortical Surface Registration}
\label{app: cortical-reg}
\subsection{Icospheres and Resampling}
S3Reg relies on Spherical-Unet \citep{Zhao2019SphericalUO} as the backbone, which capitalizes on the regular structure of icospheres to perform convolution and pooling on spheres. Spherical surfaces are resampled on icospheres in order to be input to the S3Reg network.

Icospheres are generated from a regular convex icosahedron with 12 vertices, then by recursively adding a new vertex to the center of each edge in each triangle. The number of vertices on the icospheres is increased from 12 to 42 (level 1), 162 (level 2), 642 (level 3), 2562 (level 4), 10,242 (level 5), 40,962 (level 6), 163,842 (level 7), etc. In this work, we focus on the level-6 icosphere with 40,962 vertices on the cortical surfaces.

For resampling in the preprocessing step, we employ the simple but fast Nearest Neighbor method and implement it using sklearn.neighbors.KDTree function from Scikit Learn \citep{scikit-learn}. The feature value on each output point is represented as a weighted sum of the feature values on the nearest three vertices.

\subsection{S3Reg Framework and Loss Components}
In a nutshell, the S3Reg framework receives a concatenation of fixed and moving spherical surface maps defined on icospheres as input, processes them through an encoder-decoder architecture, and finally outputs the velocity field represented by tangent vectors on $\mathbb{S}^2$. In particular, as the convolution kernel in Spherical U-Net could cause distortions around the poles \citep{Zhao2019SphericalUO}, 3 complementary orthogonal Spherical U-Nets are combined to predict the deformation field so that the polar areas are also regularly processed at least two times. As the 3 predicted velocity fields should be orthogonal accordingly, a loss term of inconsistency is added to the total loss. We refer the readers to \citet{9389746} (Section III-A) for details of the consistency loss.

As the similarity losses are thoroughly discussed in the main text, we will now describe other loss components in more detail. Let $F$ denote the fixed image and $M$ be the moving image. Suppose $p_F$, $p_M$ are the corresponding parcellation maps of $F$ and $M$, respectively. Let $\phi$ denote the velocity field predicted by the network.

\textbf{Dice Score and Dice Loss}
Parcellation maps can be integrated into the training process to enforce biological validity. The Dice Score is a measure of similarity/overlap between two parcellations. Assume both $p_F$ and $p_M$ have $K$ regions of interest (ROIs), then the Dice score is defined as
\begin{align}
    \label{eq: Dice}
    \text{Dice}(p_F, p_M\circ \phi) = \frac{1}{K}\sum_{k=1}^K2\frac{|p_F^k\cap(p_M^k\circ\phi)|}{|p_F^k|+|p_M^k\circ\phi|},
\end{align}
where $p_M^k\circ\phi$ is the warped moving image by $\phi$, the superscript $k$ denotes the $k$-th ROI. Since the Dice Score is between $0$ and $1$, the Dice Loss can be defined as $L_{\text{Dice}}(p_F, p_M\circ \phi)=1-\text{Dice}(p_F, p_M\circ \phi)$.

\textbf{Smoothness Loss}
The local spatial non-smoothness of the output velocity field $\phi$ is penalized by $L_{\text{smooth}}(\phi) =\sum_v|\nabla_{\mathbb{S}^2}C(\phi(v))|$, where the sum is over all the moving vertices $v$, $C$ is the 1-ring convolution operator \citep{Zhao2019SphericalUO}, and $\nabla_{\mathbb{S}^2}$ denotes the spherical gradients.

% \subsection{Hyperparameter Tuning}

\section{More visualizations of Cortical Surface Registration}
\label{app: visual-cortical-reg}
Below, we provide additional visualizations of our cortical surface registration comparison. The figures are presented in landscape mode to enhance visual inspection.

\begin{sidewaysfigure}[ht!]
    \centering
    \includegraphics[width=.9\textwidth]{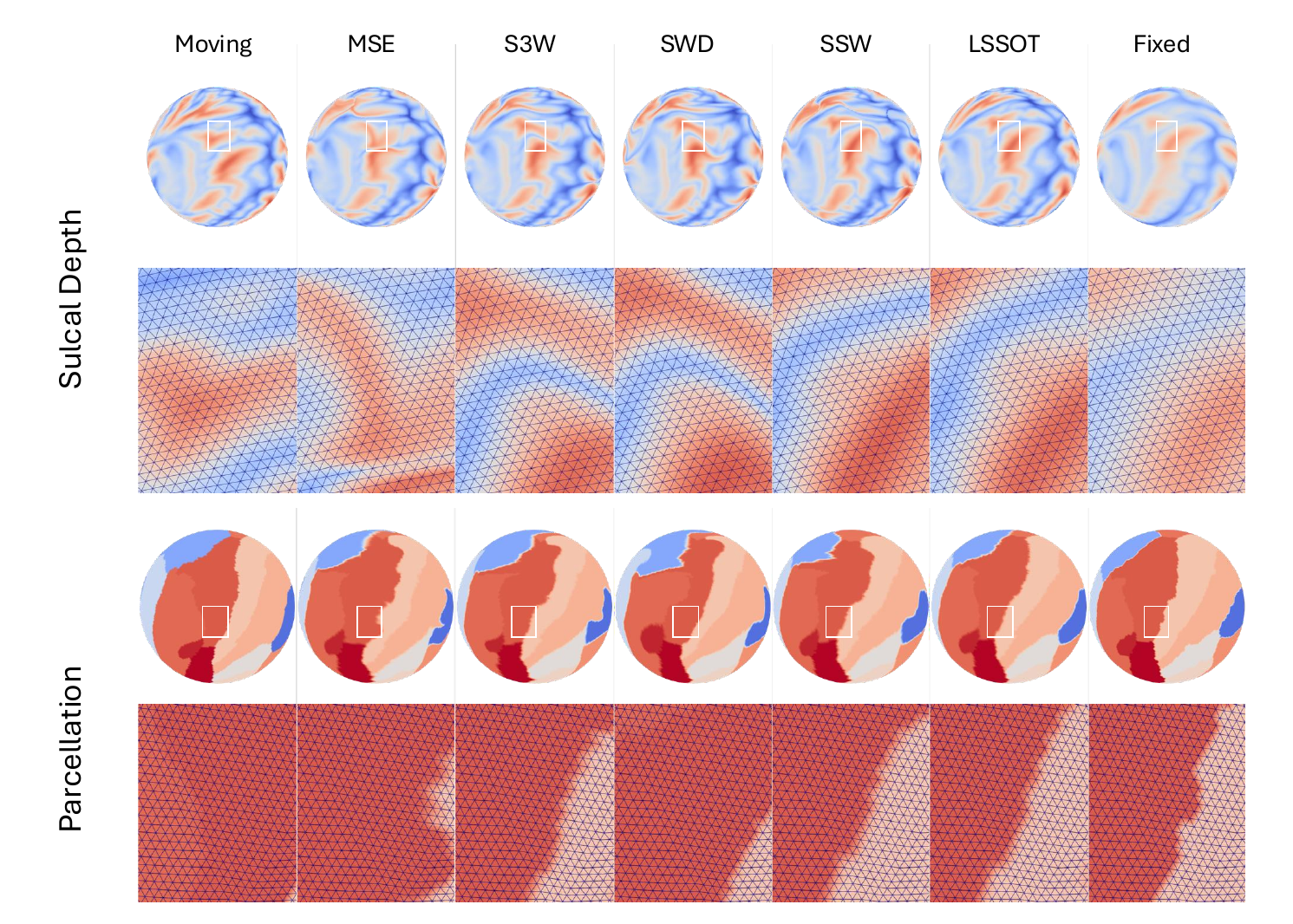}
    \caption{Qualitative registration results from Subject A00060516 in the NKI dataset.}
    \label{fig:nki-A00060516}
\end{sidewaysfigure}

\begin{sidewaysfigure}[ht!]
    \centering
    \includegraphics[width=.9\textwidth]{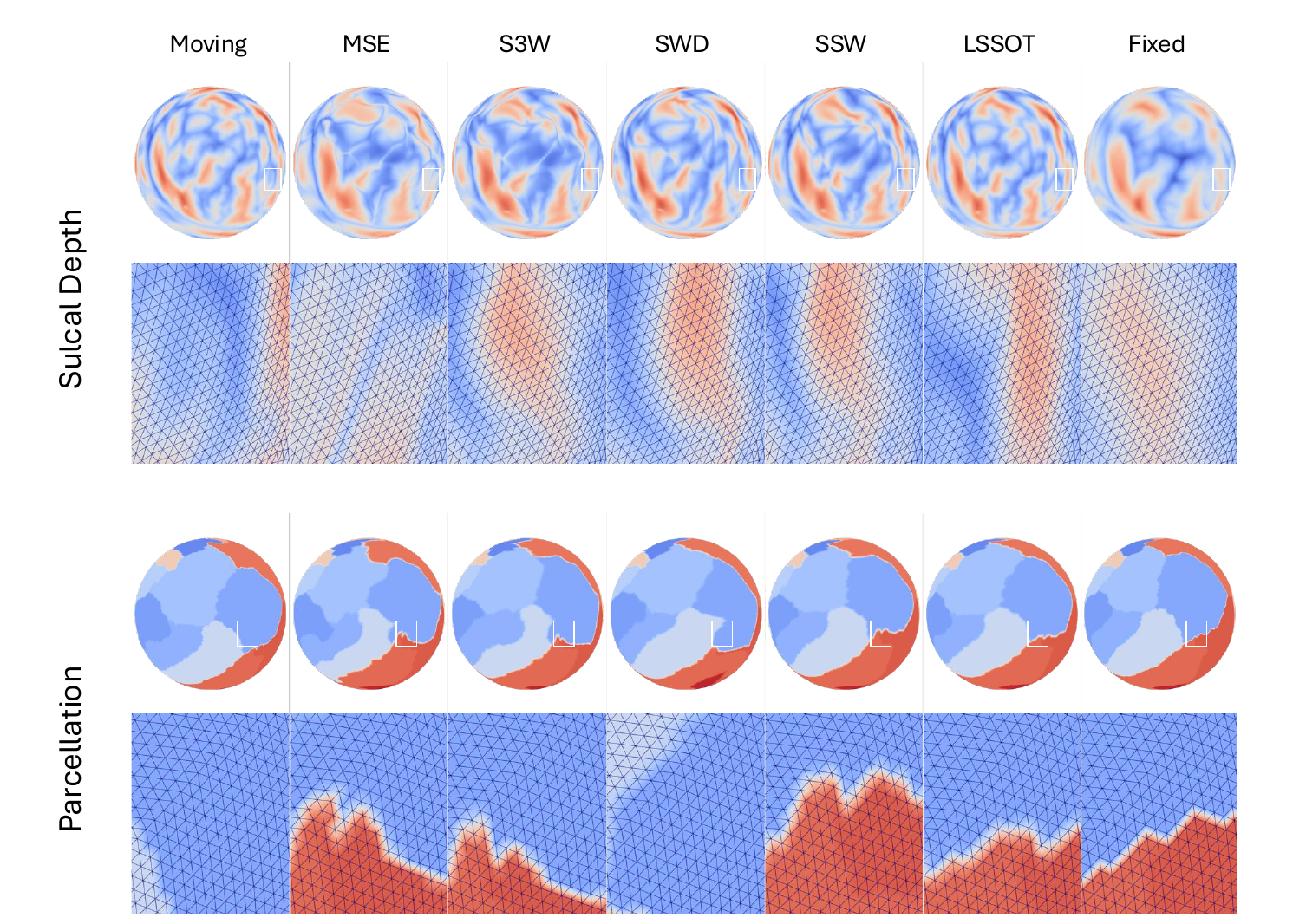}
    \caption{Qualitative registration results from Subject A00076798 in the NKI dataset.}
    \label{fig:nki-A00076798}
\end{sidewaysfigure}

\begin{sidewaysfigure}[ht!]
    \centering
    \includegraphics[width=.9\textwidth]{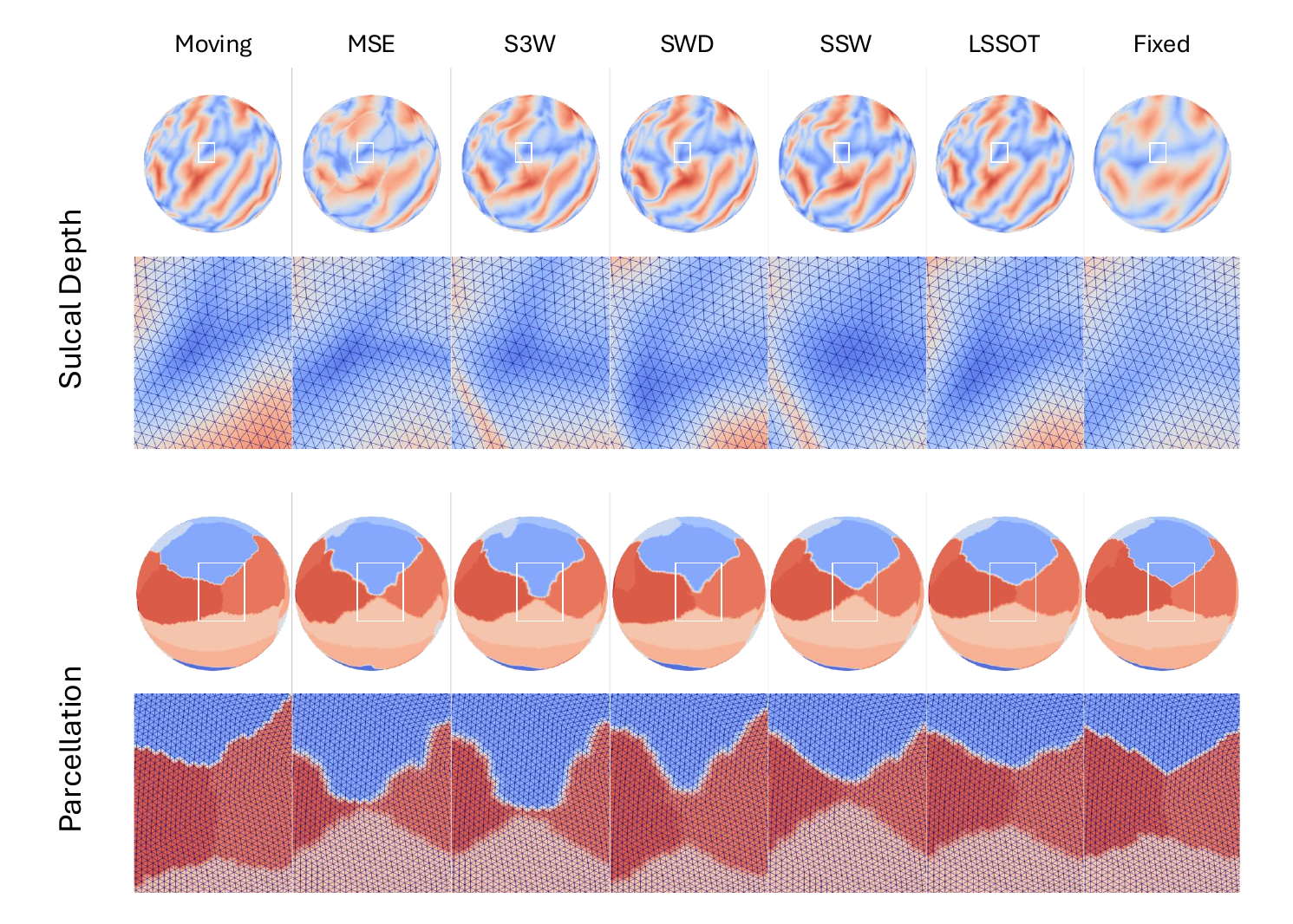}
    \caption{Qualitative registration results from Subject 0356 in the ADNI dataset.}
    \label{fig:adni-0356}
\end{sidewaysfigure}

\begin{sidewaysfigure}[ht!]
    \centering
    \includegraphics[width=.9\textwidth]{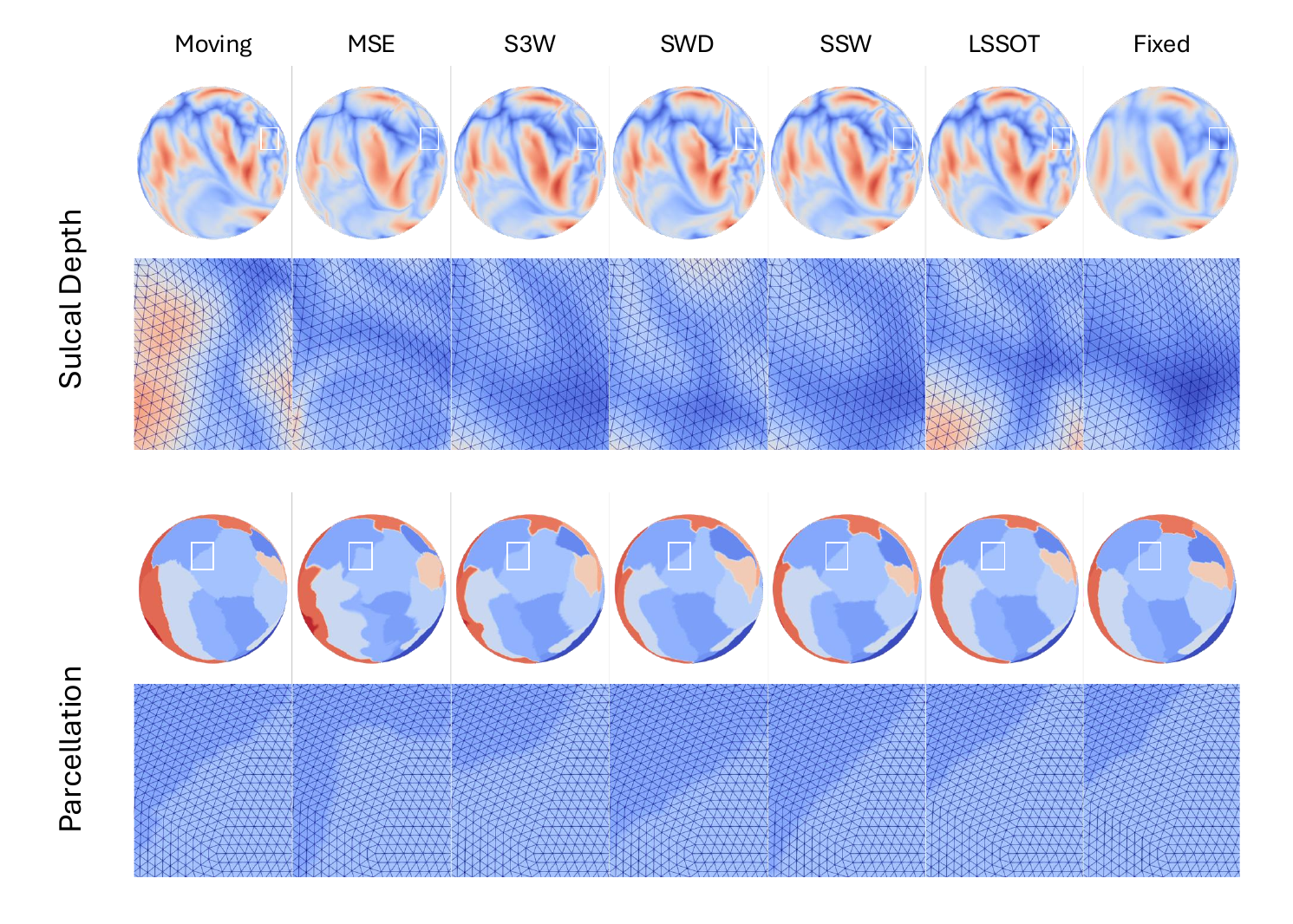}
    \caption{Qualitative registration from Subject 6650 in the ADNI dataset.}
    \label{fig:adni-6650}
\end{sidewaysfigure}

\begin{sidewaysfigure}[ht!]
    \centering
    \includegraphics[width=.9\textwidth]{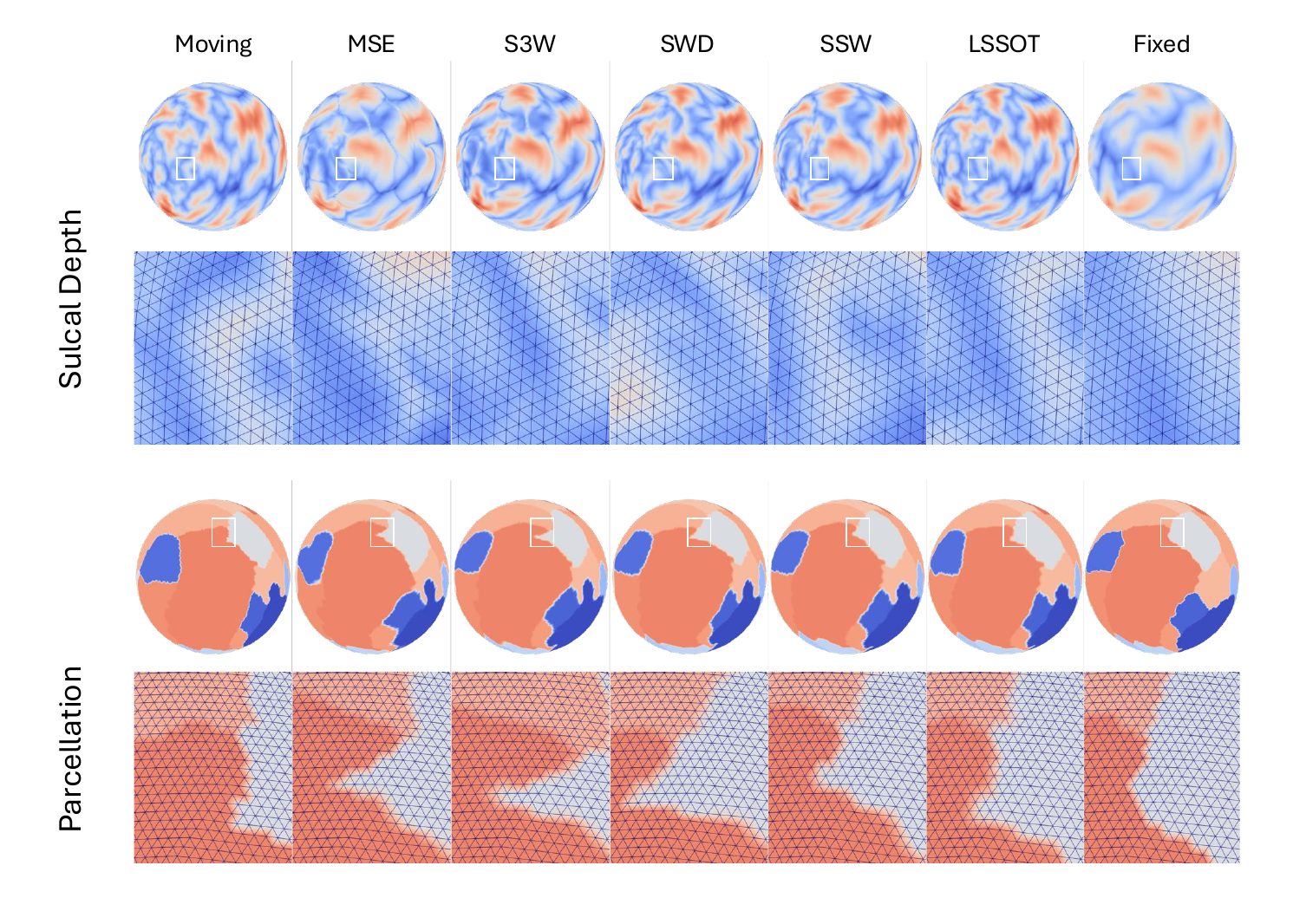}
    \caption{Qualitative registration results from Subject 7060 in the ADNI dataset.}
    \label{fig:adni-7060}
\end{sidewaysfigure}

\newpage
\clearpage

\section{3D Shape Clustering and Retrieval}
To further validate the effectiveness of LSSOT in evaluating the similarity of spherical distributions, we conduct additional experiments using a variety of genus-0 3D shapes. Our methodology involves a two-step process: 1) mapping the 3D shapes onto spherical surfaces; and 2) performing clustering and nearest neighbor retrieval tasks on these spherical representations. For the spherical mapping process, we employ two distinct techniques, Inflation and Projection (IAP) and Deep Unsupervised Learning using PointNet++ \cite{qi2017pointnet++}.

\subsection{Dataset}
We conduct experiments on mesh models from the Watertight Track of SHREC 2007 dataset \citep{Giorgi2007SHapeRC}. This dataset contains surfaces belonging to 19 categories, among them, we select 4 genus-0 closed surface models, human, octopus, teddy bear, and Fish. Moreover, as explored in the previous experiment, LSSOT is sensitive to rotations, therefore we remove the meshes with different orientations than the rest.

\subsection{Spherical Mapping}
\textbf{Inflation and Projection.} 
A direct projection from a genus-0 closed surface to the sphere by fixing an inside point source and normalizing the vertices may cause undesirable topologically incorrect foldings. One workaround is iteratively smoothing and inflating the surface before the final projection. This inflation and projection is a common spherical mapping strategy used in neuroimaging data processing tools such as FreeSurfer \citep{fischl2012freesurfer}, Connectome Workbench \citep{marcus2013human}.

\textbf{Deep Unsupervised Learning using PointNet++.}
To accelerate the forward pass of spherical mapping, we capitalize on deep unsupervised learning and train a PointNet++ network to map a cortical surface on a sphere with minimal distortion. The trained network then serves as a deep learning-based spherical mapping tool.

\textit{Loss function.}
\citet{zhao2022fast} proposed an unsupervised approach of mapping cortical surfaces to a sphere using Spherical U-Net \cite{Zhao2019SphericalUO} to learn the spherical diffeomorphic deformation field for minimizing the metric (distance), area, or angle distortions. We follow their definitions of metric ($J_d$) and area ($J_a$) distortions as local relative differences, and conformal/angle distortion ($J_c$) after market share normalization, but adapt them to general meshes instead of restricting to icosahedron-reparameterized surfaces. For a pair of original input surface $R$ and the predicted spherical mesh $S$, the loss function has three components:
\begin{align}
    &J_d(R, S) = \min_{r\in\mathbb{R}^+}\frac{1}{N_e}\sum_{i=1}^{N_e}\frac{|rd_i^S-d_i^R|}{d_i^R} \label{eq: edge-distort}\\
    &J_a(R, S) = \min_{r\in\mathbb{R}^+}\frac{1}{N_f}\sum_{j=1}^{N_f}\frac{|ra_j^S-a_j^R|}{a_j^R} \label{eq: area-distort}\\
    &J_c(R, S) = \frac{1}{3N_f}\sum_{k=1}^{N_f} \frac{1}{\pi}(|\alpha^R_k-\alpha^S_k| + |\beta^R_k-\beta^S_k| + |\theta^R_k-\theta^S_k|)
\end{align}
where $N_e$ is the total number of edges, $N_f$ is the total number of faces. $d_i^R$ and $d_i^S$ are the length of $i$-th edge on original surface and predicted sphere. Similarly, $a_j^R$ and $a_j^S$ are the area of $j$-th face on original surface and predicted sphere; $(\alpha^R_k, \beta_k^R, \theta^R_k)$ and $(\alpha^S_k, \beta_k^S, \theta^S_k)$ are the inner angles of the $k$-th face on original surface and predicted sphere.

\begin{figure}[H]
    \centering
    \includegraphics[width=.7\textwidth]{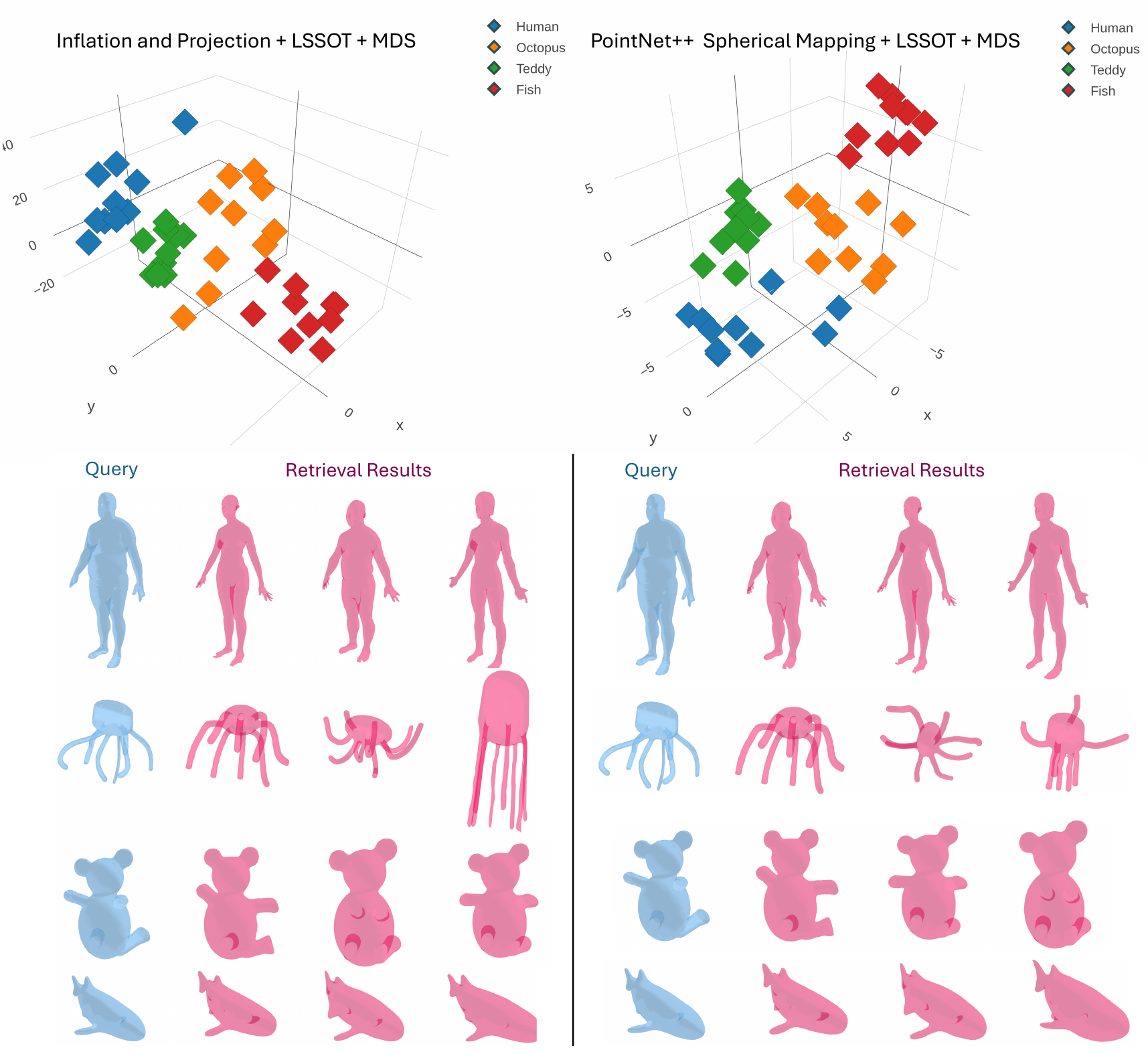}
    \caption{Manifold learning of clustering results (top) and nearest neighbor retrieval results (bottom) using Inflation and Projection (left) and PointNet++ (right) as spherical mappings. }
    \label{fig:shrec-results}
\end{figure}
\subsection{Results}
Figure \ref{fig:shrec-results} illustrates the effectiveness of the LSSOT metric under both spherical mapping techniques. The results demonstrate that LSSOT successfully identifies inherent similarities and dissimilarities among the spherical surface representations, regardless of the mapping method employed. This capability translates into good performance in both clustering and nearest neighbor retrieval tasks. The consistency of LSSOT's effectiveness under different mapping approaches underscores its robustness as a metric for analyzing spherical distributions and comparing surface geometries.

\section{Gradient flow with respect to LSSOT in the spherical space}
\subsection{background: introduction to Stiefel manifold}
In this section, we briefly introduce the gradient flow and related concepts in spherical space. 
First, we introduce the Riemannian structure of the Stiefel manifold, which is a general case of spherical space. 
Next, we specify the related concepts in the spherical space. 

Given positive integers $1\leq k\leq d$, the Stiefel manifold is given by 
$$\mathbb{V}_{d,k}:=\mathbb{V}_{k}(\mathbb{R}^d)=\{V\in \mathbb{R}^{d\times k}:V^\top V=I_k\}.$$
It is a $\left(dk-(k+\binom{k}{2})\right)$-dimensional manifold. 

Given $X\in \mathbb{V}_{d,k}$, the tangent space is given by: 
\begin{align}
\mathcal{T}_{X}(\mathbb{V}_{d,k}):=\{Z\in \mathbb{R}^{d\times k}: ZX+XZ=0\}\label{eq:tangent_V}. 
\end{align}

Given matrix $G\in \mathbb{R}^{d\times k}$, the projection of $G$ onto $\mathcal{T}_X(\mathbb{V}_{d,k})$ is given by, 
\begin{align}
  P_{T_X(\mathbb{V}_{d,k})}(G):=G-X\text{sym}(G^\top X)\label{eq:project_V},
\end{align}
where $\text{sym}(G^\top X)=\frac{1}{2}(G^\top X+X^\top G)$ is the symmetric matrix obtained from $G^\top X$. 

Given a differentiable function $F: V_{d,k}\to \mathbb{R}$,  the following steps can describe the gradient of $F$ in the manifold. In particular, choose $X\in \mathbb{V}_{d,k}$:
\begin{itemize}
    \item Step 1: compute Euclidean gradient $\nabla_E(F(X))$.
    \item Step 2: the Stiefel manifold gradient is defined as:
    \begin{align}
      \nabla_S F(X):=P_{\mathcal{T}_X(\mathbb{V}_{d,k})}(\nabla_E (F(X)))  \label{eq:Gradient_S}.
    \end{align}
\end{itemize}
We consider the gradient flow problem: 
$$\partial_t X_t=-\nabla_S F(X_t),\forall X\in \mathbb{V}_{d,k}.$$
Let $\eta$ denote the step size (i.e., the learning rate for the machine learning community), the discrete gradient descent step of the above problem becomes: 
\begin{align}
X\mapsto X-\eta \nabla_SF(X)=X-\eta (\nabla_EF(X)-X\text{sym}(\nabla_E F(X)^\perp X))\label{eq:gradient_descent_St}.
\end{align}

\subsection{Gradient flow in spherical space}
In spherical space, we have $k=1$ (in this case, we use convention $\binom{1}{2}=0$). Thus, the tangent space \eqref{eq:tangent_V} becomes: 
\begin{align}
\mathcal{T}_{x}(\mathbb{S}^{d-1}):=\text{span}(x)^\perp:=\{z\in \mathbb{R}^{d}: z^\top x=0\}\label{eq:tangent_Sph}, 
\end{align}
where $\text{span}(x)$ is the 1D line spaned by $x$. The corresponding projection \eqref{eq:project_V} becomes
\begin{align}
\mathbb{R}^d\ni g\mapsto P_{\mathcal{T}_{x}(\mathbb{V}_{d,k})}(g):=g-x(g^\top x)\in \mathcal{T}_x(\mathbb{S}^{d-1})\label{eq:proj_Sph}. 
\end{align}
Note, the above projection is essentially the Euclidean projection of $x$ into the hyperplane $x^\perp$. 

Given $X=[x_1,\ldots, x_n]\in \mathbb{R}^{d\times n},Y=[y_1,\ldots, y_m]\in \mathbb{R}^{d\times m}$, where $Y$ is fixed.
Let $\nu_1=\sum_{i=1}^n p_i \delta_{x_i}, \nu_2=\sum_{j=1}^m q_j\delta_{y_j}$. We consider the following function 
: 
\begin{align}
C(X)&:=LSSOT(\mu,\nu)\nonumber\\
    &\approx \frac{1}{T}\sum_{t=1}^TLCOT((\theta_t)_\#\mu,(\theta_t)_\#\nu) \label{eq:C(X)}.
\end{align}
where each $\theta_t\in\mathbb{S}^{d-1}$, \eqref{eq:C(X)} is the Monte Carlo approximation.

We select one $\theta$ and consider the 1-dimensional LCOT problem 
\begin{align}
&LCOT((\theta)_\#\nu_1,(\theta)_\#\nu_2)\nonumber\\
&=\int_0^1 h\left(\left|F_{\theta_\#\nu_1}^{-1}(s-\mathbb{E}[\theta_\#\nu_1]+\frac{1}{2})-|F_{\theta_\#\nu_2}^{-1}(s-\mathbb{E}[\theta_\#\nu_2]+\frac{1}{2})\right|^2\right) ds\nonumber  
\end{align}
where $h(r)=\min(r,1-r)$.

Note, the above quantity implicitly defines a transportation plan between $\mu$ and $\nu$. In particular, suppose $\gamma_\theta\in \mathbb{R}^{n\times m}$, with 
\begin{align}
    \gamma_{\theta}[i,j]=|\{s: F_{\theta_\#\nu_1}^{-1}(s-\mathbb{E}[\theta_\#\nu_1])\equiv x_i, F_{\theta_\#\nu_1}^{-1}(s-\mathbb{E}[\theta_\#\nu_2])\equiv y_j\}| \label{eq:Gamma_theta}, 
\end{align}
where $|\cdot|$ is the uniform measure (Lebesgue measure) in $[0,1)$, $a\equiv_1 b$ means $a \mod 1= b$. 
It is straightforward to verify $\gamma_\theta\in\Gamma(\theta_\#\nu_1,\theta_\#\nu_2)$, and thus is a transportation plan between $\theta_\#\nu_1,\theta_\#\nu_2$. 

Therefore, we can explicitly write \eqref{eq:C(X)} in term of $X=[x_1,\ldots x_n]$: 
\begin{align}
\eqref{eq:C(X)}=\frac{1}{T}\sum_{t=1}^T \sum_{i,j}h(\theta_t^\top x_{i}-\theta_t^\top y_j)\gamma_{\theta_t}[i,j]\label{eq:C(X)_2}.
\end{align}
where each $\gamma_{\theta_t}$ is defined via \eqref{eq:Gamma_theta}. 
For each $x_i$, the Euclidean gradient is given by 
\begin{align}
\nabla_{E}C(x_i)
&=\frac{1}{T}\sum_{t=1}^T \theta_t\left(\sum_{j: |\theta_t^\top x_i-\theta_t^\top y_j|\leq 1/2}2(\theta_t^\top x_{i}-\theta_t^\top y_j)\gamma_{\theta_t}[i,j]\right.\nonumber\\
&\qquad-\left.\sum_{j: |\theta_t^\top x_i-\theta_t^\top y_j|>1/2}2(\theta_t^\top x_{i}-\theta_t^\top y_j)\gamma_{\theta_t}[i,j]\right)\label{eq:Gradient_C(X)}. 
\end{align}

Based on this, we derive the gradient descent step for the following gradient flow problem.  
$$-\nabla_S C(X)=-V(X)$$
That is,
\begin{align}
    X\mapsto X- \eta \nabla_S(C(X))=X-(X-X(\nabla_EC(X)^\top X))\label{eq:GD_LSSOT}
\end{align}
where $\nabla_E(C(X))[:,i]$ is given by \eqref{eq:Gradient_C(X)} for each $i$.

\subsection{Implementation of Gradient Flow}
\label{subsec: gradient-flow algo}
\textbf{Riemannian Gradient Descent.} Let $\mu_\mathcal{X} = \frac{1}{N}\sum_{n=1}^N\delta_{x_n}$ be the source distribution with mass uniformly distributed at $\mathcal{X}=\{x_n\}_{n=1}^N\subset S^{d-1}$, and $\nu_\mathcal{Y} = \frac{1}{M}\sum_{m=1}^M\delta_{y_m}$ be the target distribution with $\mathcal{Y}=\{y_m\}_{m=1}^M\subset S^{d-1}$. For each $x\in \mathcal{X}$, let $x^{(k)}$ be the location of $x$ at $k$-th iteration, then at $(k+1)$-th iteration, the update should be the following \citep{absil2008optimization, boumal2019global}:
\begin{equation}
    x^{(k+1)} = \exp_{x^{(k)}}(-\gamma_k\nabla_{S^{d-1}}\mathcal{F}(x^{(k)})),
\end{equation}
where $\gamma_k$ is the learning rate at $k$-th iteration, the exponential map $\exp_x(v)=x\cos{\|v\|}+\frac{v}{\|v\|}\sin{\|v\|}$, and $\nabla_{S^{d-1}}\mathcal{F}(x^{(k)})$ is the projection of the Euclidean gradient $\nabla_{\mathbb{R}^d} \mathcal{F}(x^{(k)})$ onto the tangent space at $x^{(k)}$. We implement this gradient descent method using the exponential function and projection function in the Geoopt library \citep{geoopt2020kochurov} \footnote{https://github.com/geoopt/geoopt}.

\textbf{Spherical Coordinate Gradient Descent.} A more straightforward gradient descent method is to work in the spherical coordinate system. By the following correspondences (in $\mathbb{S}^2$ for example) between $(x, y, z)\in \mathbb{S}^2$ and $(\theta, \phi)$
\begin{align*}
    &x=\cos{\theta}\sin{\phi}\\
    &y=\sin{\theta}\sin{\phi}\\
    &z=\cos{\phi}
\end{align*}
and 
\begin{align*}
    &\theta=\arctan(\frac{y}{x})\\
    &\phi=\arccos{z}
\end{align*}
we can calculate the gradient of the loss function with respect to $(\theta, \phi)$. Note here $\arctan$ must be suitably defined according to the quadrant.
\subsection{Additional Point Cloud Interpolation Visualizations}
\label{subsec: pc_interp}
\begin{figure}[H]
    \centering
    \vspace{-.2in}
    \includegraphics[width=0.65\linewidth]{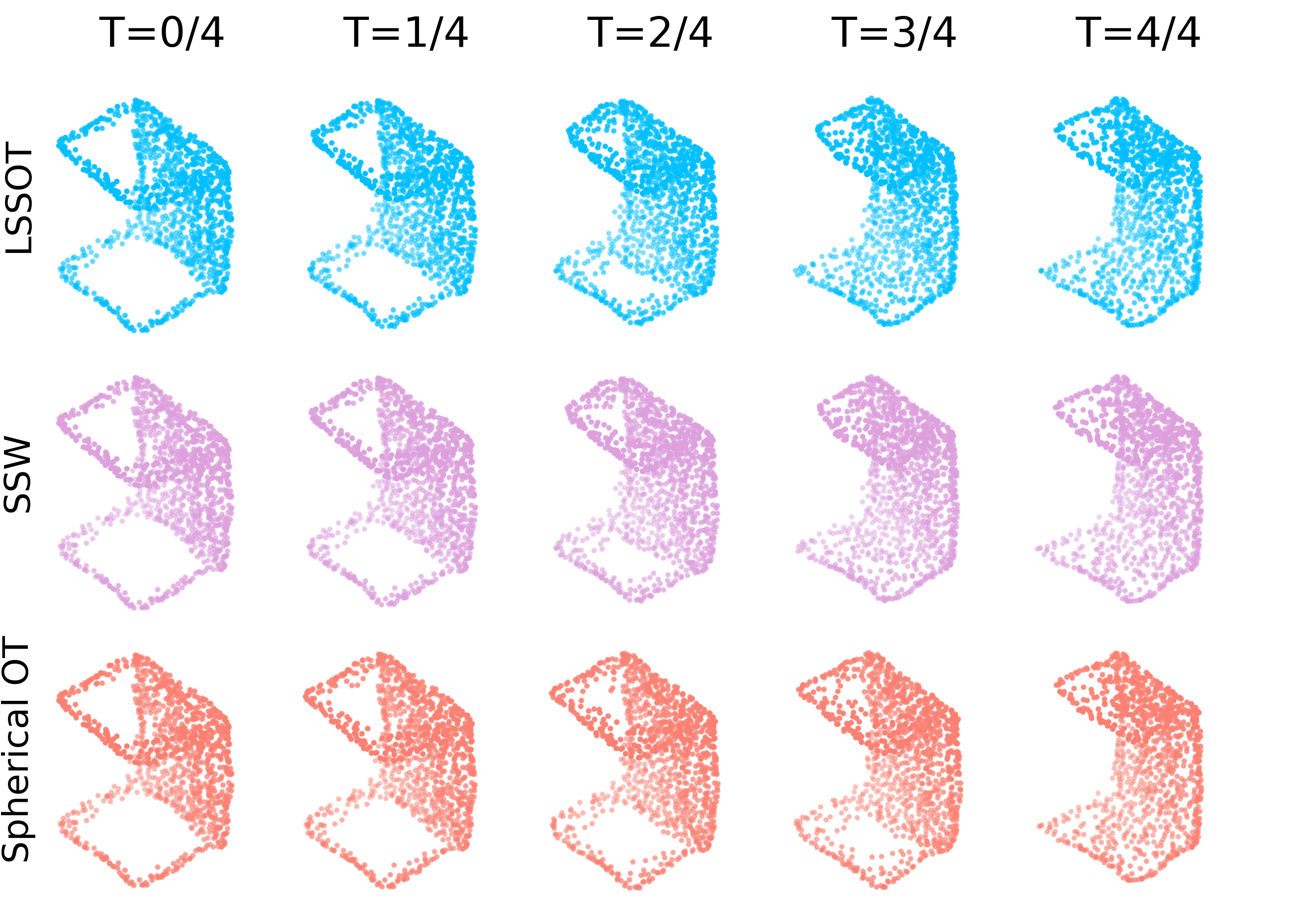}
    \vspace{-.2in}
    \caption{Gradient flow interpolations between two tables.}
    \label{fig:35_53}
\end{figure}
\begin{figure}[H]
    \centering
    \vspace{-.2in}
    \includegraphics[width=0.65\linewidth]{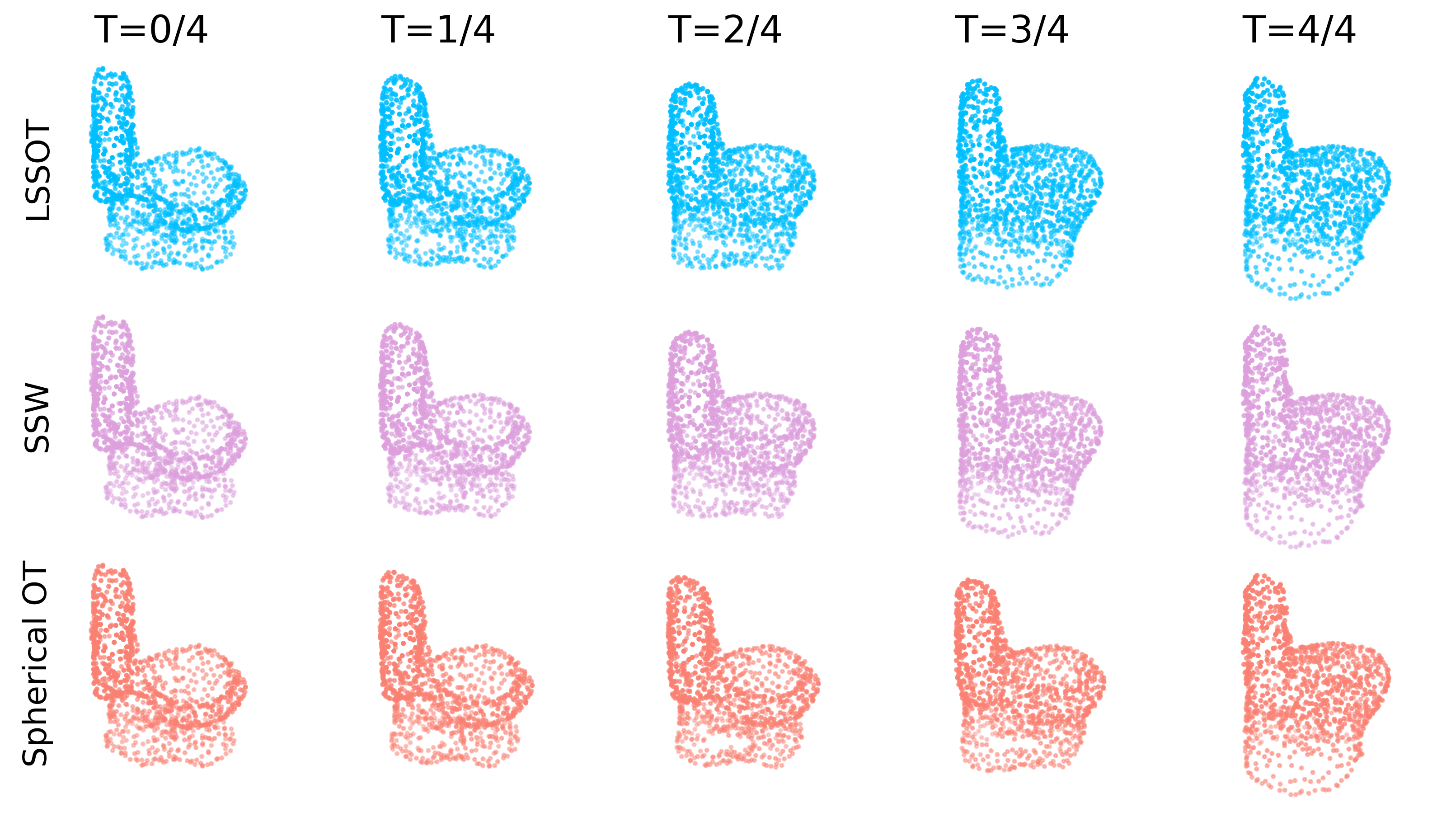}
    \vspace{-.2in}
    \caption{Gradient flow interpolations between two toilets.}
    \label{fig:92_99}
\end{figure}
\begin{figure}[H]
    \centering
    \vspace{-.25in}
    \includegraphics[width=0.65\linewidth]{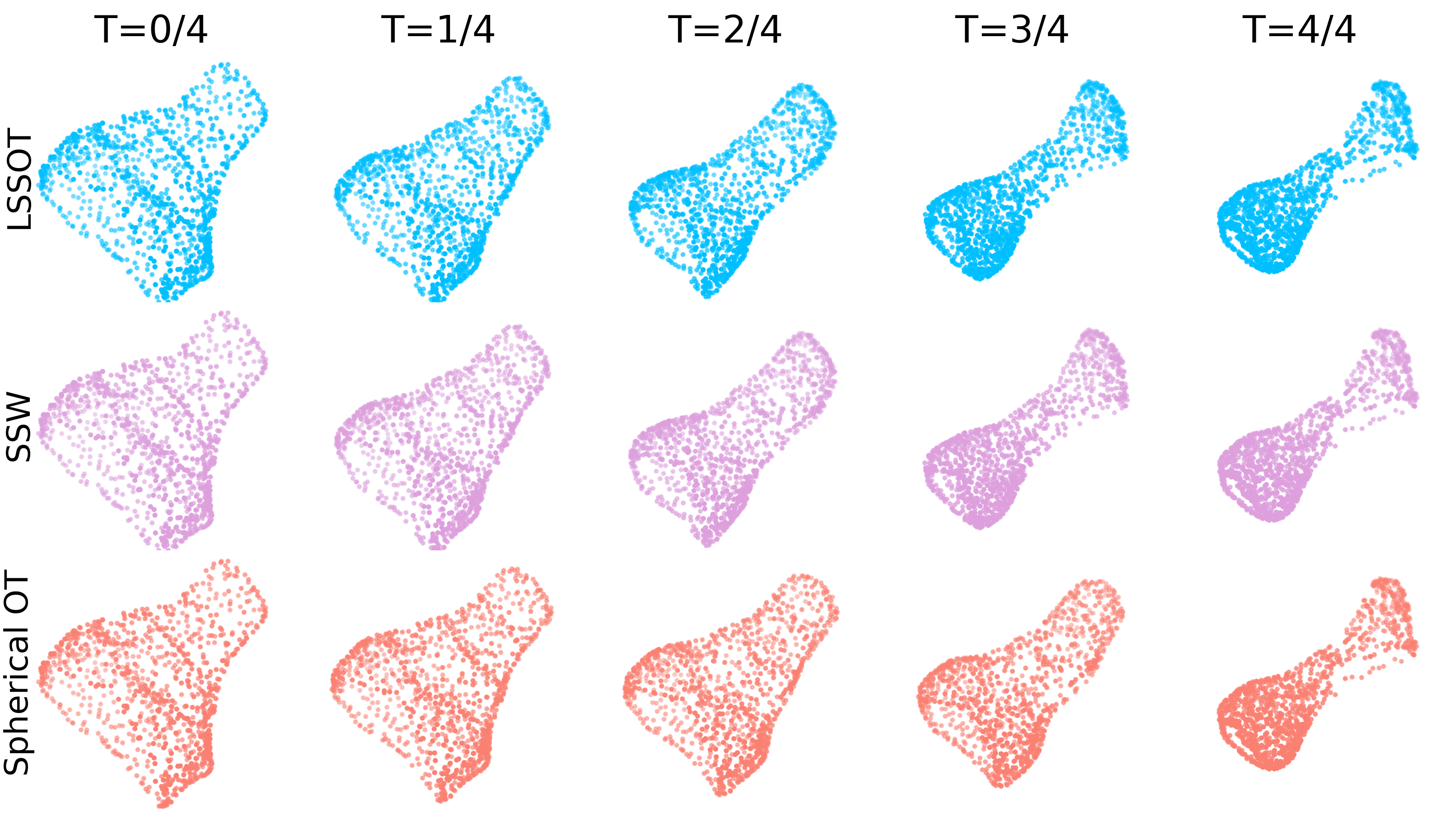}
    \vspace{-.15in}
    \caption{Gradient flow interpolations from a range hood to a plant.}
    \label{fig:37_82}
\end{figure}

\begin{figure}[H]
    \centering
    \includegraphics[width=0.7\linewidth]{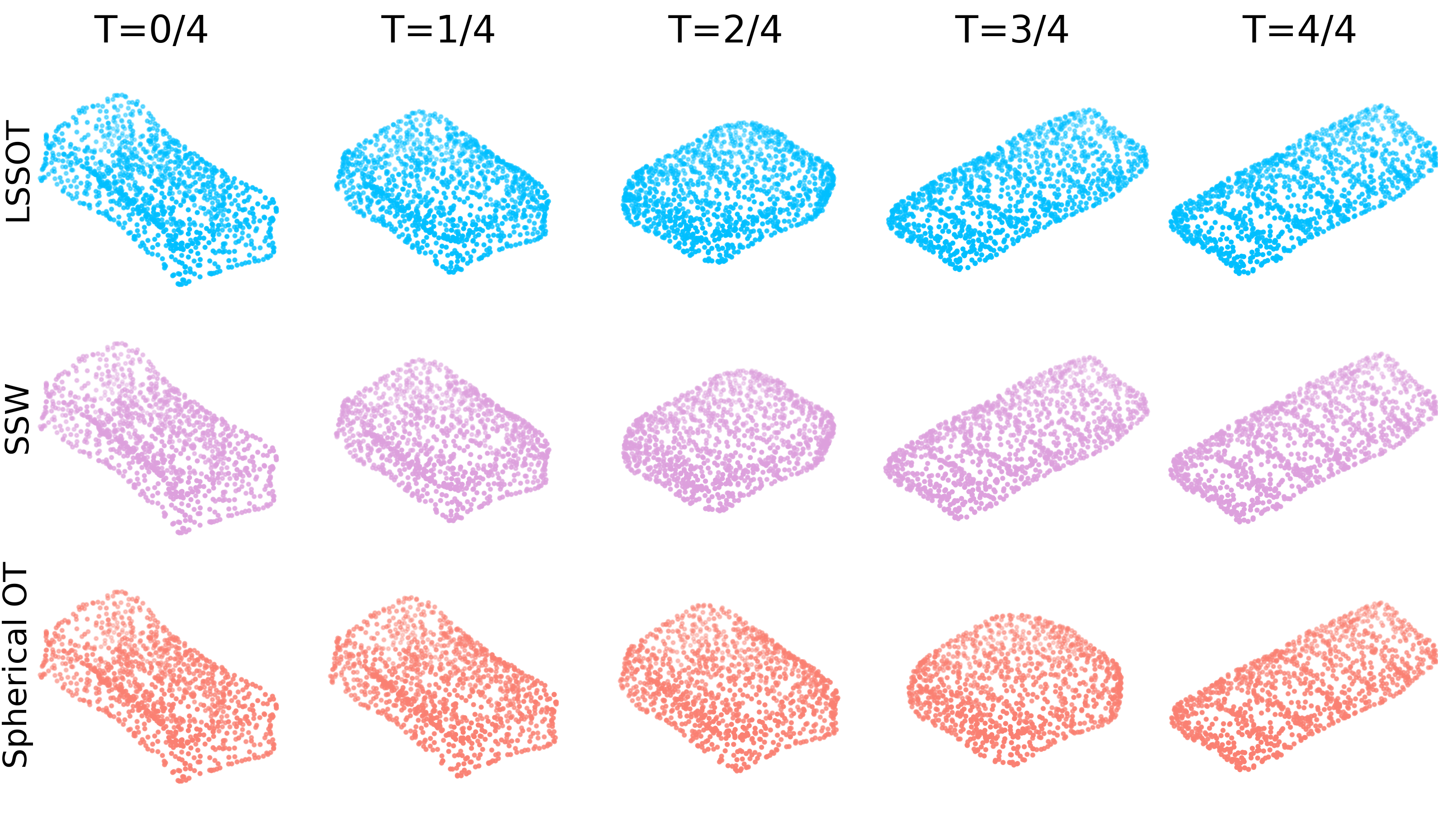}
    \caption{Gradient flow interpolations from a TV stand to a bookshelf.}
    \label{fig:74_68}
\end{figure}

\begin{figure}[H]
    \centering
    \includegraphics[width=0.7\linewidth]{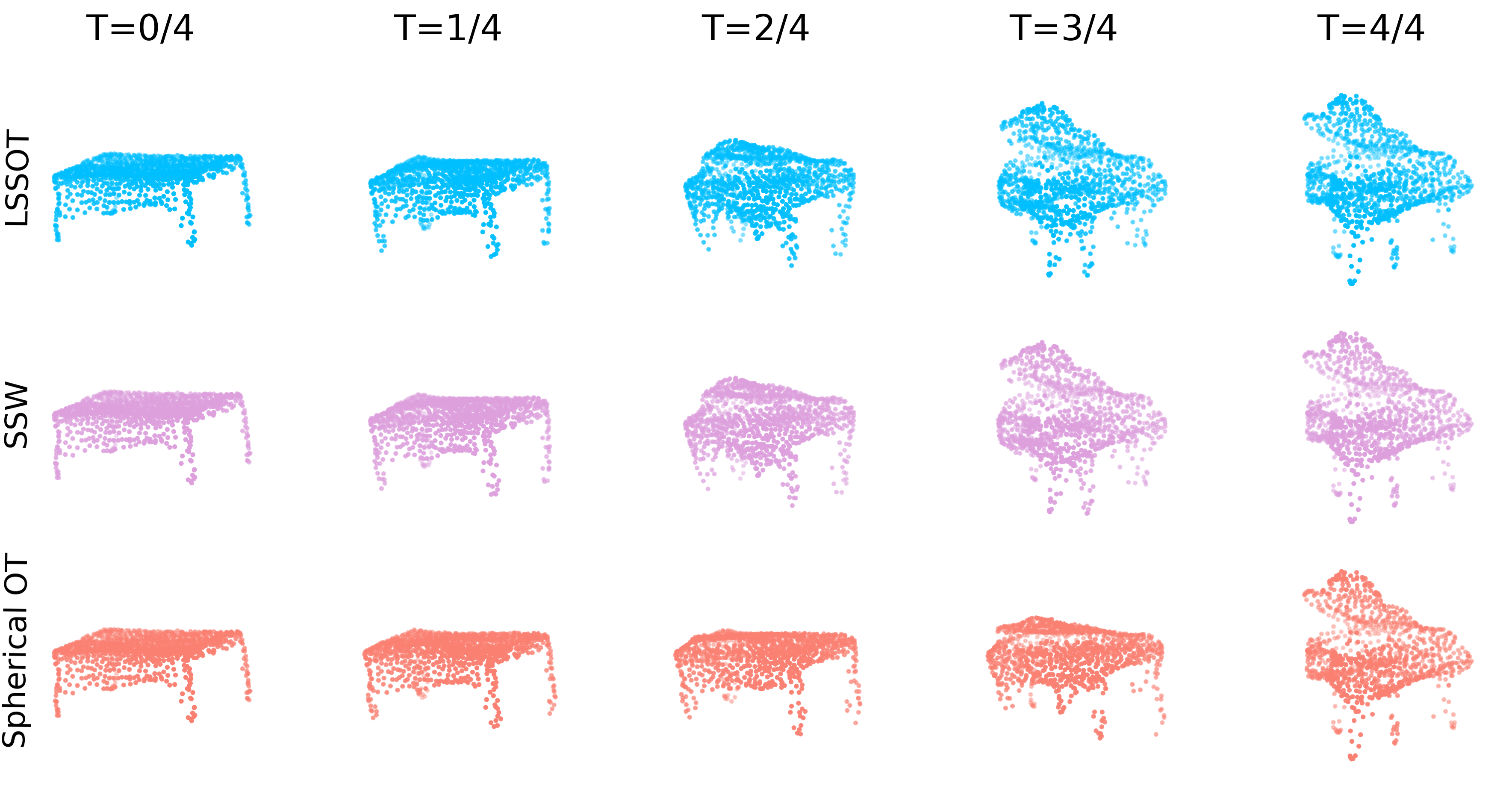}
    \caption{Gradient flow interpolations from a table to a piano.}
    \label{fig:table_piano}
\end{figure}

\begin{figure}[H]
    \centering
    \includegraphics[width=0.7\linewidth]{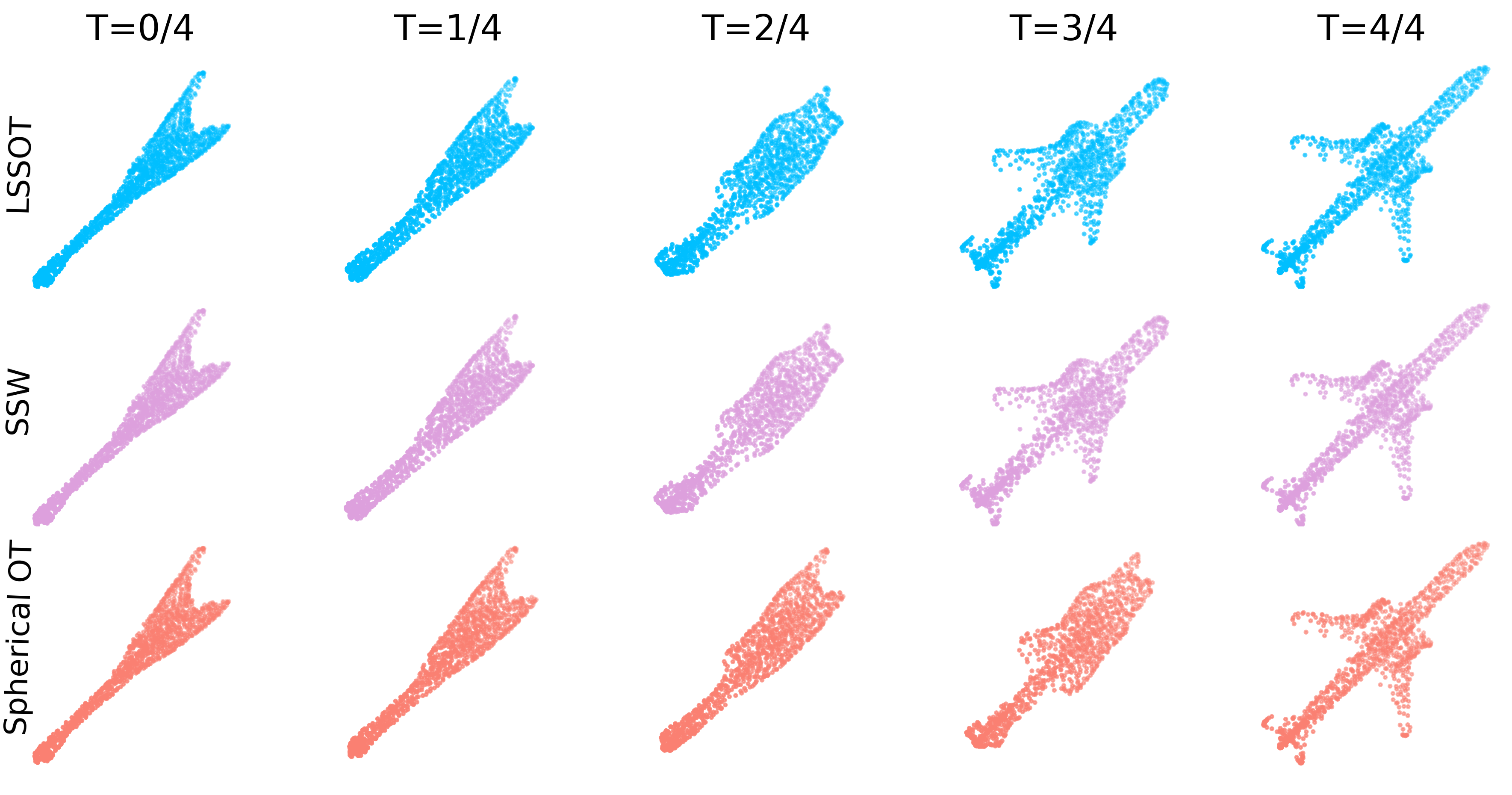}
    \caption{Gradient flow interpolations from a guitar to an airplane.}
    \label{fig:guitar_airplane}
\end{figure}

\subsection{Gradient Flow Mollweide View Visualizations}
\vspace{-.2in}
We provide more visualizations for the gradient flow of spherical distributions with respect to LSSOT metric, SSW, and spherical optimal transport distance (spherical OT). At first, a source distribution and a target distribution are generated, both of which are discrete distributions with uniform mass at all points. Then we perform gradient descent on the mass locations of the source distribution with respect to the above three objectives, driving the source toward the target distribution by minimizing these losses. We employ two methods of gradient descent on the sphere.
See Figure \ref{fig:uni_lssot_rem} to Figure \ref{fig:mixture_ot_coor} for the Mollweide views of the gradient flows between two different pairs of source-target distributions.
\begin{figure}[H]
    \centering
    \vspace{-.3in}
    \includegraphics[width=0.7\linewidth]{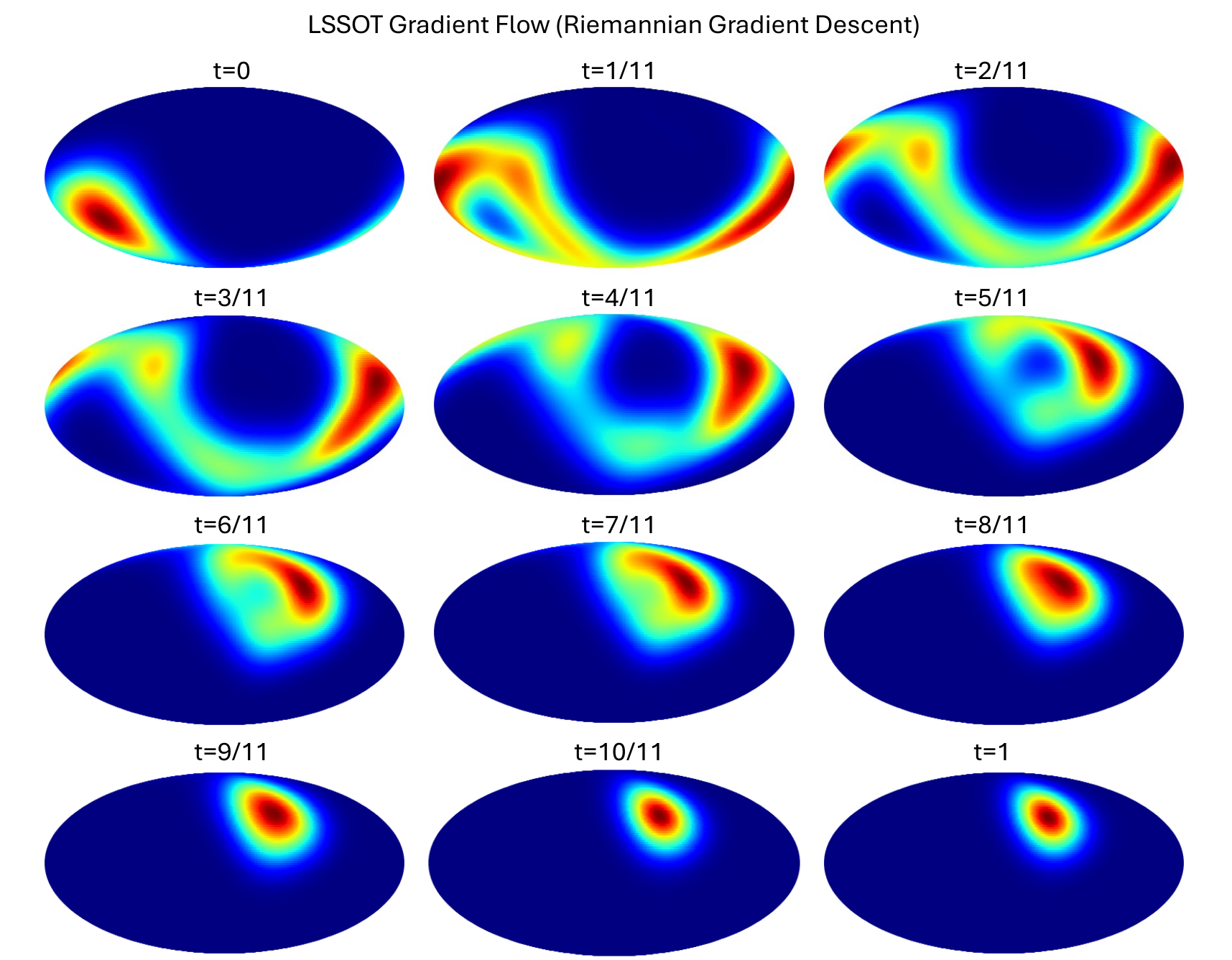}
    \caption{LSSOT gradient flow using Riemannian gradient descent.}
    \label{fig:uni_lssot_rem}
\end{figure}
\begin{figure}[H]
    \centering
    \vspace{-.4in}
    \includegraphics[width=0.7\linewidth]{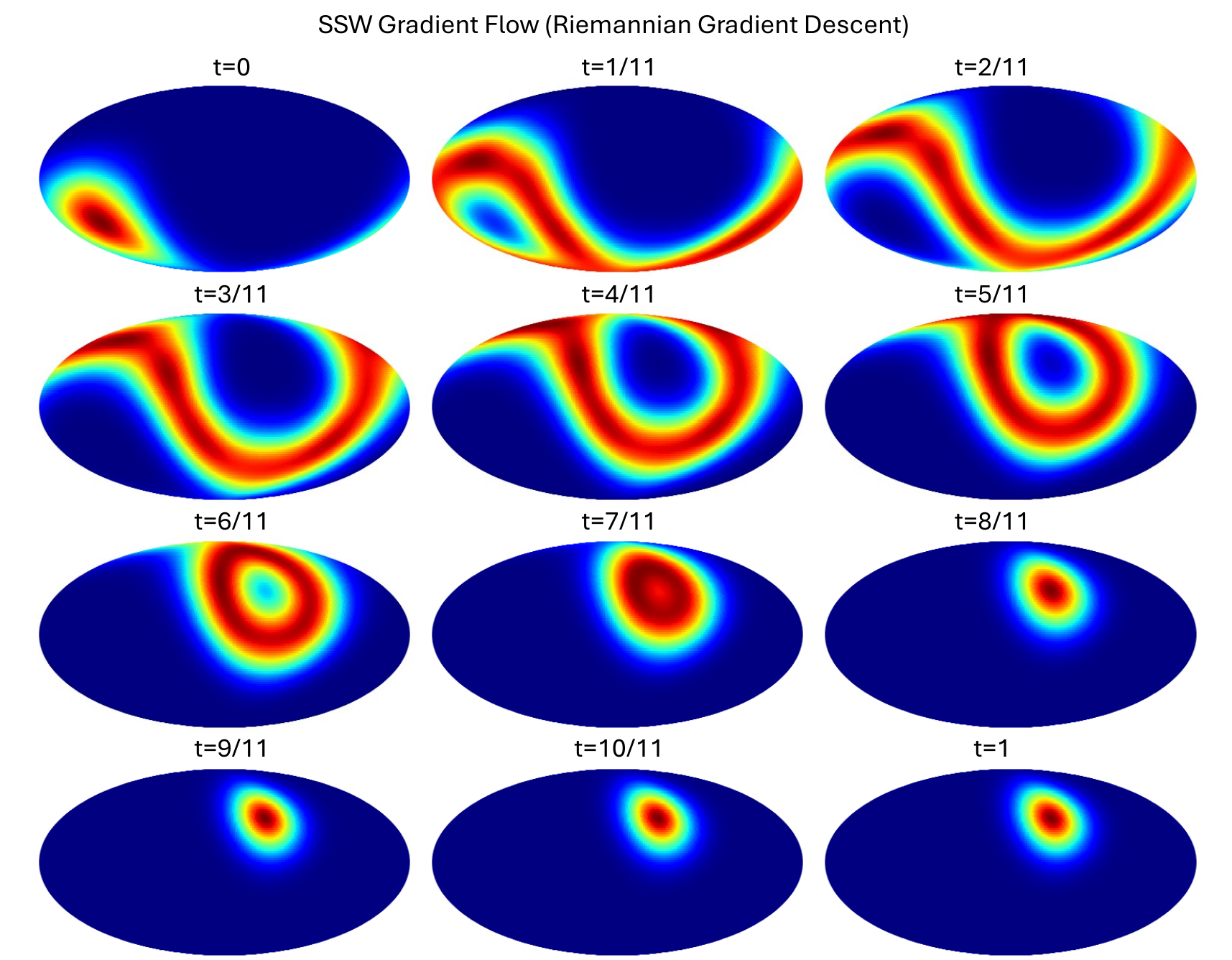}
    \caption{SSW gradient flow using Riemannian gradient descent.}
    \label{fig:uni_ssw_rem}
\end{figure}
\begin{figure}[H]
    \centering
    \includegraphics[width=0.8\linewidth]{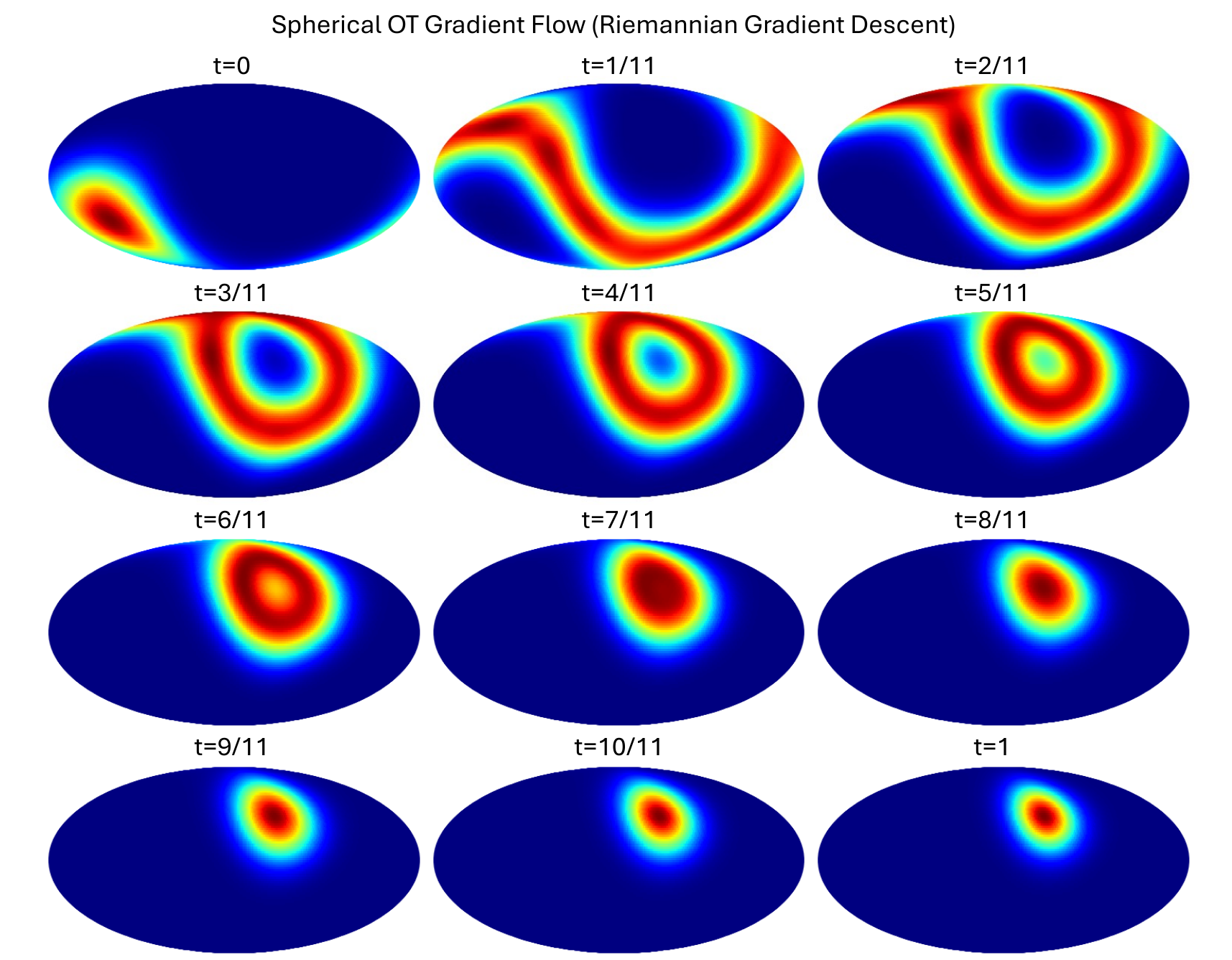}
    \caption{Spherical OT gradient flow using Riemannian gradient descent.}    
    \label{fig:uni_ot_rem}
\end{figure}
\begin{figure}[H]
    \centering
    \vspace{-.5in}
    \includegraphics[width=0.8\linewidth]{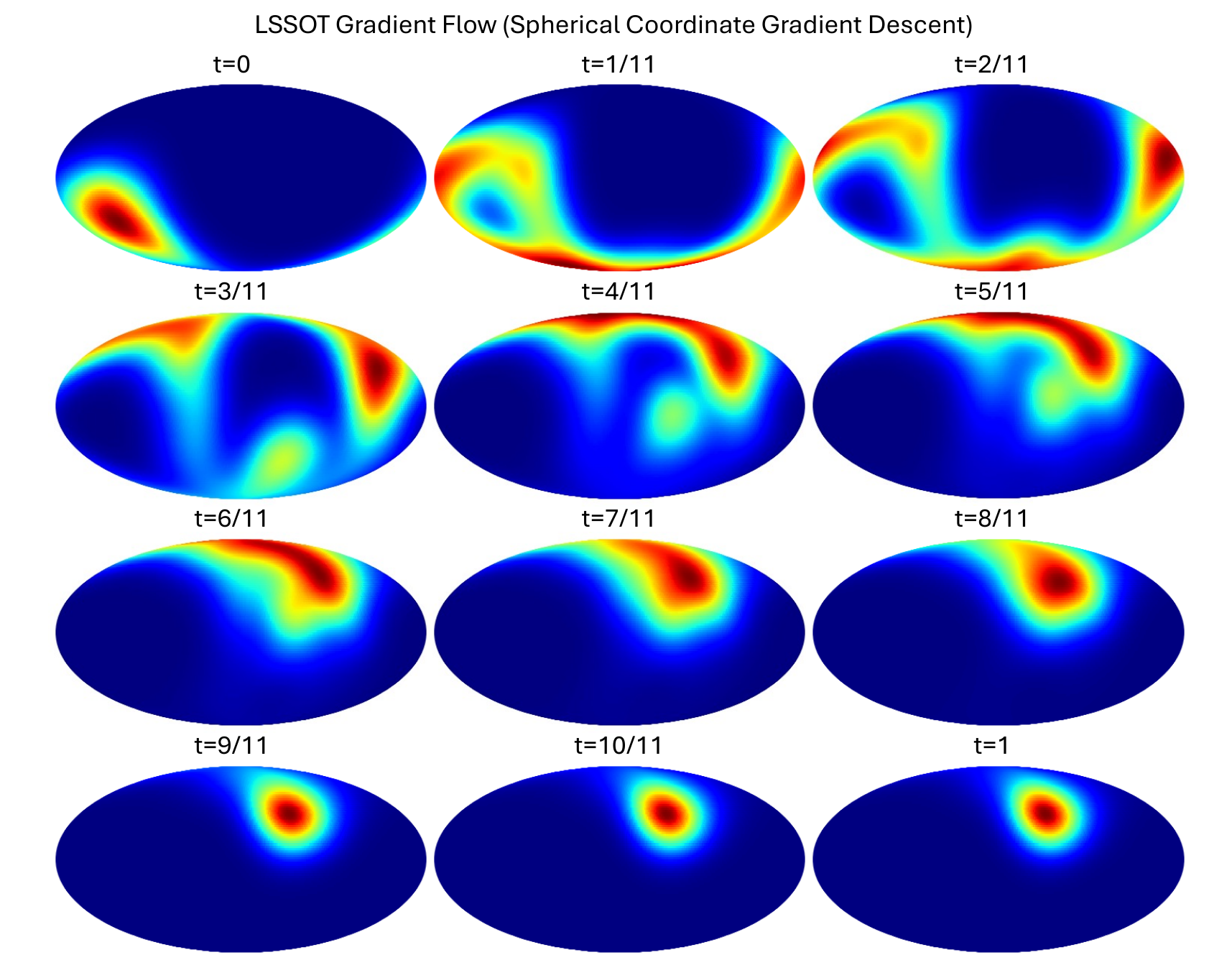}
    \caption{LSSOT gradient flow using spherical coordinate gradient descent.}    
    \label{fig:uni_lssot_coor}
\end{figure}
\begin{figure}[H]
    \centering
    \includegraphics[width=0.8\linewidth]{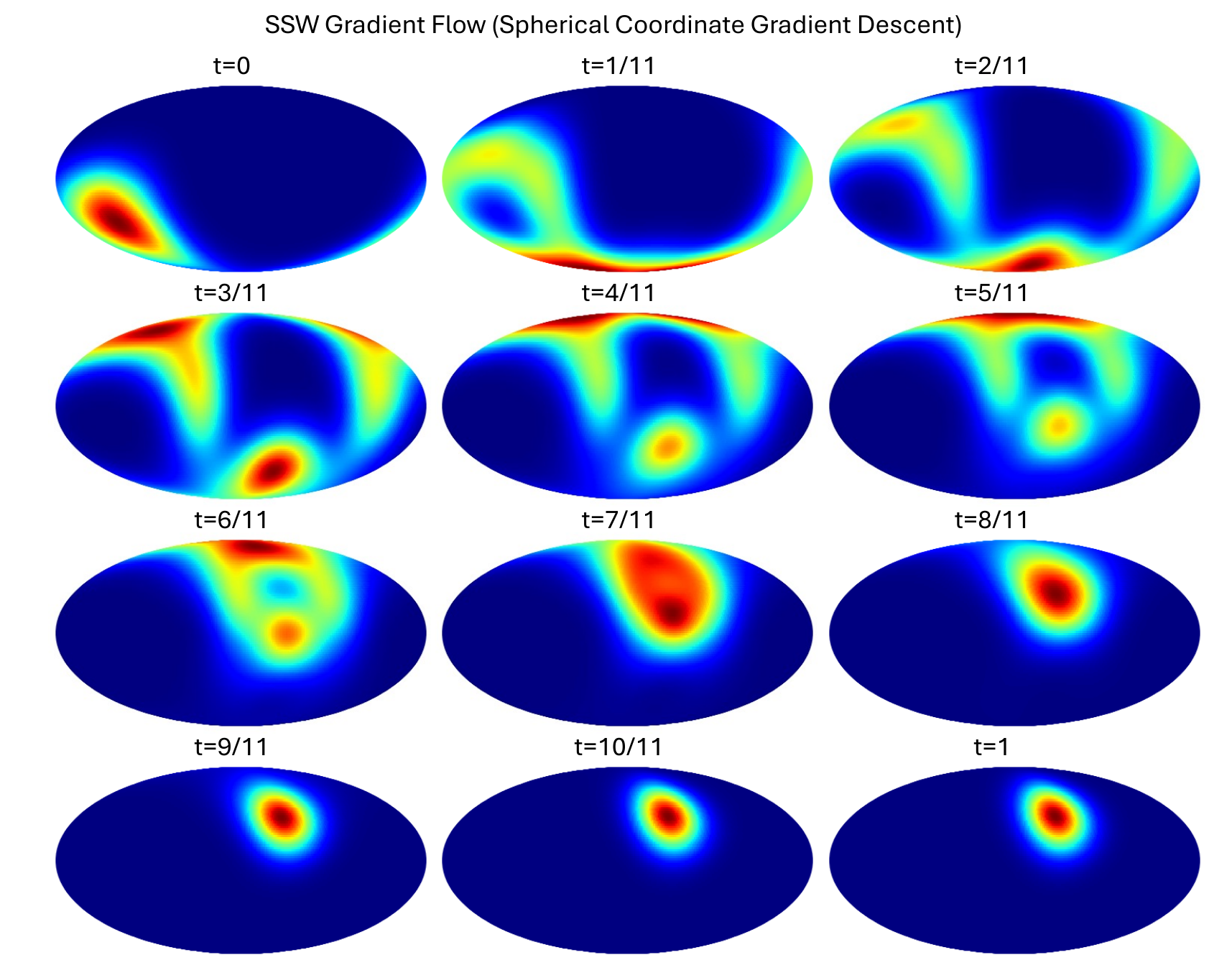}
    \caption{SSW gradient flow using spherical coordinate gradient descent.}
    \label{fig:uni_ssw_coor}
\end{figure}
\begin{figure}[H]
    \centering
    \vspace{-.5in}
    \includegraphics[width=0.8\linewidth]
    {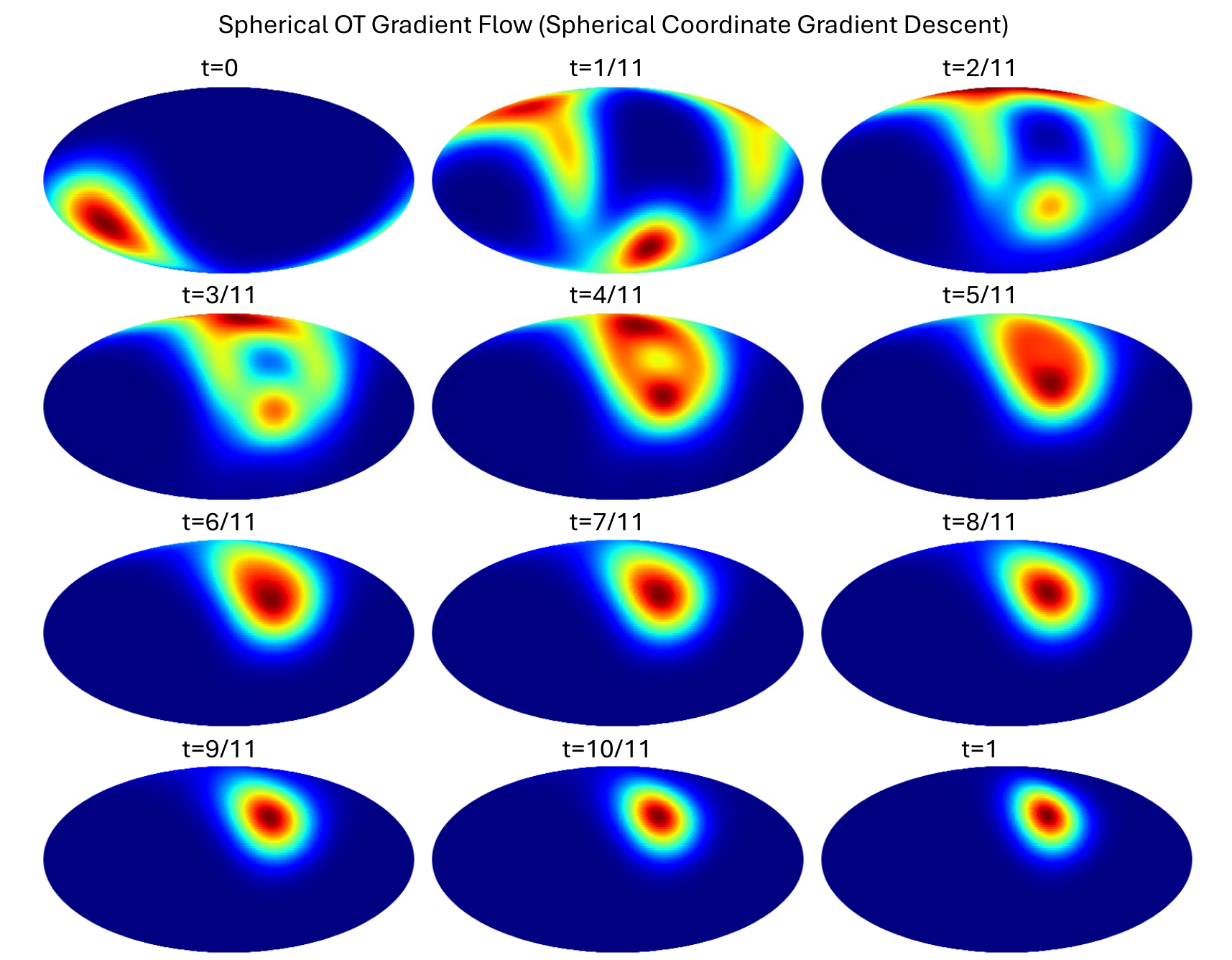}
    \caption{Spherical OT gradient flow using spherical coordinate gradient descent.}
    \label{fig:uni_ot_coor}
\end{figure}
 \vspace{-1cm}
\begin{figure}[H]
    \centering
    \includegraphics[width=0.85\linewidth]{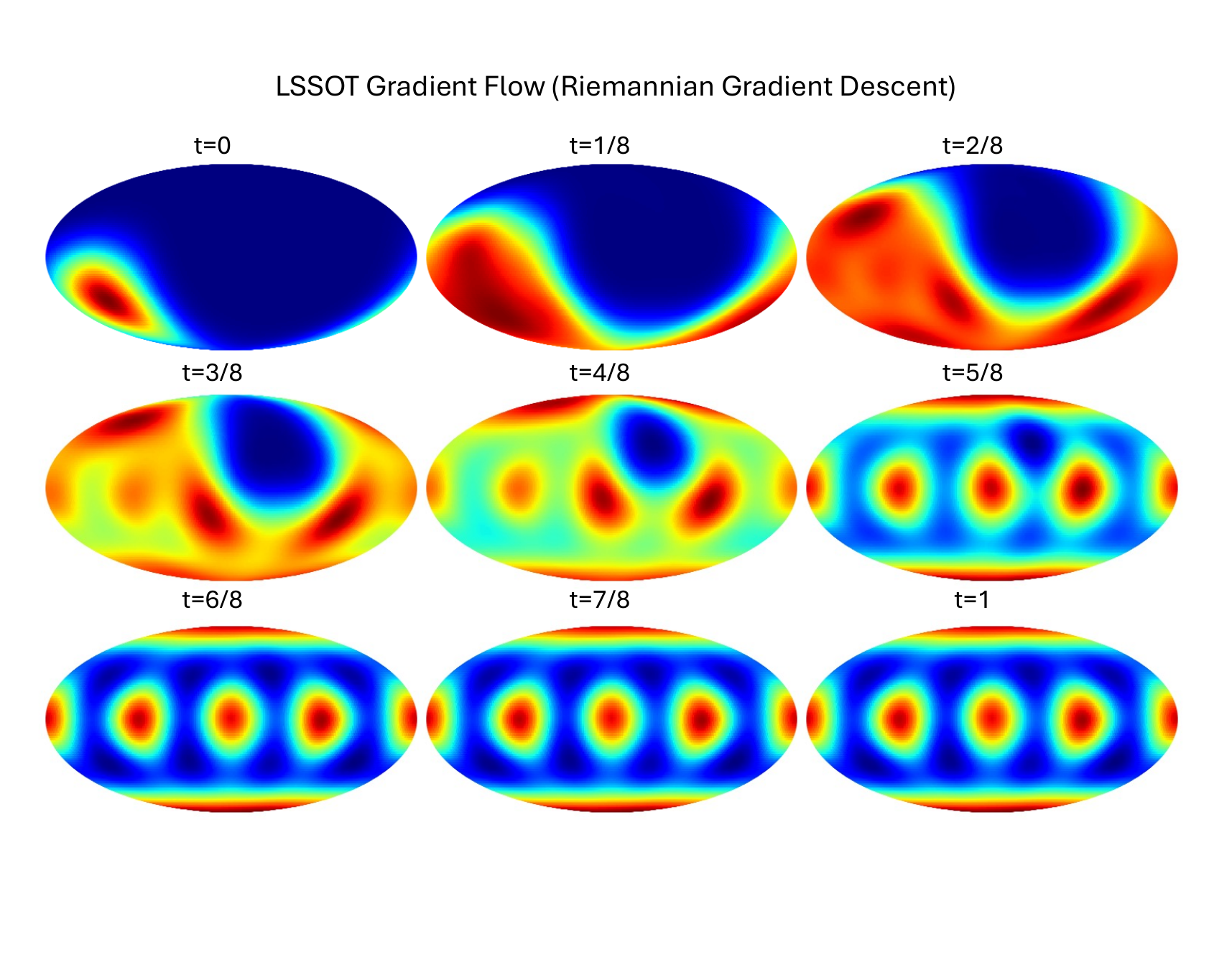}   
    \vspace{-.2in}
    \caption{LSSOT gradient flow using Riemannian gradient descent.}
    \label{fig:mixture_lssot_rem}
\end{figure}
 \vspace{-1cm}
\begin{figure}[H]
    \centering
    \vspace{-.5in}
    \includegraphics[width=0.85\linewidth]{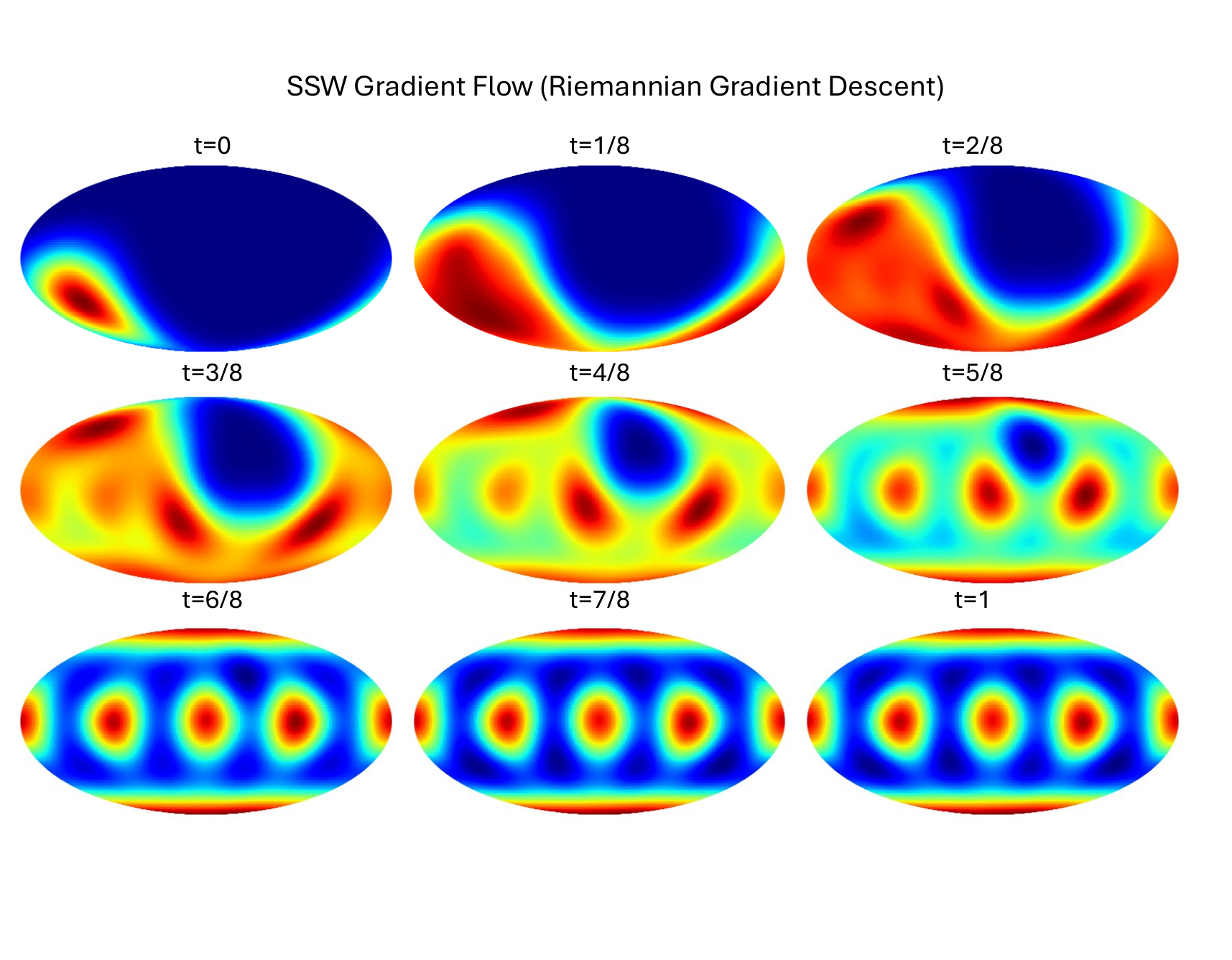}
    \vspace{-.2in}
    \caption{SSW gradient flow using Riemannian gradient descent.}
    \label{fig:mixture_ssw_rem}
\end{figure}
\begin{figure}[H]
    \centering     
    \includegraphics[width=0.85\linewidth]{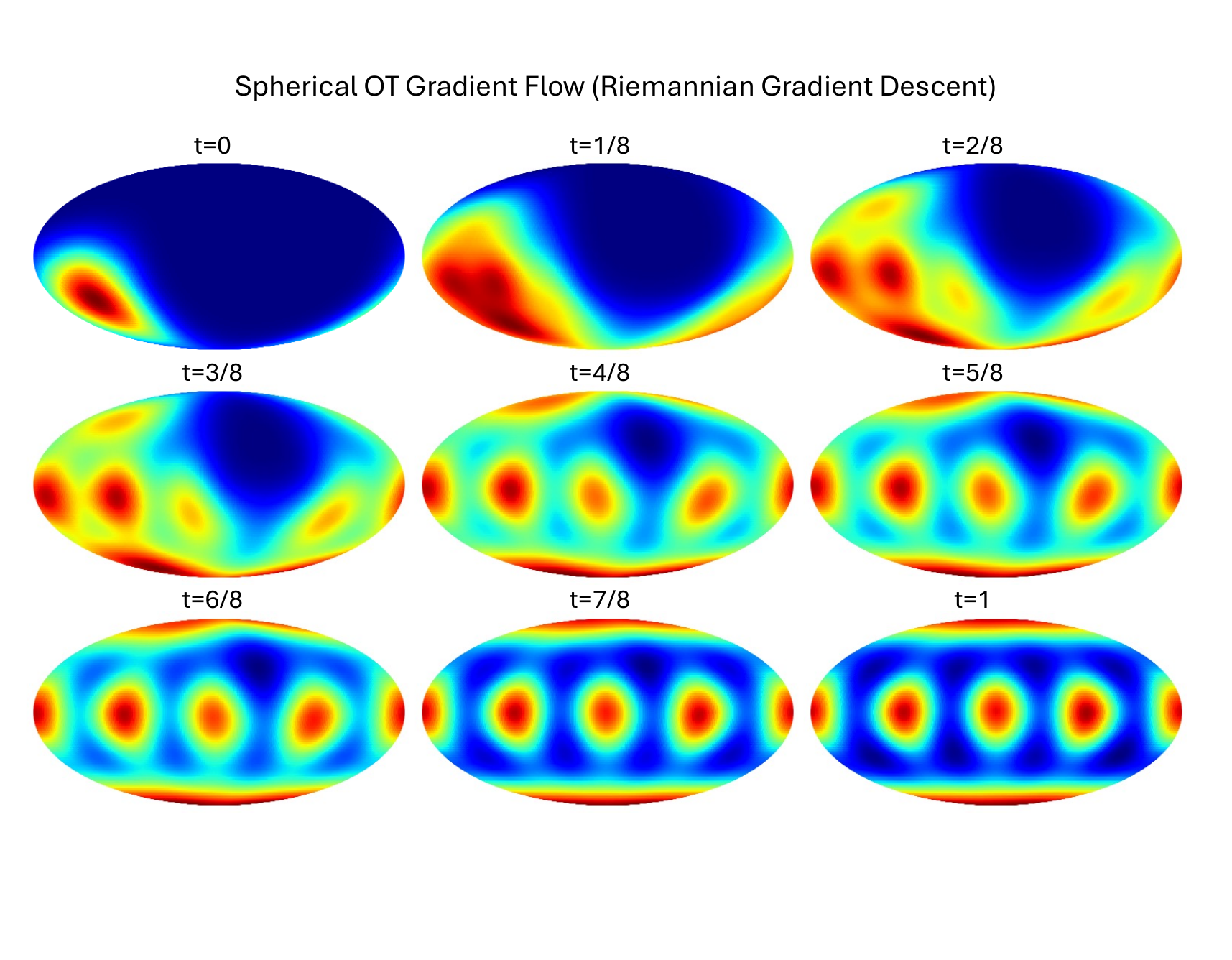}
    \vspace{-.2in}
    \caption{Spherical OT gradient flow using Riemannian gradient descent.}
    \label{fig:mixture_ot_rem}
\end{figure}
 \vspace{-1cm}
\begin{figure}[H]
    \centering
    \vspace{-.5in}
    \includegraphics[width=0.85\linewidth]{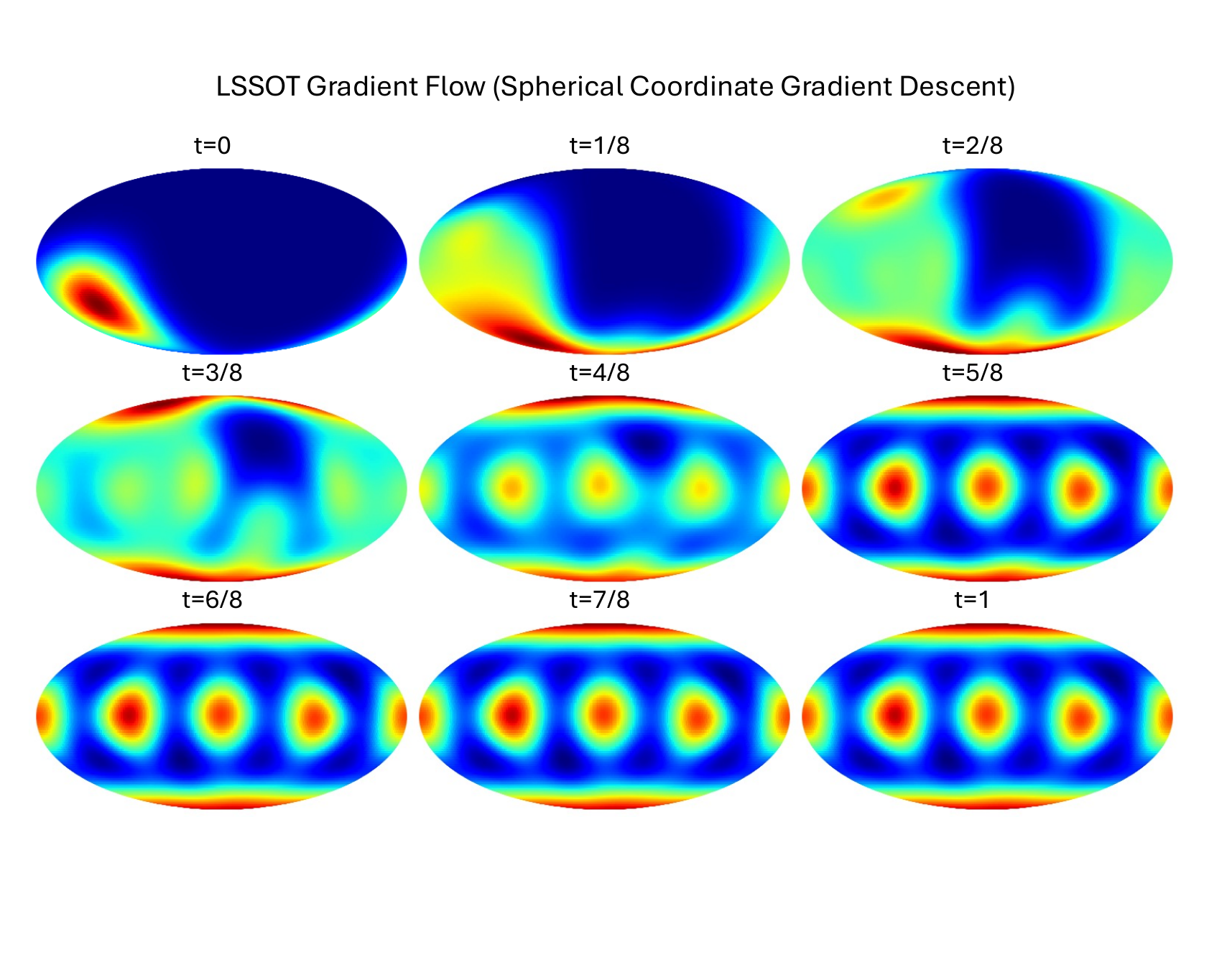}
    \vspace{-.2in}
    \caption{LSSOT gradient flow using spherical coordinate gradient descent.}

    \label{fig:mixture_lssot_coor}
\end{figure}
 \vspace{-1cm}
\begin{figure}[H]
    \centering    
    \includegraphics[width=0.85\linewidth]{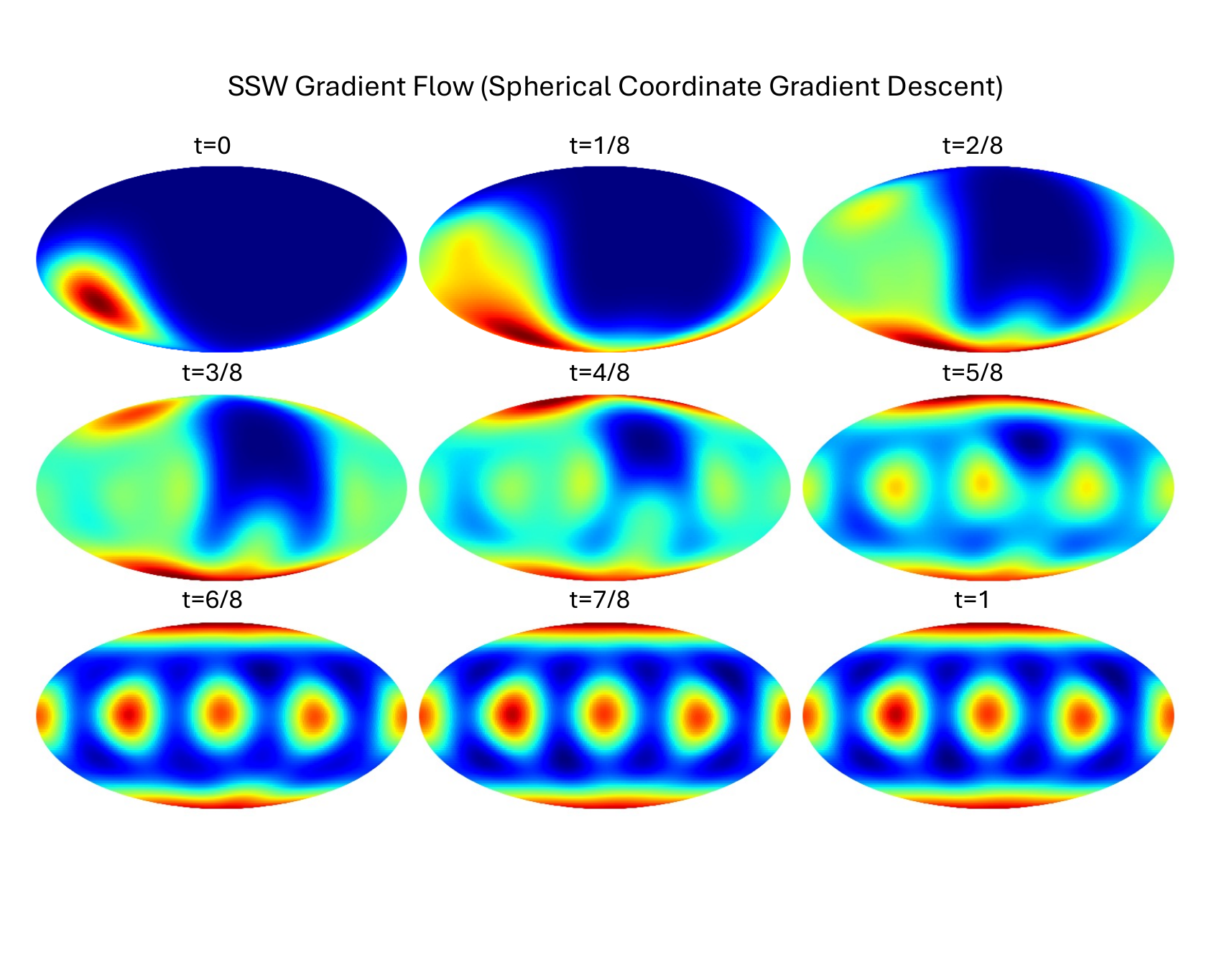}
    \vspace{-.3in}
    \caption{SSW gradient flow using spherical coordinate gradient descent.}
    \label{fig:mixture_ssw_coor}
\end{figure}
\begin{figure}[H]
    \centering
    \vspace{-.3in}
    \includegraphics[width=0.85\linewidth]{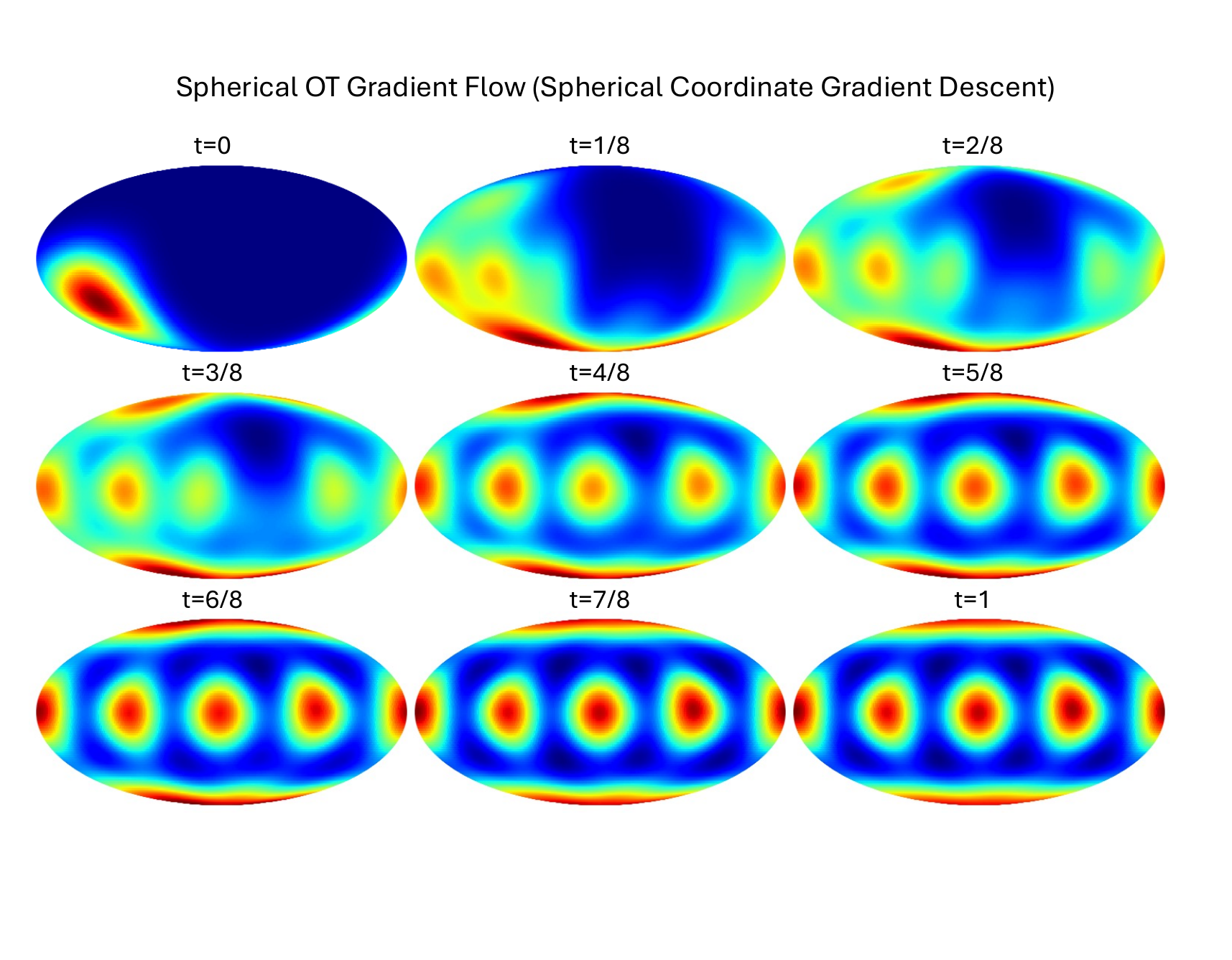}
    \vspace{-.2in}
    \caption{Spherical OT gradient flow using spherical coordinate gradient descent.}
    \label{fig:mixture_ot_coor}
\end{figure}

\end{document}